\def\eventz{\mathcal{Z}}
\let\footnote=\endnote
\def\etab{\boldsymbol{\eta}}
\def \reg{{\textsc{Reg}}}
\def \E {\mathbb{E}}
\def \P {\mathbb{P}}
\def \eps {\varepsilon}
\def\S{S}
\def\eventforc{\mathcal{G}}
\def\ind{\mathbb{I}}
\def\event{\mathcal A}
 \def\tjmax{\widebar{\mathcal{T}_j}}
  \def\tjmaxminus{\widebar{\mathcal{T}}_{j-1}}
  \def\tjmaxzero{\widebar{\mathcal{T}_0}}
 \def\tj{\mathcal{T}_j}
 \def\tdelta{\small{\mathcal U}_{j}}
\DeclarePairedDelimiter{\ceil}{\lceil}{\rceil}
\newcommand{\p}[1]{#1}
\newcommand{\ki}{k}
\renewcommand{\vec}[1]{\bm{#1}}
\def \ZE {Y}
\newcommand{\G}[1]{\text{Gap}_i(#1)}
\def\level{L}
\begin{document}

\RUNTITLE{Learning Product Rankings Robust to Fake Users }

\TITLE{Learning Product Rankings Robust to Fake Users }

\ARTICLEAUTHORS{
\AUTHOR{Negin Golrezaei}
\AFF{MIT Sloan School of Management, Massachusetts Institute of Technology, \texttt{golrezae@mit.edu}}
\AUTHOR{Vahideh Manshadi}
\AFF{Yale School of Management, Yale University,  \texttt{vahideh.manshadi@yale.edu}}
\AUTHOR{Jon Schneider}
\AFF{Google Research, \texttt{jschnei@google.com}}
\AUTHOR{Shreyas Sekar}
\AFF{Department of Management, University of Toronto Scarborough and Rotman School of Management, \texttt{shreyas.sekar@rotman.utoronto.ca}}
}
\RUNAUTHOR{Golrezaei, Manshadi, Schneider, and Sekar}

\ABSTRACT{%
In many online platforms, customers' decisions are substantially influenced by product rankings as most customers only examine a few top-ranked products. Concurrently, such platforms also use the same data corresponding to customers' actions to learn how these products must be ranked or ordered. These interactions in the underlying learning process, however, may incentivize sellers to artificially inflate their position by employing fake users, as exemplified by the emergence of click farms. 
Motivated by such fraudulent behavior, we study the ranking problem of a platform that faces a mixture of real and fake users who are indistinguishable from one another. 
We first show that existing learning algorithms---that are optimal in the absence of fake users---may converge to highly sub-optimal rankings under manipulation by fake users. To overcome this deficiency, we develop efficient learning algorithms under two informational environments: in the first setting, the platform is aware of the number of fake users, and in the second setting, it is agnostic to the number of fake users. For both these environments, we prove that our algorithms converge to the optimal ranking, while being robust to the aforementioned fraudulent behavior; we also present worst-case performance guarantees for our methods, and show that they significantly outperform existing algorithms. At a high level, our work employs several novel approaches to guarantee robustness such as: $(i)$ constructing product-ordering graphs that encode the pairwise relationships between products inferred from the customers' actions; and $(ii)$ implementing multiple levels of learning with a judicious amount of bi-directional cross-learning between levels.   Overall, our results indicate that online platforms can effectively combat fraudulent users without incurring large costs by designing new learning algorithms that guarantee efficient convergence even when the platform is completely oblivious to the number and identity of the fake users. }
\KEYWORDS{product ranking, sequential search, robust learning, fake users, online platforms} 

\maketitle

\section{Introduction}

The abundance of substitutable products on online shopping platforms combined with  consumers' limited attention has resulted in a new form of competition among products:  the race for visibility. 
For example, an Amazon user is typically presented with 
a ranking of thousands of search results---displayed in a sequence of web-pages each containing a few dozen products---even though she is unlikely to go beyond the first page \citep{Milward}. 
Consequently, the success of a product crucially depends on its position in the ranking.
Cognizant of such position effects, online platforms tend to rank more popular products higher (i.e., make them more visible). However, because the popularity of products is a priori unknown, the platform seeks to learn 
them through the same process, i.e., by presenting a ranked assortment of products to users and getting feedback from them. 
Such an online, real-time learning process opens the possibility of manipulations in the race for visibility: ``click farms'' have emerged in which firms employ {\em fake users} who would click on designated products in the hope of boosting their popularity and thus misleading the platform to rank them in top positions \citep{stevens2018sellers}.
It has been reported that some Amazon sellers pay \$10,000 a month to ``black hat'' companies  in order to  be ranked in top positions \citep{Fraud}.
The emergence and prevalence of such fraudulent behavior raises the following key question: {\em can an online platform efficiently learn the optimal product ranking in the presence of fake users?}

We pursue this question in the context of an online platform that presents each arriving customer with a fixed set of products, displayed in a particular order\footnote{This could be in response to a specific search query or tied to a product category.} (a ranking). Customers examine the products sequentially until they identify and click on the desired product, exhibiting position bias as they are more likely to only view products in top ranks. The platform then seeks to learn product preferences from click feedback in order to refine its ranking for future customers. However, it faces the threat of manipulation from fake users. In particular, $F$ out of the $T$ customers who visit the platform constitute `fake users'; such users may strategically click on certain products in order to boost their position or withhold clicks to achieve the opposite effect. Crucially, the platform is not aware of the identity of these fake users and cannot simply ignore their feedback. Therefore, their actions can distort the platform's perception of product popularity, and lead to downstream consequences for real customers who may see undesirable products at top positions. In the face of these challenges, developing learning algorithms that are robust to fake users is clearly a priority. Yet, despite the growing body of work on online learning and product ranking (see the related work section), we lack a comprehensive understanding of how to develop learning algorithms that are resilient to fake users and whether existing algorithms satisfy this criterion.

\smallskip
{\bf Summary of Contributions.} In this work, we follow a regret analysis framework and 
assess the performance of learning algorithms by proving worst-case guarantees
parameterized by the number of fake users $F$, which we refer to as the \textit{fakeness budget}. Given the above model, we show the following results.
\begin{enumerate}[leftmargin=*]
    \item We show that commonly used learning algorithms for product ranking are vulnerable to fake users in that their regret can be $\Omega(T)$, {even when the number of fake users is small.}
    
    \item For the setting where  the \emph{fakeness budget} $F$ is known to the platform, we design a deterministic  online algorithm called {\em Fake-Aware Ranking (FAR)} whose worst-case regret is  $O(\log(T) + F)$. 
    
    \item For a more challenging setting   where the fakeness budget is unknown to the platform, we design a randomized online algorithm called {\em Fake-Oblivious Ranking with
Cross-Learning (FORC)} whose worst-case regret is $O(F \log(T))$. 

\item Finally, we carry out a numerical study using synthetic data that illustrates the superior performance of FORC even though the algorithm is unaware of the fakeness budget.
\end{enumerate}
\medskip

All together,  our results show that {\em an online platform can effectively combat fake users without incurring too much cost by  employing learning algorithms that are robust to such  fraudulent behavior}. In the rest of this introduction,  we provide {a more formal} overview of our setting and the high-level ideas of our algorithms.

We study the ranking problem faced by an online platform by {adapting the well-studied \emph{stochastic cascading bandits} model~\citep{kveton2015cascading,lattimore2018toprank} to a setting  with position effects and fake users.} In particular, the platform displays $n$ products to sequentially arriving customers.  
Each product has a click probability which is a priori unknown to the platform. To capture the behavior of real customers when faced with a ranking of products, we follow the cascade model~\citep{craswell2008experimental} under which a customer
sequentially examines products from the top position, in increasing order of rank. After examining each product, the customer clicks on it with the product's click probability and {conditional on clicking}, she stops. Customers who do not click on the product at a certain position either exit the platform (with a position-specific exit probability) or proceed to examine the product placed in the next position. This process ends when the customer stops, exits, or reaches the end of the ranking. 
Each time a ({real}) customer clicks on a product, the platform earns a fixed reward, which we normalize to one. As for fake users, they may arrive at any point during the time horizon, and strategically click on specific products (or withhold clicks) to fulfill some unknown objective; we make no assumption on their behavior.  Finally, we reiterate that the platform cannot distinguish between real customers and fake users. 

Faced with such a customer population and reward structure, the platform's aim is to learn the ordering of the products that corresponds to customers' preferences, namely, which product has the largest click probability, second largest, and so on. For real customers, such an ordering is the optimal ranking,  as it maximizes the number of customers who click on products, which coincides with the 
platform's reward. Given this objective, the platform measures the 
performance of an online learning algorithm by its  expected cumulative regret, which is the gap between the reward of the optimal ranking and that of the algorithm (see Equation \eqref{eqn_regret_F}  and its related discussion in Section \ref{sec:model}). In the presence of fake users, however, it is not hard to see that the regret of any learning algorithm would depend on the number of fake users, and the exact nature of the fraudulent behavior adopted by such users. Given that fake users' strategies may be arbitrarily sophisticated and hard to quantify, we follow an \emph{adversarial model} and pursue the goal of developing online algorithms with small worst-case regret that depends only on the fakeness budget $F$, i.e., the guarantees hold regardless of the strategy adopted by the fake users.

\smallskip
{\bf Failure of Traditional Learning-To-Rank Algorithms.} 
We remark that even without fake users, the learning problem that we study is challenging as it deviates from a standard multi-armed bandit setting because of its combinatorial nature: each ranking can be viewed as one arm implying that the number of arms would be exponential in $n$. While prior work (e.g., \cite{kveton2015cascading,lattimore2018toprank}) overcomes this challenge by generalizing the ideas in Upper Confidence Bound (UCB) algorithms, we show that in the  presence of fake users, such UCB-type algorithms could lead to poor performance. Specifically, in Theorem~\ref{thm_ucb_bad}, we prove that generalizations of UCB for the ranking problem have worst-case regret that degrades linearly with the length of the time horizon $T$ (i.e. $\Omega(T)$) even when the number of fake users is only $O(\log^2(T))$.

Theorem~\ref{thm_ucb_bad} also provides insights on 
why traditional algorithms are vulnerable to manipulation even for simple instances with just two products. For the sake of illustration, consider two products (one and two) with product one having a higher click probability, and suppose that real customers only examine the top ranked product before exiting. The optimal ranking for this instance clearly involves placing product one at the top position. Consider fake users who arrive in the early rounds with the intention of promoting product two over product one (e.g., such users could be hired by the sellers of product two). We show that it is possible for these fake users to adopt actions that mislead the learning algorithm to overestimate (underestimate) the reward of the inferior (superior) product.  Consequently, the algorithm would converge to a sub-optimal ranking that places  product two at the top rank; the algorithm is unable to correct its estimates by incorporating more feedback as real customers ignore the lower ranked product. Upon convergence, the same ranking is repeated for future customers, leading to a regret of  $\Omega(T)$. Intuitively, our analysis shows that the lack of robustness of such algorithms stems from two key factors: (1) the reliance of the algorithm on the estimates formed in early rounds, and (2) the sequential nature of customers' search behavior which makes receiving feedback on lower ranked products more difficult.

In light of the aforementioned result, we develop novel online algorithms that learn the optimal ranking despite manipulation by fake users. Based on the platform's knowledge of the fakeness budget, we design two different algorithms: (i) Fake-Aware Ranking (FAR) algorithm for settings where the platform can assess the fakeness budget $F$ (for example, based on customer-level historical data) and (ii) Fake-Oblivious Ranking with Cross-Learning (FORC) algorithm for  settings where the platform is unable to estimate $F$. 

\smallskip
{\bf Fake-Aware Ranking (FAR) Algorithm.} Recall that the platform's learning goal is to determine the optimal ordering of products based on their click probabilities, in the presence of fake users. In order to achieve this, the ranking algorithm that we design (FAR), tracks the pairwise relationships between products using a directed graph that we call the {\em product-ordering  graph}. Formally, the nodes of this graph correspond to the $n$ products, and a directed edge from product $j$ to $i$ indicates that with high probability (based on  customer actions), product $i$ has a larger click probability than product $j$. The key idea behind this method involves leveraging the pairwise product comparisons, and knowledge of the fakeness budget $F$ by enlarging the confidence intervals associated with each product to add edges in a conservative fashion. In particular, the extra width of the confidence intervals is proportional to the fakeness budget $F$, and is carefully chosen to compensate for  any overestimation (or underestimation) caused by the actions of the fake users. 

Moreover, the product-ordering graph plays a crucial role in constructing partially optimal rankings based on incomplete feedback at any given point in time. That is, once we determine that product $i$ has a higher click probability compared to product $j$, product $i$  is ranked ahead of product $j$ in all future rounds irrespective of its exact position. Incorporating partial feedback of this form into the final ranking is essential to guarantee low regret. We analyze the worst-case regret\footnote{We emphasize that our regret bounds are gap-dependent and we assume that the gap between the click probabilities of any two products is bounded.} of the FAR algorithm in Section \ref{sec:FAR} and show it is on the order of $O(\log(T)+F)$; see Theorem \ref{thm_far}.

\smallskip 
{\bf Fake-Oblivious Ranking with Cross-Learning (FORC) Algorithm.} Our central contribution in this work is a novel algorithm for learning product rankings even when the platform is unaware of the fakeness budget $F$, and thus, cannot simply widen the confidence intervals proportionally. Instead FORC builds on the ideas in \cite{lykouris2018stochastic} as well as the FAR algorithm, and uses a multi-level randomized scheme in order to distribute the damage caused by fake users across $\level \triangleq \log_2(T)$ learning levels\footnote{We use $\log_2(\cdot)$ to denote logarithm to the base two and $\log(\cdot)$ for the natural logarithm.} running in parallel. Specifically, each level contains its own product-ordering graph similar to its counterpart in FAR, and the probability of sampling a level follows a geometric distribution, i.e., level $1 \leq \ell \leq \level$ is chosen with probability proportional to $2^{-\ell}$ in each round. Therefore, higher levels are exposed to fewer fake users and accurately infer pairwise product relationships but also learn conservatively due to a lower sampling frequency whereas the opposite is true for lower levels, which incur larger regret. In light of this trade-off, the crucial ingredient that binds the algorithm together and controls regret is the notion of \emph{bi-directional cross-learning} between levels, which we employ as follows:
\begin{enumerate}
    \item Any pairwise product relationship that is inferred at (say) level $\ell$ is immediately transferred to all lower levels ($\ell' < \ell$). Intuitively, since higher levels are exposed to fewer fake users, this downward cross-learning allows us to effectively utilize the accurate relationships inferred at these levels.
    
    \item However, downward cross-learning alone is not sufficient to limit the regret incurred in lower levels because  the accurate edges can be added too late due to the low frequency of sampling higher levels. We therefore complement this via a novel upward cross-learning mechanism, wherein customer feedback collected at level $\ell$ is partially used for product comparisons at levels $\ell' > \ell$---this enables the higher levels to rapidly infer accurate relationships. \end{enumerate}

 In summary, bi-directional  cross-learning between layers allow them to coordinate effectively, leading to   
a worst-case regret of $O(F \log(T))$; see Theorem \ref{thm_forc} in Section \ref{sec:FORC}. Our analysis builds on the ideas used in FAR to ensure that all product relationships inferred at levels $\ell \geq \log_2(F)$ are correct, and thus the regret of those levels can be bounded similarly. For levels lower than $\log_2(F)$, we exploit cross-learning to bound the regret incurred due to the actions of fake users.

\smallskip 
{\bf Numerical Studies.} Finally, in Section \ref{sec:numerics}, we complement our theoretical work with numerical simulations using synthetic data. Our numerical results show the non-robustness of UCB-like algorithms under manipulation, even in real-world inspired settings that are much more general than the simple example in Theorem~\ref{thm_ucb_bad}. Our results further show that FORC outperforms FAR despite its informational disadvantage (with regard to the fakeness budget $F$), highlighting the power of  randomization and multi-level learning to combat fake users (see Figure~\ref{fig:regret_main} and its related discussion).

The rest of the paper is organized as follows. In Section \ref{sec:related_work}, we review the related literature. In Section \ref{sec:model}, we introduce our model and the platform's online ranking problem. Then, we formalize the fake users' strategy space, as well as the measure of regret. Next, we illustrate the fragility  of UCB in Section \ref{sec:nonRob} by presenting a lower bound on its regret. 
Sections \ref{sec:FAR} and \ref{sec:FORC} are devoted to describing and analyzing our two new {algorithms}, respectively, FAR and FORC. 
Section \ref{sec:numerics} presents our complementary numerical studies, and
Section \ref{sec:conclude} concludes the paper. For the sake of brevity, we only include proof ideas 
in the main text. The detailed proofs of all statements are provided in appendices.

\section{Related Work}\label{sec:related_work}

Our work contributes to several streams of research within the online decision-making  literature, which we compare and contrast below.

\textit{Learning with Corrupted Data.}
The problem of designing learning algorithms that are robust to corruption has received significant interest in the last few years \citep{lykouris2018stochastic,jun2018adversarial, gupta2019better, chen2019robust, lykouris2019corruption,lykouris2020bandits}. This line of work was initiated by \cite{lykouris2018stochastic}, who studied a  multi-armed bandit problem with the input sequence of samples being a mixture of stochastic and adversarial (i.e., corrupted) components. {Our treatment of fake users in this paper mirrors the notion of corrupted samples in the works mentioned above. 

Broadly speaking, our model generalizes much of this literature (with the notable exception of~\cite{chen2019robust}) due to the combinatorial nature of the product ranking problem. Although our algorithms build on some of the ideas in \cite{lykouris2018stochastic}, including enlarging the confidence intervals and multi-level learning, the subtle differences between the two models necessitate a fundamentally different approach. First, directly applying the algorithms from~\cite{lykouris2018stochastic} in infeasible in our setting as the exponential number of arms, i.e., possible rankings, would lead to a protracted learning phase. Furthermore, one cannot employ the \emph{Active Arm-Elimination} method in~\cite{lykouris2018stochastic} as the sequential nature of the consumer search model precludes eliminating products. In fact, to obtain good guarantees for this problem, it is important to dynamically maintain a relative ordering of products, which we do using  product-ordering graphs.

Perhaps the most important difference between the two settings stems from the nature of the feedback that the learning algorithm receives, particularly: (a) a fake click on a product at (say) position $j$ also influences the empirical reward on the products in the first $j-1$ positions, and (b) unlike a typical bandit problem, the algorithm cannot control which products it receives feedback on (beyond the first position) since customers' exit position is random. Due to the latter limitation, any learning algorithm for this setting could take an inordinate amount of time to achieve a course correction after manipulation by fake users. Moreover, this limitation can lead to a lack of coordination between multiple learning levels, which we overcome by having strong cross-learning. 
Finally, we remark that the differences outlined above are also applicable for some of the other works in this literature including~\cite{jun2018adversarial,gupta2019better,lykouris2020bandits}}.

Another related paper in this literature is the work of \cite{chen2019robust}, who study the problem of designing learning algorithms for assortment planning that are robust to corruption. 
Similar to \cite{lykouris2018stochastic}, the authors use the
Active Arm-Elimination technique to eliminate products that are not in the optimal assortment with high probability. As stated earlier, in our setting, we cannot use this technique. Furthermore, unlike our setting that deals with position bias and random feedback counts, learning algorithms for assortment planning  obtain feedback for every product offered in the assortment, which aids their design and analysis.

\textit{Learning under Non-Stationary Environments.} Another line of research that is related to our work pertains to multi-armed bandits under non-stationary environments \citep{besbes2014stochastic, besbes2015non,karnin2016multi,keskin2017chasing, luo2017efficient, cheung2019hedging,licascadingn2019}. In this line of work, pioneered  by \cite{besbes2014stochastic, besbes2015non}, the reward functions  evolve over time, but the total change in the reward function across the time horizon is bounded. Therefore, while the above papers focus on designing learning policies that track a ``moving target", our work and more generally, the literature on learning with corruption deals with a stationary target that can be abruptly but temporarily displaced by fake users.

Within this literature on learning under non-stationary environments, the work that is positioned closest to ours is that of~\cite{keskin2017chasing}, who study a dynamic pricing problem where customer demand evolves in one of two ways: (a) gradual drifts and (b) bursty and big changes. For these two settings, \cite{keskin2017chasing} obtain reget bounds in the order of  $O(T^{2/3}B^{1/3})$ and $O(\sqrt{T}\log(T))$, respectively,  where $B$ is the total variation budget. Arguably, non-stationary environments with bursty, adversarial changes are somewhat analogous to our setting since we can model the actions taken by fake users as changes in the underlying environment. However, one cannot simply adapt the results in that work to design learning algorithms for our ranking problem owing to some key differences, namely: (a) unlike the setting in \cite{keskin2017chasing}, we have the additional challenge of dealing with a combinatorial environment; (b) the results in that paper for bursty changes only hold when the changes are large enough, allowing the algorithm to detect them. Such an assumption does not necessarily hold in our setting with fake users; (c) even if we ignore the requirement of having big changes,  \cite{keskin2017chasing} 
present an algorithm with a $O(\sqrt{T}\log(T))$ regret guarantee. Yet, their work does not yield insights on whether learning algorithms can yield $O(\log(T))$ gap-dependent bounds. We achieve these much-improved guarantees for our setting by leveraging the structural properties of how fake users alter the underlying rewards.

 \textit{Incentive-aware Learning.} Our work is also related to the literature on  incentive-aware learning; see, for example, \cite{amin2013learning,amin2014repeated, kanoria2017dynamic, epasto2018incentive, golrezaei2019incentive, golrezaei2019dynamic}. In this literature, it is assumed that the data (i.e., the samples) are generated by strategic agents and hence, prone to manipulation, i.e., differ from the underlying ground truth. The goal here is to design learning algorithms that incentivize the strategic agents to provide truthful feedback---i.e., to not generate corrupted data. 
More specifically, many of the papers in this literature consider the problem of learning how to set reserve prices in repeated auctions. In this scenario, the data corresponds to bids submitted by strategic bidders, and the auctioneer seeks to incentivize these bidders to submit uncorrupted (truthful) bids in order to learn the optimal reserve prices.
 We note that our work deviates from this line of research as we do  not aim to incentivize the fake users  to generate truthful data in the form of clicks. Instead, our objective is to learn the optimal ranking despite the presence of fake users.  

\textit{Robust Online Decision-making.}
Beyond learning, the problem of designing robust algorithms has been studied in the online 
decision making literatue, particularly in the case of resource allocation problems \citep{mahdian2007allocating, golrezaei2014real,esfandiari2015online, hwang2018online, bradac2019robust}. Similar to our work, these papers study settings where the arrival sequence deviates from a stochastic process. 
They highlight the vulnerability of online algorithms designed for stochastic arrival  and  
develop robust algorithms that effectively take into account the presence of an adversarial or a corrupted component; however, these works do not involve any learning.

 \textit{Product Ranking.} 
 Recently, many papers have designed algorithms for product ranking (or display) that account for the impact of position bias on customer choice  \citep{davis2013assortment,abeliuk2015benefits, aouad2015display, abeliuk2016assortment, gallego2016approximation, lei2018randomized, derakhshan2018product, asadpour2020ranking}. In contrast to our setting, the aforementioned works focus on the offline version of the product ranking problem, where the platform is aware of all the parameters that make up the customers' choice model (e.g., click probabilities). Our work is more closely aligned to the handful of papers in this domain that study the ranking problem in an online learning setting, i.e., the platform's goal is to learn  the parameters of the customers' choice model and optimize its ranking decisions at the same time. In particular, while \cite{kveton2015cascading,lattimore2018toprank,ferreira2019learning} study this problem in purely stochastic settings without fake users, \cite{BW} develop policies for adversarial settings, using Blackwell Approachability \citep{blackwell1956analog},

Closest to our work in terms of the techniques used is \cite{lattimore2018toprank}, which presents a learning algorithm called {\em TopRank} for the product ranking problem. This algorithm
constructs a directed acyclic graph analogous to our product-ordering graph to encode the pairwise
relationships between products and make ranking decisions. However, there is a key difference between our ranking algorithms and TopRank: given equivalent products in the product-ordering graph, our methods prioritize those which have recieved the smallest amount of feedback from customers so far, whereas TopRank places them in a uniformly random order. This subtle change is crucial in bounding the regret of our algorithms as it enables us to rapidly learn about under-sampled products. Moreover, our FORC algorithm also relies on multi-level learning, which is not a feature of TopRank. Finally, while \cite{lattimore2018toprank} consider the product ranking challenge in a purely stochastic setting, our work is the first  to study this problem in a setting with a mixture of stochastic and  adversarial components via the introduction of fake users. As stated earlier, this setting is inspired by the visibility race on online platforms that can motivate sellers to trick (e.g., via fake clicks) the platforms' ranking algorithms to secure better positions in the search results.

 \textit{Sequential Search Models.}
 When it comes to modeling
 customer choice in the presence of position bias, sequential search models are prominently employed. In such models, pioneered by \cite{weitzman1979optimal}, 
  products are examined one by one starting with the first rank.
  One of the most widely used sequential search models---which we also adopt in the current work---is the cascade model, first introduced by \cite{craswell2008experimental}. We note that other works such as \cite{kveton2015cascading,lattimore2018toprank,cao2019sequential,wang1901making} also consider a similar cascade model, albeit without fake users. Furthermore, this model has also been  used  for studying position auctions (e.g.,  \cite{varian2007position, kempe2008cascade, athey2011position, chu2017position}), and dynamic pricing (e.g., \cite{gao2018multi, najafi2019multi}).

\section{Model}
\label{sec:model}

Consider an online platform which displays $n$ products with labels in $[n] = \{1,2,\ldots, n\}$. 
Each product $i\in [n]$  has a {click probability}  of $\mu_i$, measuring its relevance or quality. {Click probabilities are initially unknown to the platform.}
 Without loss of generality, we assume that products are {indexed such that}\footnote{Note that we assume no two products have the same click probability. This assumption ensures the uniqueness of the optimal ranking, which we define later. A similar assumption is common in the multi-armed bandit literature; see, for example, \cite{lai1985asymptotically}.} 
\begin{align}
\label{eq:indexing}
 \mu_1 > \mu_2 > \ldots > \mu_n.   
\end{align}

For each arriving customer, the platform displays these $n$ products in the form of a ranking
 $\pi$ over $n$ positions.  Here, $\pi({\p{j}})=i$ implies that product $i$ is placed in position $\p{j}$, where positions with smaller indices have more visibility.  Similarly, $\pi^{-1}(i)$ denotes the position of product $i$ under ranking $\pi$. We informally refer to product $j$ (respectively position $\p{j}$) as being \emph{better than} product $i$ (respectively position $\p{i}$) when $\mu_j > \mu_i$ (respectively $\p{j} < \p{i}$).

\textbf{Customers' Search Model.} We divide the customers into two categories: \emph{real}  and \emph{fake}. 
{First, we describe the search behavior of a real customer.}
{Facing a ranking $\pi$, we assume that the customer sequentially examines products starting from the product in the top position, going downward. At any stage, if she finds an acceptable product, she stops and clicks on it. Otherwise, she either leaves the platform or proceeds to examine the product in the next position. } 
{Our modeling framework falls into the category of \emph{cascade models}.} 
{Such models} have been extensively studied in the context of online platforms  in a variety of applications {such as position auctions in sponsored search~(e.g., \cite{aggarwal2008sponsored, craswell2008experimental}, and \cite{kempe2008cascade}}), online retail~(e.g., \cite{cao2019sequential} and \cite{ najafi2019multi}), and web search~(e.g., \cite{craswell2008experimental} and \cite{kveton2015cascading}).
{Cascade models provide tractable frameworks to capture the impact of position on customer choice and, particularly, the impact of the externality that products in higher (better) positions impose on those in lower (worse) positions.}

Formally, under our model, a real customer {facing ranking $\pi$} begins by examining the product in the first position, i.e., $\pi({1})$. 
{She finds product $\pi({1})$ acceptable independently with probability $\mu_{\pi({1})}$. In that case, she clicks on it, stops her search, and leaves the platform.} 
{On the other hand, with probability $1 - \mu_{\pi({1})}$ she finds this product unacceptable. In that case, she either stops and leaves the platform (independently with position-dependent exit probability $q_{1}$), or she proceeds to examine the product in the second position repeating the same process.}
Our model extends the original formulation of cascading behavior proposed by~\cite{craswell2008experimental} by adding exit probabilities.
{These position-dependent exit probabilities, i.e., {$\{q_{\p{j}}, \p{j} \in [n-1]\}$}, capture the behavior that}
 customers may exit the platform if they view too many irrelevant products due to {limited} attention spans or fatigue (e.g., see~\cite{cao2019sequential,wang1901making}). 

A fake user, however, does not follow the aforementioned search pattern and we make no particular assumptions on the actions pursued by such a user in any given round. For example, in the case of click farms, fake users are hired to repeatedly click on a specific product in the hope that the platform (oblivious to their existence) would boost the position of that product. More generally, a fake user may click on any of the displayed products---regardless of the ranking and click probabilities---creating a fake click, or she may strategically not click on any product and even exit at an arbitrary position. 

Formally, the above actions can be modeled by means of a framework where all of the fake users are generated by a single entity. The entity is assumed to follow an adaptive policy $P_t$ that maps $\mathcal{H}_{t-1}$---the history of both the platform and customers' actions up to round $(t-1)$, formally defined later--- to the fake user's actions in round $t$. The fake user's action may be deterministic or randomized and includes (a probability distribution over) three components:
\begin{enumerate}
    \item whether or not the user in round $t$ is fake,
    \item if so, the identity of at most one product the fake user would click on,
    \item in the absence of a click, the position at which the fake user exits (if the fake user clicks on a product, they exit at the corresponding position to mimic real customers).
\end{enumerate}

We assume that the fake user can influence the outcome of at most $F$ rounds, which we denote as the \emph{fakeness budget}. We use $\mathcal{P}$ to denote the family of feasible policies that respect the fake entity's budget, which includes randomized policies. The generality of this framework enables us to encompass different types of fake users including those who employ sophisticated strategies; see Theorem~\ref{thm_ucb_bad} for a specific example. Our overall goal is to develop learning algorithms that are robust to any arbitrary, and unknown policy adopted by such an entity. In the rest of this work, we abuse terminology and use the term fake user to refer to both the individual customers in specific rounds as well as the overall entity that controls the adaptive policy.

\textbf{Platform's Information and Objective.} We assume that {a priori} the platform is not aware of the click probabilities {$\{\mu_i, i \in [n]\}$} and {exit} probabilities  {$\{q_{\p{j}}, \p{j} \in [n-1]\}$}. In each round, the platform only observes {(a) which (if any) product the customer clicks on, and (b) where she exits the platform in case she does not click on any product. Note that under our model, if the customer clicks on the product in position $\p{j}$, then she exits at the same position.}
 The latter is a mild assumption as many  platforms display products on devices with a small screen. On each page of such a device, only a few products, if not one, are displayed. {Thus, a customer needs to take some action observable to the platform (for example, swiping) to browse more products.}

The platform's objective is to find a ranking $\pi^{\star}$ that maximizes \emph{customer engagement}, which is the click probability of real customers.
Under the described customer search model, the optimal ranking (for real customers) is simply ordering the products in decreasing order of their {click probabilities} {$\{\mu_i, i \in [n]\}$}. That is, the product with the highest {click probability}  should be placed in position one, and the product with the second-highest  {click probability}  should be placed in position two, and so on. Since {we indexed} the products  in decreasing order of their click probabilities, the optimal ranking {is} characterized by {$\pi^{\star}(\p{i}) = i$} for all $i\in [n]$. Thus, the platform's goal is to learn this optimal ranking by observing customers' {clicks and exit positions} {without knowing whether the customer is real or fake}.

\textbf{Online Ranking Problem.} We study the platform's ranking problem in an online setting with $T$ rounds, where in each round $t\in [T]$,  the platform displays the products to an arriving customer according to ranking $\pi_t$. 
The main challenge here stems from the platform's lack of awareness regarding whether the {customer at round $t$ is real or fake.}

Formally, we use $\mathcal{C}_{r,t}(\pi) \in \{0,1\}$ to denote the {click} action of a real customer who arrives in round $t$  when presented with a ranking $\pi$. 
More specifically, $\mathcal{C}_{r,t}(\pi) = 1$ if the customer in round $t$ is real and clicks on  a product under ranking $\pi$; otherwise $\mathcal{C}_{r,t}(\pi)=0$. Analogously, we  define  $ \mathcal{C}_{f,t}(\pi)$ to indicate a click from  a fake customer. Observe that while $\mathcal{C}_{r,t}(\pi)$ is a random variable drawn from {the} distribution specified by the real customers' search model {(which is initially unknown to the platform)}, its value also depends on the action $P_t$ adopted by the fake user at this round, e.g., whether or not the user is fake. For convenience, we use $c_{t} \in (\{\emptyset\} \cup [n]) \times [n]$ to represent the (real or fake) user's actions in round $t$, comprising of the product clicked on and the exit position, where $c_t=(\emptyset,j)$ implies that no product was clicked on and the customer exits after position $j$.

The platform earns a unit of reward in round $t$ if $\mathcal{C}_{r,t}(\pi_t) =1$---i.e., \emph{only when a real customer engages with the platform.} %

The performance of any algorithm is then measured by the expected cumulative regret (or more precisely pseudo-regret, e.g., see~\cite{bubeck2015convex}), which is the gap between the reward obtained by selecting the (unknown) optimal ranking during all rounds and that of the given ranking algorithm. Let $\mathcal{H}_t = \{(\pi_1, f_1, c_{1}), (\pi_2, f_2, c_2), \ldots, (\pi_t, f_t, c_t)\}$ denote the entire history up to round $t$, where $f_{t'}=1$ implies the presence of a fake user in round $t'\le t$.  Define $\mathcal{H}^o_t = \{(\pi_1, c_{1}), (\pi_2, c_2), \ldots, (\pi_t, c_t)\}$ as the sub-history observable by the platform. The regret of an algorithm $(\pi_t: \mathcal{H}^o_{t-1} \rightarrow \Pi)_{t=1}^T$ is then defined as:\footnote{When it is clear from the context, we abuse notation and write $\pi_t$ to denote both the ranking in round $t$ and the overall ranking algorithm. Further $\Pi$ represents the set of all possible rankings.}

\begin{equation}
\reg_{T} = \sup_{\mathbf{P}\in\mathcal{P}}\left\{\E_{\mathcal{H}_T(\pi^{\star})}\Big[\sum_{t=1}^T \mathcal{C}_{r,t}(\pi^\star)\Big]- \E_{\mathcal{H}_T(\pi_t)}\Big[\sum_{t=1}^T \mathcal{C}_{r,t}(\pi_t) \Big] \right\},
\label{eqn_regret_F}
\end{equation}
where the expectations are taken over $\mathcal{H}_T(\pi^{\star})$ and $\mathcal{H}_T(\pi_t)$---the random histories of the algorithm and customer actions when the underlying algorithms select rankings $\pi^{\star}$ and $(\pi_t)_{t=1}^T$,  respectively. Note that the realization of the history  also depends on any randomness in the policies adopted by both the platform and the fake user, as well as the randomness stemming from the real customers' clicks in any given round.  Further, since the adaptive policy employed by the fake users is unknown to the platform, we seek to achieve minimal regret over all possible $\bm{P} = (P_t)_{t=1}^T$ belonging to a family $\mathcal{P}$ of feasible policies as discussed earlier.

\begin{remark}[Discussion on our definition of regret.] We highlight that  our regret notion as well as  the platform objective  only takes into account the click actions from real customers. This  is motivated by the fact that a platform does not derive any tangible benefit from the clicks generated by fake users. At the same time, although the actions of fake users do not directly alter the regret in the same round, they may significantly hurt the utility derived by a platform in future rounds by causing the platform to incorrectly estimate the click probabilities (rewards) of various products. One could alternatively consider another notion of regret under which the fake clicks are also counted. Such a notion of regret is considered in some previous works, e.g., ~\cite{lykouris2018stochastic}. Although including fake clicks does not make sense for the application that we are interested in, these two notions are actually very close to each other in that their difference cannot exceed the number of fake users, denoted by $F$. See Appendix~\ref{app:regret} for  the relationship between our definition of regret and one where fake clicks are included under any fixed policy $\bm{P}$.
 \end{remark}

\textbf{Information Settings.} We design learning algorithms under two informational environments, based on whether or not the platform can estimate the fakeness budget $F$. 
In the first setting, the platform is aware of the fakeness budget $F$, whereas
in the second setting, the  platform does not have this knowledge. The former scenario is motivated by 
 the fact that online platforms may be able to estimate the aggregate number or fraction of fake users from historical data, even if individual users cannot be verified. For this setting, {in Section~\ref{sec:FAR},} we design a learning algorithm, which we refer to as the \emph{Fake-Aware Ranking} (FAR) Algorithm. Naturally, its regret depends on the fakeness budget\footnote{In the extreme case where $F=T$, one would expect the regret to be equal to $\Omega(T)$ since no learning is possible.} $F$.
 
 We derive gap-dependent bounds for the regret of FAR in terms of the gaps between the products' click probabilities defined below:
\begin{equation}
    \Delta_{j,i} = \mu_j - \mu_i \quad \forall i,j \in [n].
\end{equation} 
 
 In the case where all gaps are bounded below by a constant, we show that the expected regret of FAR is $O(F + \log(T))$ (Theorem \ref{thm:far}).
 For the more challenging setting where we do not know $F$, we present a learning algorithm in Section~\ref{sec:FORC}, which we term \emph{Fake-Oblivious Ranking with Cross-Learning} (FORC).  We show that the expected regret of FORC is given by $O\left(F\log(T)\right)$; see Theorem \ref{thm_forc} for the gap-dependent regret bound of FORC.

Before presenting our algorithms, in the next section, we show that ignoring the existence of fake users and simply running well-established algorithms such as the Upper Confidence Bound (UCB) Algorithm  can lead to linear regret even when the fakeness budget $F$ is  small.

\section{Non-Robustness of UCB to Fake Users}
\label{sec:nonRob}

In this section, we argue that existing stochastic bandit algorithms for product ranking---such as \texttt{CascadeUCB}~\citep{kveton2015cascading} and its variants---are not naturally robust to fake users. In particular, we will show that there are instances of our problem with $F = O(\log^2 (T))$ fake users, for which these algorithms never converge to the optimal ranking, and incur regret that is \textit{linear} in $T$ (whereas the regret bounds for FAR and FORC are sublinear and given by Theorems~\ref{thm_far} and~\ref{thm_forc} respectively).

 To show this result, we construct a simple two product instance of our ranking problem which ignores its combinatorial aspect so that computing a ranking becomes equivalent to selecting a single product for the top position. The problem of learning product rankings then reduces to a simple multi-armed bandit problem, and therefore, any combinatorial generalization of UCB such as \texttt{CascadeUCB}~\citep{kveton2015cascading}  would also reduce to the standard UCB algorithm (e.g., see~\cite{bubeck2015convex}). Leveraging this equivalence, the following theorem  demonstrates that the UCB algorithm is not robust to fake clicks. 
\begin{theorem}[Non-Robustness of UCB to Fake Users]
There exists an instance of the product ranking problem with two products and  sequence of $F = O(\log^2(T))$ fake users which causes the UCB algorithm to incur $\Omega(T)$ regret. 
\label{thm_ucb_bad}
\end{theorem}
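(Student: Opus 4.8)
The plan is to exhibit a single two–product instance together with one explicit adaptive fake–user policy under which UCB keeps the inferior product on top for all but a logarithmic number of rounds. First I would strip away the combinatorial structure: take $n=2$ with click probabilities $\mu_1 = 3/4$, $\mu_2 = 1/4$ (any constant gap works) and position–one exit probability $q_1 = 1$, so that every real customer who does not click on the top product leaves at once. Then the platform only ever sees feedback on whichever product sits in position one, the ranking problem is literally a two–armed Bernoulli bandit (arm $i$ returns $1$ with probability $\mu_i$), and, as the text already notes, \texttt{CascadeUCB} degenerates to ordinary UCB; I will write its index as $\hat\mu_i(t) + \sqrt{\alpha \ln t / N_i(t)}$ for the relevant constant $\alpha$ (the argument is insensitive to the exact constant and tie–breaking rule). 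Here ``optimal'' means product $1$ on top, and each round with product $2$ on top costs $\Delta_{1,2} = \mu_1 - \mu_2 = \Theta(1)$ in the regret, so it suffices to pin product $2$ to the top in $T - o(T)$ rounds.

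Next I would specify the fake entity's policy $\bm P$, which is adaptive but merely reactive to UCB's (deterministic) choice. In every round where UCB puts product $1$ on top, declare the user fake and have her examine product $1$ without clicking: this records a $0$ for arm $1$, so $\hat\mu_1(t) = 0$ whenever $N_1(t) \ge 1$. During the first $K = \Theta(\log T)$ rounds where UCB puts product $2$ on top, declare the user fake and have her click on product $2$ (a ``bootstrap'' that prevents an unlucky early run of real non–clicks from dragging $\hat\mu_2$ to $0$); in all later product–$2$–on–top rounds, do nothing and let the real customer act per the cascade model. A Chernoff bound plus a union bound over $N_2(t) \in [K, T]$ then gives an event $\eventforc$ of probability at least $1 - 1/T$ on which $\hat\mu_2(t) \ge \mu_2/4$ for every round $t$.

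On $\eventforc$, UCB puts product $1$ on top at round $t$ only if its index is at least product $2$'s, i.e. only if
\[ \sqrt{\alpha \ln t / N_1(t)} \;=\; \hat\mu_1(t) + \sqrt{\alpha \ln t / N_1(t)} \;\ge\; \hat\mu_2(t) + \sqrt{\alpha \ln t / N_2(t)} \;\ge\; \mu_2/4, \]
hence only if $N_1(t) \le 16\alpha \ln t / \mu_2^2 \le 16 \alpha \ln T / \mu_2^2$. Since $N_1$ increases by one at each such round, product $1$ is placed on top at most $16\alpha \ln T/\mu_2^2 + 2 = O(\log T)$ times in total, each consuming one fake user; with the $K = O(\log T)$ bootstrap users this is a budget of $O(\log T)$, comfortably inside the claimed $F = O(\log^2 T)$ (the slack also absorbs the complement of $\eventforc$, on which the entity simply stops corrupting once its budget is spent). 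Consequently $\E[\sum_t \mathcal{C}_{r,t}(\pi^\star)] \ge \mu_1 (T - F)$ while $\E[\sum_t \mathcal{C}_{r,t}(\pi_t)] \le \mu_2 T + \mu_1\cdot O(\log T) + \P(\eventforc^c)\, T$, so $\reg_T \ge (\mu_1 - \mu_2) T - O(\log T) = \Omega(T)$.

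The step I expect to be the main obstacle is the budget bookkeeping: turning ``$N_1(T) = O(\log T)$'' into a statement that holds on a high–probability event for \emph{every} realization of the real clicks and the fixed tie–breaking rule, rather than merely in expectation. Concretely this means (i) fixing a precise form of UCB's confidence width and tie–breaking so the inequality $N_1(t) \le 16\alpha\ln t/\mu_2^2$ is genuinely forced whenever arm $1$ is pulled, and (ii) controlling the rare paths where $\hat\mu_2$ dips — which the bootstrap phase and the $1/T$ failure probability already handle, since the latter contributes only $O(1)$ to the regret. The remaining calculations (the Chernoff/union bound for $\eventforc$ and the final regret arithmetic) are routine.
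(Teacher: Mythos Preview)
Your proposal is correct and yields the claimed $\Omega(T)$ regret, but it takes a genuinely different route from the paper's proof. The paper uses an \emph{oblivious} fake-user sequence with $F = 4\log^2(T)$ arranged in two fixed phases: during the first $2\log^2(T)$ rounds the fake user never clicks (driving both empirical means to zero with $\log^2(T)$ samples each), and during the next $2\log^2(T)$ rounds the fake user clicks only when product~$2$ is on top; after that, product~$1$'s UCB index is permanently bounded by $1/\sqrt{\log T}$ while product~$2$'s stays near $1/2$, and a concentration argument on product~$2$'s subsequent real clicks finishes. Your construction is instead \emph{adaptive}: the fake user reacts to UCB's choice each round, faking a non-click whenever product~$1$ is on top and faking clicks only during a short $K=\Theta(\log T)$ bootstrap on product~$2$. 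This buys you a strictly tighter budget $F = O(\log T)$---since UCB can select product~$1$ at most $O(\log T)$ times while $\hat\mu_1 = 0$ and $\hat\mu_2 \ge \mu_2/4$---at the cost of the budget-bookkeeping circularity you correctly flag (one must close the loop that on the good event $\eventforc$ the budget is never exhausted, so every product-$1$ round really is faked; this induction does go through). The paper's approach is cleaner in that the fakeness budget is spent deterministically up front with no such circularity, and it demonstrates that even a non-adaptive adversary suffices; your approach is sharper on the budget and makes more transparent the mechanism---UCB cannot recover from an \emph{underestimated} arm whose mean the adversary keeps pinned at zero.
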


\textbf{(Proof Sketch)}. In order to show Theorem \ref{thm_ucb_bad}, we consider an instance with two products, where $\mu_1=1$ and $\mu_2=\frac{1}{2}$. In this instance,  the exit probability is given by $q_1=1$; that is, real customers stop their search after examining the product in the first position, and never examine the product in the second position. Since the number of products is two, there are only two possible rankings, namely $(1,2)$ and $(2,1)$, where in ranking $(i,j)$  product $i$ is placed in the first position and product $j$ is placed in the second position. Therefore, selecting a ranking $(i,j)$ is equivalent to picking a product $i$ for the top position. Given this reasoning, it is not hard to see that generalizations of UCB for the ranking problem such as \texttt{CascadeUCB}~\citep{kveton2015cascading} or \texttt{PBM-UCB}~\citep{lagreeVC16} would simply reduce to the UCB algorithm for a standard multi-armed bandit problem. Recall that this algorithm maintains an estimate of the reward (click probability) for each arm (product) and selects the arm in each round with the highest upper confidence bound on its empirical reward. 

Following this simplification, we now construct a strategy for the fake users under which the UCB algorithm does not learn the optimal ranking $(1,2)$, i.e., it does not select product one for the top position. To comprehend the motivation behind the this strategy, it is important to understand that UCB is significantly more robust to \textit{overestimations} of rewards than \textit{underestimations}. For example, while it might be tempting to implement a strategy which simply uses the fake users to click on the sub-optimal product (raising UCB's estimate of its mean), this alone does not suffice.  Once the fake clicks subside, UCB will continue to collect samples from this product by placing it in the top position, and quickly readjust its estimate of this product's mean. On the other hand, causing UCB to underestimate a product's mean reward can be devastating: once the upper confidence bound of a product falls below the other product's mean, with high probability, UCB will never select this product again! For this reason, instead of using the fake users to exclusively boost product two's estimated reward, we also require them to \textit{worsen} product one's empirical reward to such an extent that UCB will never select it for the remainder of the algorithm.

We now describe the fake user's strategy, which is depicted in Figure \ref{fig:UCB}. The strategy has three phases. In the first phase, which has a duration of $2\log^2(T)$ rounds, the fake user does not click on any product regardless of what ranking is presented. In the second phase, which again has a duration of $2\log^2(T)$ rounds, the fake user
clicks on product two when it is present in the top position but never clicks on product one. In the remaining rounds, which constitute the third phase, there are no fake users.

\begin{figure}[hbt!]
 \centering
 \includegraphics[width=0.9\textwidth]{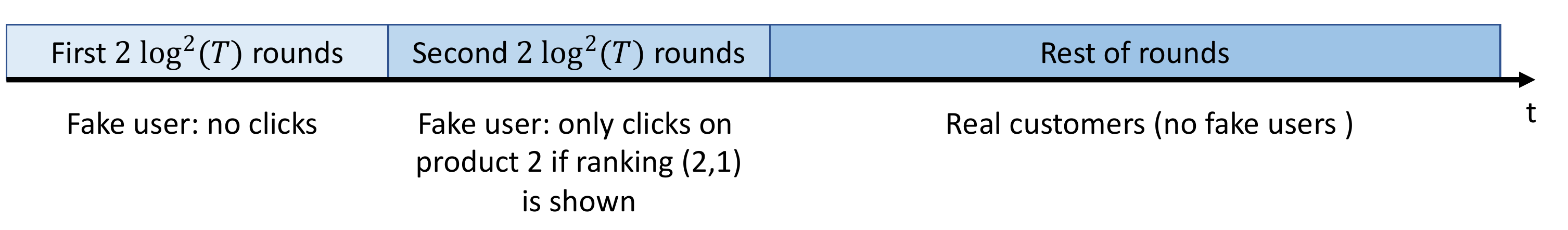}
\caption{The fake user's strategy in Theorem \ref{thm_ucb_bad}. \label{fig:UCB}}
\end{figure}

At the conclusion of the second phase, the upper confidence bound of product two is almost certainly significantly larger than that of product one; so the UCB algorithm will select ranking $(2, 1)$ over $(1, 2)$ for the remaining customers. Further, due to the absence of clicks in the first stage, the empirical mean of the click probabilities of both products at the beginning of the third stage will be smaller than their true rewards. Therefore, as the UCB algorithm selects ranking $(2,1)$ for the real customers, the estimated reward for product two cannot decrease. However, given that ranking $(2,1)$ is sub-optimal, this leads to linear regret in the remaining rounds.

Finally, we conclude by noting that although we only characterized the underperformance of UCB for a simple, two-product instance, the same behavior and poor regret are also applicable for much more general instances, which we highlight in Section~\ref{sec:numerics}. The fact that one of the most well-studied algorithms in the online learning literature can be tricked into learning a sub-optimal ranking motivates us to design new methods that are more robust to fake users. 

\section{Known Fakeness Budget: Fake-Aware Ranking (FAR) Algorithm}
\label{sec:FAR}
In this section, we present our Fake-Aware Ranking (FAR) Algorithm {for the setting where the platform knows the fakeness budget $F$}.  The design of the algorithm, which is presented in Algorithm \ref{cascading_2}, is based on the crucial observation that  {the platform} does not necessarily need to estimate the click probabilities {$\{\mu_i, i \in [n]\}$} {(nor the exit probabilities)} to identify the optimal ranking $\pi^{\star}$; instead, it suffices to correctly determine for every pair of products $(i, j)$, whether $\mu_{i}$  is greater than $\mu_{j}$. In light of this observation, our algorithm keeps track of a product-ordering (directed) graph $G$, where every node of this graph represents a product and {a} directed edge from node $i$ to node $j$ (i.e., edge $(i, j)$) implies that product $j$ dominates product $i$ in the sense that {$\mu_j > \mu_i$}. We now describe the various features of FAR, and in the process, provide an informal description of Algorithm~\ref{cascading_2}.

\textbf{Product-Ordering Graph.} Initially, the product-ordering graph $G$ does not have any edges. Gradually, as our algorithm collects more feedback on customers' preferences for various products, we can infer pairwise relationships with more certainty. Eventually, the algorithm adds an edge $(i, j)$ to graph $G$ in round $t$ when the condition in~\eqref{eq:add_edge} holds, indicating that with high probability, the true click probability of product $j$ $(\mu_j)$ is larger than that of product $i$ $(\mu_i)$.
\begin{align} \label{eq:add_edge} r_{i} + \sqrt{\frac{\log(\frac{2nT}{\delta})}{\eta_{i}}} + \frac{F}{\eta_{i}} \leq  r_{j} - \sqrt{\frac{\log(\frac{2nT}{\delta})}{\eta_{j}}} - \frac{F}{\eta_{j}}\,.\end{align}

    Here, $r_i$ and $r_j$ are the algorithm's empirical estimates of the click probabilities of products $i$ and $j$ respectively, and $\eta_i, \eta_j$ denote the number of times (so far) the algorithm has received feedback on these products, respectively. Finally, $\delta$ is a parameter, which we set in Step~\ref{step:initialization_FAR} of Algorithm~\ref{cascading_2}. For any given product $i \in [n]$, we say that the algorithm receives feedback on this product in round $t$, if the customer stops her search in position $ j$ and $\pi_t^{-1}(i)\le  j$; that is, product $i$ is placed in a position at least as visible as $ j$. Of course, such feedback is only credible if the  customer is real. However, the algorithm cannot distinguish between real and fake users, and as a result, $\eta_i$ is increased by one (in Step~\ref{step:update_FAR} of Algorithm~\ref{cascading_2}) after every round in which we receive feedback on product $i$. Similarly, $r_i$ is also updated (in Step~\ref{step:update_FAR} of Algorithm~\ref{cascading_2}) when we receive feedback on product $i \in [n]$---i.e., if the algorithm has received feedback $\eta_i$ times on product $i$, and $k$ out of these $\eta_i$ customers clicked on the product, then $r_i = k/\eta_i$.

The term $\sqrt{\tfrac{\log(\frac{2nT}{\delta})}{\eta_{i}}} + \frac{F}{\eta_{i}}$ in Equation \eqref{eq:add_edge} can be viewed as a ``fakeness-robust'' upper confidence interval for the estimate $r_i$ of $\mu_i$. This quantity depends on the fakeness budget $F$, which represents the fact that the algorithm does not fully trust its estimate of the $\mu_i$ due to the presence of fake users. Given that there are at most $F$ fake users, the term $\frac{F}{\eta_i}$ captures the maximum amount by which these fake users can distort our empirical estimate of product $i$'s reward after $\eta_i$ rounds of feedback. Therefore, according to Equation \eqref{eq:add_edge}, the algorithm adds the edge $(i, j)$ to graph $G$ when our upper bound on the reward of product $i$, i.e., $r_{i} + \sqrt{\tfrac{\log(\frac{2nT}{\delta})}{\eta_{i}}} + \frac{F}{\eta_{i}}$,  is smaller than our lower bound on that of product $j$, i.e., $r_{j} - \sqrt{\frac{\log(\tfrac{2nT}{\delta})}{\eta_{j}}} - \frac{F}{\eta_{j}}$. In simple terms, even our \emph{worst-case} estimate of the click probability of product $j$ is larger than our \emph{best-case} estimate for that of product $i$.

\begin{algorithm}[htbp]
\caption{Known Fakeness Budget: Fake-Aware Ranking (FAR)}\label{cascading_2}
\begin{algorithmic}[1]
\State \textbf{Input:} The fakeness budget $F$ and number of rounds $T$ .  
\State \textbf{Output:} For each round $t\in [T]$, a ranking $\pi_t$.
\State \textbf{Initialization.} \label{step:initialization_FAR} Let $\delta = \frac{1}{nT}$. For all $i\in [n]$, initialize the average rewards $ r_{i} \gets 0$ and  feedback counts $\eta_{i} \gets 0$. Further, initialize the product-ordering  graph $G \gets ([n],\emptyset)$.
\For {$t=1, \ldots, T$,}
\State  \textbf{Ranking Decision.} Display the products according to ranking $\pi_t=$ \Call{GraphRankSelect}{${\etab}, G$}, and observe $c_{t} \in (\{\emptyset\} \cup [n]) \times [n]$ (i.e., the clicked product and last browsed position).  
\State \textbf{Update Variables.} \label{step:update_FAR} Let $j$ be the last position that the customer examined. Increment the feedback counts $\eta_i$ for every $i\in \{\pi_t({1}),\ldots,\pi_t({j})\}$, by one. 
Then, for all $i\in \{\pi_t(1), \ldots, \pi_t(\p{j}-1)\}$, update the average rewards {as $ r_{i}\gets \frac{ r_{i}(\eta_{i}-1)}{\eta_i}$}. 
For $i = \pi_t(\p{j})$, if the customer clicked on $i$, update $ r_{i}\gets \frac{ r_{i}(\eta_{i}-1)+1}{\eta_i}$, otherwise $ r_{i}\gets \frac{ r_{i}(\eta_{i}-1)}{\eta_{i}}$.
\State \textbf{Add Edges to Graph $G$.} For each $i, j \in [n]$ with $\eta_{i},\eta_{j} >0$ such that
$$ r_{i} + \sqrt{\frac{\log(\frac{2nT}{\delta})}{\eta_{i}}} + \frac{F}{\eta_{i}} \leq  r_{j} - \sqrt{\frac{\log(\frac{2nT}{\delta})}{\eta_{j}}} - \frac{F}{\eta_{j}},$$
add a directed edge $(i,j)$ to $G$.
\EndFor
\end{algorithmic}

\end{algorithm}

\textbf{Ranking Decision.} The FAR algorithm chooses its ranking $\pi_t$ in round $t$ via the \Call{GraphRankSelect}{${\etab}, G$} function, defined in Algorithm \ref{alg_graph_rank}. In particular, this function uses the product-ordering graph $G$ and feedback counts $\etab$ to output a ranking that a corresponds to a topological ordering of graph $G$, breaking ties in favor of products with a low feedback count. Concretely, \Call{GraphRankSelect}{${\etab}, G$} assigns products to ranks sequentially starting with the top position. In each step, it selects a product that has no outgoing edge in $G$ to any other product; if multiple products meet this criterion, then the product with the smallest of $\eta_i$ is selected. Following this, we update the graph by removing this product and its edges, and then repeat the selection process for the next position. Roughly speaking, our selection algorithm balances \emph{exploitation with exploration}---(a) products without outgoing edges are placed at better positions as they are at least as good as the remaining products, thereby exploiting prior feedback; and (b) ties are broken in a manner that ensures we collect more information on products with low feedback counts, leading to more exploration. We remark that any deletion of edges inside the \Call{GraphRankSelect}{${\etab}, G$} function do not alter the product-ordering graph outside of it. Finally, we note that the function returns an arbitrary ranking when graph $G$ has a cycle, which is indicative of contradictory information regarding pairwise product relationships. Nevertheless, as we show in Lemma~\ref{lem_pairwise_edgeadd}, the probability  that graph $G$ contains a cycle is very small.

\begin{example}
In Figure \ref{fig:Graph}, we illustrate via a toy example how the function \Call{GraphRankSelect}{$\etab, G$} chooses a ranking for a given product-ordering graph $G$ and feedback counts $\etab$. In this example, there are $n=6$ products, and as shown in Figure~\ref{fig:Graph}, both products two and four do not have any outgoing edges. Then, function \Call{GraphRankSelect}{$\etab, G$} places these two products in the first two positions. However, since $\eta_4< \eta_2$, the algorithm prioritizes  product four over product two by placing it in the first position. After removing products two and four and their associated edges from graph $G$, product one is the only product with no outgoing edge, and as a result, is placed in position three. Continuing this process leads to ranking $(4,2,1, 3, 5, 6)$.\end{example}

\begin{figure}
    \centering
    \includegraphics[width=0.3\textwidth]{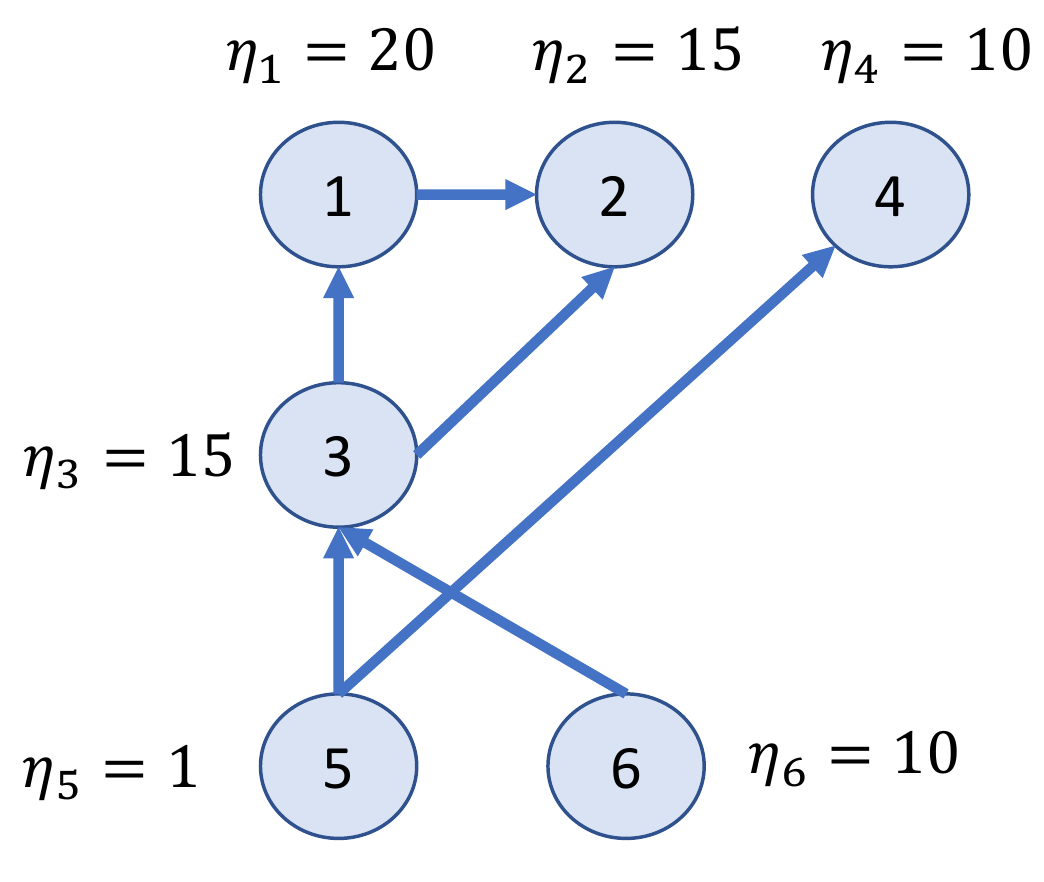}
   \caption{By applying function \textsc{GraphRankSelect}{$({\boldsymbol{\eta}}, G)$} to this instance, we obtain ranking $(4,2,1,3, 5,6)$. Here, $\boldsymbol{\eta}=(20,15, 15, 10, 1, 10)$.}
    \label{fig:Graph}
\end{figure}

{Having described the FAR algorithm, in the following theorem, we characterize its regret.}

\begin{theorem}[Known Fakeness Budget: Regret of FAR Algorithm] 
\label{thm:far} Let $F$ be the fakeness budget and assume that $F$ is known to the platform. Then, the expected regret of the FAR algorithm  satisfies \label{thm_far} 
\begin{align}
    \reg_T &= O\left(n^2F+ \sum_{\p{j}=1}^n \sum_{i=\p{j}+1}^n \frac{\log(nT)}{\Delta_{j,i}} \right)\,.
    \label{eqn_thmfar_regret}
    \end{align}

\end{theorem}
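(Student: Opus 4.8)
The plan is to run a ``clean event'' argument adapted to the corrupted feedback. First I would define the good event $\mathcal{E}$ on which, for every product $i\in[n]$ and every round $t$, the running estimate obeys $|r_i-\mu_i|\le \sqrt{\log(2nT/\delta)/\eta_i}+F/\eta_i$. This holds because $r_i$ splits into the contribution of the (at most $\eta_i$) \emph{real} feedback samples on $i$, which are i.i.d.\ $\mathrm{Bernoulli}(\mu_i)$ conditionally on being collected, and the contribution of the at most $F$ \emph{fake} samples, which can perturb the average by at most $F/\eta_i$; a Hoeffding bound together with a union bound over the $n$ products and the $\le T$ possible sample counts makes $\P[\mathcal{E}^c]$ polynomially small (this is exactly why the confidence width in \eqref{eq:add_edge} is inflated by $F/\eta_i$). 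Conditioning on $\mathcal{E}$, two structural facts are immediate. (i) Every edge ever added to $G$ is correct: if $(i,j)$ is added, then the edge rule \eqref{eq:add_edge} combined with the concentration bound sandwiches $\mu_i$ below $\mu_j$, so $G$ stays acyclic, \textsc{GraphRankSelect} returns a genuine topological order (never the arbitrary fallback), and $(i,j)\in G$ forces $j$ to be ranked above $i$ in $\pi_t$; in particular, if $i$ is ranked above $j$ then $(i,j)\notin G$. (ii) Conversely, if $\mu_j>\mu_i$ and $\min(\eta_i,\eta_j)\ge\tau_{j,i}$, then $(i,j)\in G$, where $\tau_{j,i}=O\!\left(\log(nT)/\Delta_{j,i}^2+F/\Delta_{j,i}\right)$ is the smallest count at which the two inflated confidence intervals provably separate.

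Next I would set up the regret decomposition using the cascade structure. Sorting any ranking $\pi$ into $\pi^\star$ by adjacent transpositions, each transposition of an adjacent inverted pair $(j,i)$ (indices $j<i$, with $i$ ranked above $j$) at positions $(p,p+1)$ changes the real-customer click probability by $\rho_p\,q_p\,\Delta_{j,i}\ge 0$ while leaving all downstream reach probabilities untouched, so the per-round regret of $\pi$ telescopes into $\sum_{\text{inverted }(j,i)}\rho_p q_p\Delta_{j,i}$; here $\rho_p$ is the probability a real customer reaches position $p$, and $\rho_p q_p$ is dominated by the probability that the algorithm receives feedback on the over-ranked product $i$ in that round. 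Combining this with the (additive $O(F)$) relation between $\reg_T$ and the same quantity evaluated along the algorithm's own trajectory (cf.\ the Remark and Appendix~\ref{app:regret}), the problem reduces to proving, for each pair $j<i$,
\[
M_{j,i}\;:=\;\E\!\left[\,\ind_{\mathcal{E}}\cdot\#\{\,t:\ i\text{ ranked above }j\text{ in }\pi_t\text{ and feedback on }i\text{ received in round }t\,\}\right]\;=\;O(\tau_{j,i}),
\]
since then $\reg_T\le\sum_{j<i}\Delta_{j,i}M_{j,i}+O(F)+O(1)$ and $\sum_{j<i}\Delta_{j,i}\tau_{j,i}=O\!\left(\sum_{j<i}\log(nT)/\Delta_{j,i}+n^2F\right)$ is exactly \eqref{eqn_thmfar_regret}.

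Bounding $M_{j,i}$ is the main obstacle, and the low-feedback tie-breaking rule of \textsc{GraphRankSelect} is essential here. Fix a round counted in $M_{j,i}$; since $i$ is above $j$, fact (i) gives $(i,j)\notin G$. If $\eta_i<\tau_{j,i}$ in that round, there are at most $\tau_{j,i}$ such rounds because each increments $\eta_i$. In the complementary ``large-count'' case $\eta_i\ge\tau_{j,i}$, fact (ii) together with $(i,j)\notin G$ forces $\eta_j<\tau_{j,i}$, hence $\eta_i>\eta_j$; since $i$ was selected before $j$'s position while carrying a strictly larger feedback count, $j$ was \emph{not available} when $i$ was selected, so $j$ has an outgoing edge $(j,m)$ to a product $m$ still unplaced at that step --- hence ranked below $i$ --- and by (i) $\mu_m>\mu_j>\mu_i$, so $m<j$ and the pair $(m,i)$ is itself inverted in $\pi_t$. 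Moreover $\eta_i\ge\tau_{j,i}\ge\tau_{m,i}$ (the gap $\Delta_{m,i}$ is larger) and $\eta_m<\tau_{m,i}$ (otherwise $(i,m)\in G$ would put $m$ above $i$), so this very round is again ``large-count'' for the pair $(m,i)$, with strictly smaller first index. Iterating, the first index decreases until it reaches $1$, at which point the same reasoning would require product $1$ to have an outgoing edge in $G$ --- impossible on $\mathcal{E}$ since $\mu_1$ is maximal. Hence no ``large-count'' round occurs, $M_{j,i}\le\tau_{j,i}$, and the claimed bound follows. Finally, $\mathcal{E}^c$ is handled crudely: with $\delta=1/(nT)$, $\P[\mathcal{E}^c]$ is small enough that $n^2T\cdot\P[\mathcal{E}^c]=O(1)$, so its contribution to the regret is negligible. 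I expect this recursive elimination of ``large-count'' rounds --- exactly the place where low-feedback tie-breaking departs from \textsc{TopRank}'s uniform-random tie-breaking --- to be the delicate step; everything else is bookkeeping.
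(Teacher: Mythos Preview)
Your overall architecture matches the paper's: define the clean event $\mathcal{E}$, show that on $\mathcal{E}$ the graph never acquires a wrong edge and acquires the edge $(i,j)$ once both counts reach the threshold, and then bound the number of ``bad'' feedback rounds per pair by that threshold. Two technical steps are executed differently, and both work, though one of yours needs a correction.

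\textbf{Regret decomposition.} The paper's Lemma~\ref{lem_regret_decomp} decomposes by \emph{position}: it charges round $t$ to the pair $(j,\pi_t(j))$ whenever $\pi_t(j)>j$, via the one-line bound $\prod_{r<j}(1-\mu_{\pi^\star(r)})\le \prod_{r<j}(1-\mu_{\pi_t(r)})$. Your decomposition is by \emph{inversion} via bubble sort, which is also valid, but the assertion ``$\rho_p q_p$ is dominated by the probability of feedback on $i$ in $\pi_t$'' is order-dependent and false as stated. With $\pi_t=(3,2,1)$ and $q_1=1$, if you first swap $(2,3)$ and later swap $(1,2)$ at position~$1$, the latter has $\rho_1 q_1=1$ while the probability of feedback on product $2$ in $\pi_t$ is $0$. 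The inequality \emph{does} hold if you bubble product $1$ to position $1$ first, then product $2$, etc.; with that specific order one can check $\rho_p^{(\text{intermediate})}\le \rho_{\pi_t^{-1}(i)}^{(\pi_t)}$ because the set of products above $i$ in the intermediate ranking always contains the set of products above $i$ in $\pi_t$. The paper's position-based route sidesteps this bookkeeping entirely.

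\textbf{Ruling out ``large-count'' rounds.} Your recursive chain $j\to m\to m'\to\cdots$ along outgoing edges is a correct alternative to the paper's Lemma~\ref{lem_dagprop}, which short-circuits the recursion: at the moment $i$ is selected it takes $k$ to be the \emph{smallest-index unplaced product}; on $\mathcal{E}$ every outgoing edge of $k$ points to an already-placed product, so $k$ is available, whence $\eta_i\le\eta_k$ and $\eta_i<\gamma_{k,i}\le\gamma_{j,i}$ in one step. Your chain is effectively walking down to this same $k$. A small imprecision: ``until it reaches $1$'' should read ``until it reaches the smallest unplaced product,'' since product $1$ may already sit above $i$; the contradiction is identical.
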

\begin{algorithm}[ht]\captionsetup{labelfont={sc,bf}}
\caption{Function \sc{GraphRankSelect} for Ranking Decision}\label{alg_graph_rank}
\begin{algorithmic}[1]
\State \textbf{Input.} Product-ordering graph $G$ and feedback count $\etab =(\eta_1, \eta_2, \ldots, \eta_n)$.
\State \textbf{Output.} Ranking $\pi$.
\Function{GraphRankSelect}{${\etab}, G$}
\State Set $S=[n]$ and $\hat G= G$. 
\State \textbf{If} graph $G$ has  a cycle, \textbf{then} return an arbitrary ranking $\pi$. {\texttt{(In the analysis of the FAR algorithm, we show that this is a rare event.)}} 
\State \textbf{Else}, \textbf{for} $\p{j}= 1,\ldots, n$ \textbf{do} 
\State \quad Let $i$ be a product in $S$ that has no outgoing edges to $S$ in graph $\hat G$. If multiple products in $S$ satisfy this condition, choose the one with smallest $\eta_i$ value.
\State \quad Place product $i$ in position $ j$, i.e., $\pi( j)= i$.
 \State \quad Remove $i$ from set $S$, i.e., {$S \gets  S \backslash \{i\}$}; remove node $i$ and all of its  incoming  edges from graph $\hat G$.
 \State \textbf{end for}
 \EndFunction
\end{algorithmic}
\end{algorithm}

Before proceeding to the proof of Theorem~\ref{thm:far}, we highlight two key ideas: (a) 
The threshold for adding a directed edge between products $i$ and $j$ (presented in Equation \eqref{eq:add_edge}) is set such that, with high probability, the product-ordering graph never contains an erroneous edge, even in the presence of fake users; (b) The \Call{GraphRankSelect}{$\etab, G$} function (Algorithm~\ref{alg_graph_rank}), when deciding on which product to place in a given position,  prioritizes the least explored product (among those with no outgoing edges). As a result, any product $i$ that is worse than another product $j$ cannot be placed in a position more visible than that of $j$ ``too many" times as its feedback count would rapidly exceed that of product $j$.

\begin{proof}{Proof of Theorem \ref{thm:far}.}
Recall that $\pi^\star$ is the optimal ranking and $\pi_t$ is the ranking selected by our learning algorithm in round  $t$. Fix some arbitrary policy $\vec{P} = (P_t)_{t \in [T]} \in \mathcal{P}$ for the fake user. Our goal is to show that for any choice of $\vec{P}$, the regret $\reg_T(\vec{P})$ incurred by our learning algorithm belongs to $O\left(n^2F+ \sum_{\p{j}=1}^n \sum_{i=\p{j}+1}^n \frac{\log(nT)}{\Delta_{j,i}} \right)$. Note that by Equation~\eqref{eqn_regret_F}, the regret $\reg_T(\vec{P})$ can be written in the form:
\begin{align}\label{eq:def:regret}\reg_{T}(\bm{P}) &~=~  \E_{\mathcal{H}_T(\pi^{\star})}\Big[\sum_{t=1}^T \mathcal{C}_{r,t}(\pi^\star)\Big]- \E_{\mathcal{H}_T(\pi_t)}\Big[\sum_{t=1}^T \mathcal{C}_{r,t}(\pi_t) \Big]\,, 
\end{align}
where the histories $\mathcal{H}_T(\pi^{\star})$ and $\mathcal{H}_T(\pi_t)$ depend on the policy $\vec{P}$, the randomness in the customer's actions in each round, and the ranking algorithm itself. When it is clear from context, we omit the history over which the expectation is taken.

Our proof now proceeds as follows. First, in Lemma~\ref{lem_regret_decomp}, we decompose the regret into terms corresponding to the loss resulting from misplacing any inferior product at a given position $\p{j}$ for all $j \in [n]$.
Then, in Lemma~\ref{lem:concent}, we show that with high probability, the algorithm's empirical means for the click probability of each product are close to the actual click probabilities. Following this, in Lemmas~\ref{lem_pairwise_edgeadd} and~\ref{lem_dagprop}, we provide an upper bound for the number of times that an inferior product is misplaced in position $\p{j} \in [n]$. The proofs for all of these lemmas are deferred to Appendix~\ref{sec:proof:FAR}.

We begin by decomposing the regret into a sum of ``pairwise regrets'' for each pair of products across the entire time horizon.

{
\begin{lemma}[Decomposing  Regret]
\label{lem_regret_decomp}
The regret of an algorithm $(\pi_t)_{t=1}^T$ compared to the optimal ranking $\pi^{\star}$ satisfies

\begin{align}
\E\Big[\sum_{t=1}^T \mathcal{C}_{r,t}(\pi^\star)\Big]- \E\Big[\sum_{t=1}^T \mathcal{C}_{r,t}(\pi_t) \Big] \leq F + \sum_{t=1}^T \sum_{{j}=1}^n \sum_{i=j+1}^n \Delta_{j, i} \E_{\mathcal{H}_t(\pi_t)}\big[\ind(\event_{r,t}(\pi_t(\p{j}) = i))\big]\,,\label{eqn_regret_decomp1}
\end{align}

where $\event_{r,t}(\pi_t(\p{j}) = i)$ denotes the event where: (a) product $i$ is placed in position $\p{j}$ under $\pi_t$, i.e., $\pi_t(\p{j}) = i$, (b) the customer in round $t$ is real, and (c) we receive feedback on product $i$ (that is, the customer does not exit before position $\p{j}$). 
\end{lemma}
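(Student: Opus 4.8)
The plan is to write the total reward of any ranking as a sum of per-position contributions, and then compare position-by-position against the optimal ranking. Concretely, for a fixed ranking $\pi$ and a real customer, let $A_{r,t}(\pi,\p{j})$ denote the event that the customer in round $t$ reaches position $\p{j}$ (i.e.\ does not click on or exit before any of positions $1,\dots,\p{j}-1$). Conditioned on reaching position $\p{j}$, the customer contributes a click with probability $\mu_{\pi(\p{j})}$. Hence $\E[\mathcal{C}_{r,t}(\pi)] = \sum_{\p{j}=1}^n \P(A_{r,t}(\pi,\p{j}))\,\mu_{\pi(\p{j})}$, where the probability also accounts for the round being real. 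The key observation is that the probability of reaching position $\p{j}$, namely $\P(A_{r,t}(\pi,\p{j}))$, depends only on the exit probabilities $q_1,\dots,q_{\p{j}-1}$ and on $\mu_{\pi(1)},\dots,\mu_{\pi(\p{j}-1)}$ — that is, on the multiset of products placed above position $\p{j}$, not on which specific products they are, and crucially \emph{not} on $\mu_{\pi(\p{j})}$ itself. I would isolate this as a small structural claim about the cascade-with-exit model.

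Next I would set up a hybrid/telescoping argument. Since both $\pi^\star$ and $\pi_t$ are permutations of $[n]$, I want to bound $\E[\mathcal{C}_{r,t}(\pi^\star)] - \E[\mathcal{C}_{r,t}(\pi_t)]$ for a real round. The clean way is to compare $\pi_t$ against $\pi^\star$ using the fact that $\pi^\star$ orders products by decreasing $\mu$. For each position $\p{j}$, if $\pi_t(\p{j}) = i$ with $i > \p{j}$ (an ``inferior'' product at position $\p{j}$, since $\pi^\star(\p{j}) = \p{j}$ and $\mu$ is decreasing in the index), the loss at that position relative to placing product $\p{j}$ there is at most $(\mu_{\p{j}} - \mu_i)\cdot\P(A_{r,t}(\pi_t,\p{j}))$; and $\P(A_{r,t}(\pi_t,\p{j}))$ is dominated by the probability that we receive feedback on product $i$ in round $t$ from a real customer, which is exactly $\E[\ind(\event_{r,t}(\pi_t(\p{j})=i))]$ when we also condition on $\pi_t(\p{j}) = i$. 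Summing over $\p{j}$ and $i>\p{j}$ with $\Delta_{j,i} = \mu_j - \mu_i$ then yields the double sum on the right-hand side of \eqref{eqn_regret_decomp1}. The subtlety is that a direct position-by-position comparison of $\pi_t$ to $\pi^\star$ requires the ``probability of reaching position $\p{j}$'' to be no larger under $\pi_t$ than the benchmark one would get by pretending products $1,\dots,\p{j}-1$ occupy the top $\p{j}-1$ slots; this monotonicity (putting higher-$\mu$ products on top only makes customers \emph{more} likely to stop early, hence less likely to reach deep positions, which goes the ``wrong'' way) needs a careful accounting — I expect this to be the main obstacle.

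To handle that obstacle cleanly I would avoid comparing against $\pi^\star$ directly and instead argue as follows. Upper bound $\E[\mathcal{C}_{r,t}(\pi^\star)] \le \E[\text{number of real rounds up to }t\text{ that would click under an omniscient best response}]$, but more usefully: bound $\E[\mathcal{C}_{r,t}(\pi^\star)] - \E[\mathcal{C}_{r,t}(\pi_t)]$ by noting that in a real round, the engagement under \emph{any} ranking is at most $1$, and under $\pi_t$ it is at least $\mu_{\pi_t(1)} \ge \mu_{\pi_t(\p{j})}$ weighted appropriately; the cleanest route is to observe that the per-round regret in a real round equals $\sum_{\p{j}} \big(\P(A_{r,t}(\pi^\star,\p{j}))\mu_{\pi^\star(\p{j})} - \P(A_{r,t}(\pi_t,\p{j}))\mu_{\pi_t(\p{j})}\big)$ and then use an exchange/coupling argument that swaps products in $\pi_t$ one misplaced pair at a time toward $\pi^\star$, showing each swap only decreases regret while the ``cost'' it accounts for is at most $\Delta_{j,i}$ times the reach-probability of the relevant position. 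The additive $F$ term appears because in at most $F$ rounds the customer is fake, and in those rounds $\mathcal{C}_{r,t}(\cdot)=0$ by definition, so they contribute at most $F$ to $\E[\sum_t \mathcal{C}_{r,t}(\pi^\star)]$ but cannot be charged to any pairwise term — we simply bound their total contribution by $F$. Finally I would note that the event $\event_{r,t}(\pi_t(\p{j})=i)$ as defined already bundles ``$\pi_t(\p{j})=i$'', ``round is real'', and ``feedback reaches position $\p{j}$'', so $\P(A_{r,t}(\pi_t,\p{j}))\cdot\ind(\pi_t(\p{j})=i)\cdot\ind(\text{real})$ is at most $\E[\ind(\event_{r,t}(\pi_t(\p{j})=i))]$ conditionally, which matches \eqref{eqn_regret_decomp1} after taking expectations and summing over $t$. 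I would present the reach-probability structural claim and the one-swap exchange lemma as the two workhorses, with the $F$ accounting as a short remark, and defer the detailed bookkeeping to the appendix as the paper indicates.
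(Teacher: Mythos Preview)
Your decomposition and overall plan match the paper's, but you have the key monotonicity backwards, and this leads you to propose an unnecessary exchange argument. Concretely, you correctly observe that under $\pi^\star$ customers are \emph{more} likely to stop early, so $\P(A_{r,t}(\pi^\star,\p{j})) \leq \P(A_{r,t}(\pi_t,\p{j}))$ for every $\p{j}$. You call this the ``wrong'' direction, but it is exactly the right one: since $\mu_{\pi^\star(\p{j})} = \mu_{\p{j}} > 0$, replacing $\P(A_{r,t}(\pi^\star,\p{j}))$ by the larger quantity $\P(A_{r,t}(\pi_t,\p{j}))$ in the $\pi^\star$ term only \emph{increases} the difference. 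The paper uses this in one line:
\[
\sum_{\p{j}} Q(\p{j})\!\left[\prod_{\p{r}<\p{j}}(1-\mu_{\pi^\star(\p{r})})\,\mu_{\p{j}} - \prod_{\p{r}<\p{j}}(1-\mu_{\pi_t(\p{r})})\,\mu_{\pi_t(\p{j})}\right]
\;\le\;
\sum_{\p{j}} Q(\p{j})\prod_{\p{r}<\p{j}}(1-\mu_{\pi_t(\p{r})})\,\Delta_{\p{j},\pi_t(\p{j})},
\]
and then drops the nonpositive summands where $\pi_t(\p{j}) \leq \p{j}$. No hybrid or one-swap lemma is needed.

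Your handling of the $F$ term is also slightly off. It is not that fake rounds contribute zero to both sides; the subtlety is that the fake policy is adaptive, so a round that is fake under $(\pi_t)$ need not be fake under $\pi^\star$. The paper conditions on $f_t(\pi_t)$: when $f_t(\pi_t)=1$ one has $\mathcal{C}_{r,t}(\pi_t)=0$ while $\mathcal{C}_{r,t}(\pi^\star)\le 1$, so each such round contributes at most $1$ and there are at most $F$ of them. When $f_t(\pi_t)=0$ but $f_t(\pi^\star)=1$, the per-round regret is $\le 0$, which is trivially dominated by the pairwise bound. With these two fixes, your argument collapses to exactly the paper's proof.
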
}

Looking at the right hand side of Equation \eqref{eqn_regret_decomp1}, it is tempting to conclude that the only impact of the fake users on regret is the additive term $F$. However, this is inaccurate as the realization of the random variable\footnote{Recall that $\ind(Y)$ is a random variable that evaluates to one when the event $Y$ is true and zero otherwise.} $\ind(\event_{r,t}(\pi_t(\p{j}) = i))$ depends on $\pi_t$, which in turn crucially depends on the fake user's actions in rounds prior to $t$. For example, the products boosted by a fake user (via clicks) in previous rounds may be perceived as high quality products and be ranked in the top positions under $\pi_t$. In the remainder of this proof, we will focus on bounding the deleterious effects of fake users on future regret.

Following Equation~\eqref{eqn_regret_decomp1}, consider the term $\E\left[\ind(\event_{r,t}(\pi_t(\p{j}) = i))\right]$, which represents the event that product $i$ is placed in a position $\p{j}$ that is better than its position in the optimal ranking $\pi^{\star}$ (i.e., $i > \p{j})$, and a real customer examines product $i$. In particular, such a product placement is clearly sub-optimal compared to the optimal ranking and could potentially contribute to the regret; so we seek to bound the number of times this sub-optimal event occurs. 

In order to analyze $\E\left[\ind(\event_{r,t}(\pi_t(\p{j}) = i))\right]$, we first condition it on the event that in every round $t \in [T]$, the true click probability $\mu_i$ of any product $i \in [n]$, is within an appropriate confidence interval around its empirical mean. More precisely, for any $t \in [T]$, and product $i \in [n]$, we extend our notation from Algorithm~\ref{cascading_2} and use $\eta_{i,t}$ to denote the number of times we receive feedback on product $i$ up to round $t$, and $r_{i,t}$ to denote the average reward we have observed for product $i$ until round $t$. For any product $i\in [n]$ and round $t\in [T]$, define the random event
\begin{align}\label{eq:event_e}\mathcal E_{i,t}\triangleq\{ r_{i,t} - w_{i,t} \leq \mu_i \leq r_{i,t} + w_{i,t}\}\,,\end{align} 
where for each $t\in [T]$ and $i\in [n]$, the window size (confidence interval) $w_{i, t}$ is defined as
 \begin{align}
 \label{eqn_window_definition1}w_{i,t}  \triangleq  \left\{ \begin{array}{ll}
         \sqrt{\frac{\log(\frac{2nT}{\delta})}{\eta_{i,t}}} + \frac{F}{\eta_{i,t}} &\quad  \text{if~} \eta_{i,t} > 0 ;\\
        \max\left(1,  \sqrt{\log(\frac{2nT}{\delta})} + F \right) &\quad  \text{if~} \eta_{i,t} = 0\,. \end{array} \right. \end{align}
Under event $\mathcal{E}_{i,t}$, the true click probability $\mu_i$ of product $i$ lies within a confidence window of width $w_{i,t}$ around the empirical reward $r_{i,t}$, at the end of round $t$.

Finally, we define $\mathcal E =\{\bigcap_{i\in [n], t\in [T]} \mathcal E_{i,t}\}$ as the intersection of all the events $\mathcal E_{i,t}$ for every product $i\in [n]$ and round $t\in [T]$. We show that $\mathcal E$ holds with high probability. 

\begin{lemma}[Concentration Inequality]\label{lem:concent} The probability that the event $\mathcal{E}$ holds is at least $1 - \frac{\delta^2}{nT}$, where $\mathcal E =\{\bigcap_{i\in [n], t\in [T]} \mathcal E_{i,t}\}$ and $ \mathcal E_{i,t}$ ($i\in [n]$ and $t\in [T]$)  is defined in Equation \eqref{eq:event_e}, and $\delta=\frac{1}{nT}$.
\end{lemma}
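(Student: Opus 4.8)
The plan is to show that for a fixed product $i$ and a fixed round $t$, the event $\mathcal{E}_{i,t}$ fails with probability at most $\frac{\delta^2}{n^3 T^2}$ (or some comparable polynomially-small bound), and then take a union bound over all $n$ products and all $T$ rounds. The subtlety — and the reason the confidence window in~\eqref{eqn_window_definition1} has the extra additive term $\frac{F}{\eta_{i,t}}$ — is that the observed average reward $r_{i,t}$ is \emph{not} a clean average of i.i.d.\ Bernoulli$(\mu_i)$ samples: it is contaminated by up to $F$ fake clicks (or withheld clicks). So the first step is a decomposition: among the $\eta_{i,t}$ rounds in which we received feedback on product $i$, let $\eta^{r}_{i,t}$ of them correspond to real customers and $\eta^{f}_{i,t} \le F$ to fake users, with $\eta^{r}_{i,t} + \eta^{f}_{i,t} = \eta_{i,t}$. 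On the real rounds the clicks are genuine Bernoulli$(\mu_i)$ draws (conditionally independent given the history, since a real customer who examines position $\pi_t^{-1}(i)$ clicks with probability exactly $\mu_i$ independent of everything else), while the fake rounds each contribute a click indicator in $\{0,1\}$ that is arbitrary. Writing $r_{i,t} = \frac{1}{\eta_{i,t}}\big(\sum_{\text{real}} X_s + \sum_{\text{fake}} Y_s\big)$, the fake portion shifts the empirical mean by at most $\frac{\eta^{f}_{i,t}}{\eta_{i,t}} \le \frac{F}{\eta_{i,t}}$ in either direction, which is precisely the second term of $w_{i,t}$.

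Next I would handle the real portion with a Hoeffding / Azuma-type bound. The cleanest route is to work with the martingale formed by the real-customer click indicators: condition on the sequence of rounds and on which rounds are real versus fake (note the fake entity's policy $P_t$ is adapted to $\mathcal{H}_{t-1}$, so ``round $t$ is real'' is $\mathcal{H}_{t-1}$-measurable), and observe that on a real round in which product $i$ is examined, the click is Bernoulli$(\mu_i)$ conditionally on the past. Thus $\sum_{\text{real}} (X_s - \mu_i)$ is a martingale with bounded increments, and Azuma–Hoeffding gives $\Pr\big[\,|\tfrac{1}{\eta^r_{i,t}}\sum_{\text{real}}(X_s - \mu_i)| \ge \epsilon\,\big] \le 2\exp(-2\eta^r_{i,t}\epsilon^2)$. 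To convert this into a statement about deviation by $\sqrt{\log(2nT/\delta)/\eta_{i,t}}$ — the \emph{first} term of $w_{i,t}$, with $\eta_{i,t}$ (not $\eta^r_{i,t}$) in the denominator — note that $\eta^r_{i,t} \le \eta_{i,t}$, so $\sqrt{\log(2nT/\delta)/\eta_{i,t}} \le \sqrt{\log(2nT/\delta)/\eta^r_{i,t}}$ and it suffices to bound the real-sample deviation by the (larger) quantity $\sqrt{\log(2nT/\delta)/\eta^r_{i,t}}$; plugging $\epsilon = \sqrt{\log(2nT/\delta)/\eta^r_{i,t}}$ into Azuma gives failure probability at most $2(\tfrac{\delta}{2nT})^2 = \tfrac{\delta^2}{2n^2T^2}$. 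Combining the real-deviation bound with the deterministic fake-shift bound, on the complement of this failure event we get $|r_{i,t} - \mu_i| \le \sqrt{\log(2nT/\delta)/\eta_{i,t}} + F/\eta_{i,t} = w_{i,t}$, i.e.\ $\mathcal{E}_{i,t}$ holds; the case $\eta_{i,t} = 0$ is trivial since $w_{i,t} \ge 1 \ge |\mu_i|$ by the second branch of~\eqref{eqn_window_definition1}.

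Finally, a union bound over the at most $nT$ pairs $(i,t)$ yields $\Pr[\mathcal{E}^c] \le nT \cdot \tfrac{\delta^2}{2n^2T^2} = \tfrac{\delta^2}{2nT} \le \tfrac{\delta^2}{nT}$, which is the claimed bound. (One has to be slightly careful that the random stopping times implicit in ``the $\eta_{i,t}$-th feedback round'' do not break the martingale argument; this is standard and handled either by a stopping-time / optional-stopping argument or by replacing the $\eta^r_{i,t}$-indexed partial sum with a maximal inequality over all possible values of $\eta^r_{i,t} \in \{0,1,\ldots,t\}$ and absorbing the extra factor of $t$ into the $\log$, which only changes constants.) I expect the main obstacle to be exactly this last point — making the dependence structure precise (fake-vs-real is previsible, real clicks are conditionally Bernoulli$(\mu_i)$, feedback counts are random stopping times) so that the Azuma inequality applies rigorously — rather than any of the arithmetic, which is routine.
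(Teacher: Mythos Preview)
Your approach is correct and essentially the paper's: split the $\eta_{i,t}$ observations into at most $F$ adversarial ones (absorbed deterministically by the $F/\eta_{i,t}$ term) and the remaining Bernoulli$(\mu_i)$ ones (handled by a Hoeffding-type bound), then union-bound. The paper's execution differs only cosmetically: it \emph{replaces} each fake sample by a fresh independent Bernoulli$(\mu_i)$ draw to obtain a fully i.i.d.\ length-$T$ sequence (padding with further i.i.d.\ draws if fewer than $T$ feedbacks occur), applies Hoeffding to every prefix $S\in[T]$, and union-bounds over prefixes and products---this is precisely the stopping-time fix you anticipate in your final parenthetical, and it has the side benefit that the Hoeffding term comes out directly with $\eta_{i,t}$ in the denominator.

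One small slip worth fixing: the sentence ``it suffices to bound the real-sample deviation by the (larger) quantity $\sqrt{\log(2nT/\delta)/\eta^r_{i,t}}$'' has the sufficiency direction backwards---bounding something by a larger number does not imply it is below the smaller one. The step actually works because the real-sample contribution to $|r_{i,t}-\mu_i|$ carries the weight $\eta^r_{i,t}/\eta_{i,t}$, so
\[
\frac{\eta^r_{i,t}}{\eta_{i,t}}\sqrt{\frac{\log(2nT/\delta)}{\eta^r_{i,t}}}
=\frac{\sqrt{\eta^r_{i,t}\log(2nT/\delta)}}{\eta_{i,t}}
\le \sqrt{\frac{\log(2nT/\delta)}{\eta_{i,t}}}\,.
\]
The paper's padding trick sidesteps this detail entirely, since after replacement all $\eta_{i,t}$ samples are i.i.d.\ and Hoeffding delivers the $\sqrt{\log(2nT/\delta)/\eta_{i,t}}$ term in one shot.
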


Going back to the second term in the right hand side of Equation~\eqref{eqn_regret_decomp1}, we have that: 
\begin{align}
    \sum_{\p{j}=1}^n \sum_{i=\p{j}+1}^n \Delta_{\p{j}, i}\sum_{t=1}^T \E_{\mathcal{H}_T(\pi_t)}\left[  \ind(\event_{r,t}(\pi_t(\p{j}) = i))\right] &=   \P(\mathcal{E})\sum_{\p{j}=1}^n \sum_{i=\p{j}+1}^n \Delta_{\p{j}, i}\sum_{t=1}^T\E_{\mathcal{H}_T(\pi_t)}\left[\ind(\event_{r,t}(\pi_t(\p{j}) = i))~ \big| \mathcal E \right] \nonumber \\  
    & +  \P(\mathcal{E}^c)\sum_{\p{j}=1}^n \sum_{i=\p{j}+1}^n \Delta_{\p{j}, i}\sum_{t=1}^T\E_{\mathcal{H}_T(\pi_t)}\left[\ind(\event_{r,t}(\pi_t(\p{j}) = i))~\big| \mathcal{E}^c\right].
     \label{eqn_conditional_regret_1}
\end{align}
Here, the term $\E_{\mathcal{H}_T(\pi_t)}\left[\ind(\event_{r,t}(\pi_t(\p{j}) = i)) \big| \mathcal E \right]$ restricts the expectation to (be conditional on) histories where event $\mathcal E$ is true. We omit the  subscript $\mathcal{H}_T(\pi_t)$ from the expectation when its meaning is clear. From Lemma \ref{lem:concent}, note that 
\begin{equation}\label{eq:UB:comp}
\P(\mathcal{E}^c) \leq \frac{\delta^2}{nT}
\end{equation}
Using this upper bound on the probability of $\mathcal{E}^c$, 
we simplify the second term on the right hand side of Equation \eqref{eqn_conditional_regret_1} as follows:
\begin{align}
  \P(\mathcal{E}^c)\sum_{\p{j}=1}^n \sum_{i=\p{j}+1}^n \Delta_{\p{j}, i}\sum_{t=1}^T\E\left[\ind(\event_{r,t}(\pi_t(\p{j}) = i))~\big| \mathcal{E}^c\right]
& \leq 
\frac{\delta^2}{nT} \sum_{\p{j}=1}^n \sum_{i=\p{j}+1}^n  \sum_{t=1}^T\E\left[\ind(\event_{r,t}(\pi_t(\p{j}) = i))~\big| \mathcal{E}^c\right] \nonumber \\
& \leq 
\frac{\delta^2}{nT} \sum_{\p{j}=1}^n \sum_{i=\p{j}+1}^n  \sum_{t=1}^T\E\left[\ind(\pi_t(\p{j}) = i)~\big| \mathcal{E}^c\right] \nonumber \\
& =
\frac{\delta^2}{nT} \sum_{\p{j}=1}^n \sum_{t=1}^T \E\left[\sum_{i=\p{j}+1}^n \ind(\pi_t(\p{j}) = i)\Big| \mathcal{E}^c\right]  \nonumber\\
& \leq 
\frac{\delta^2}{nT} \sum_{\p{j}=1}^n \sum_{t=1}^T 1 = \delta^2. 
\label{eqn_regret_conditional}
\end{align}
The first inequality above follows from Equation~\eqref{eq:UB:comp} and the fact that $\Delta_{\p{j}, i} \leq 1$. The second one is due to the observation that $\ind(\event_{r,t}(\pi_t(\p{j}) = i)) \leq \ind(\pi_t(\p{j}) = i)$ because $\pi_t(\p{j}) = i$ is a necessary condition for the event $\event_{r,t}(\pi_t(\p{j}) = i)$ to occur. Finally, the last inequality holds because $\ind(\pi_t(\p{j}) = i)$ is true for at most one product $i > \p{j}$ in round $t$.

Next, we turn our attention to characterizing the loss under event $\mathcal{E}$.  Namely, we focus on providing an upper bound for the expression $\sum_{t=1}^T \E\left[ \ind(\event_{r,t}(\pi_t(\p{j}) = i)) \big| \mathcal E \right]$ from Equation~\eqref{eqn_conditional_regret_1} for some arbitrary $i,j \in [n]$ such that $i > \p{j}$. In the following, we first show that if we have enough samples (or feedback) from both products $i$ and $\p{j}$, we will not place product $i$ at a better position compared to product $j$. Then, we upper bound the number of rounds needed to ensure that we have enough samples. To that end, for  $i>\p{j}$, we define $\gamma_{\p{j},i}$ as:
\begin{equation}
    \label{eqn_sub_optplays}
    \gamma_{\p{j},i} \triangleq \max\left\{\frac{64\log(2nT/\delta)}{\Delta^2_{\p{j},i}}, \frac{8F}{\Delta_{\p{j},i}}\right\}\,.
    \end{equation}  
Observe that for any $i > \p{j}$, if $\eta_{i,t} \geq \gamma_{\p{j},i}$, then $w_{i,t} \leq  \frac{\Delta_{\p{j}, i}}{4}$ as per our definition in Equation~\eqref{eqn_window_definition1}. The same holds for product $\p{j}$. Therefore, if we receive enough samples from both products $i$ and $\p{j}$, under event $\mathcal{E}$, the condition~\eqref{eq:add_edge} will be satisfied. This allows our algorithm to correctly add an edge $(i,\p{j})$ to the product-ordering graph  $G$. 
The following lemma formalizes the above observation. More precisely, it states that conditioned on event $\mathcal{E}$, the graph $G$ does not contain  erroneous edges.

\begin{lemma}[Properties of the Graph $G$ under Event $\mathcal{E}$]
\label{lem_pairwise_edgeadd}
Assume that the event  $\mathcal E$ holds.
Then, for any two products $i, \p{j}$ with $i > \p{j}$:
\begin{enumerate}
    \item In any round $t$, the product-ordering graph $G$ does not contain the (incorrect) edge $(\p{j},i)$.
      \item  Suppose that in round $t$, $\eta_{i,t}, \eta_{\p{j},t} \geq \gamma_{\p{j},i}$, where $\gamma_{\p{j},i}$ is defined in~\eqref{eqn_sub_optplays}. Then, the product-ordering graph $G$ contains {the (correct) edge $(i,\p{j})$} for all rounds $t'\ge t$.
\end{enumerate}

\end{lemma}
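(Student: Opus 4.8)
The plan is to prove both parts by directly inspecting the edge-addition rule~\eqref{eq:add_edge} under the conditioning event $\mathcal{E}$, using two structural facts. First, edges are only ever \emph{added} to $G$: the function \textsc{GraphRankSelect} in Algorithm~\ref{alg_graph_rank} deletes edges only from the local copy $\hat G$, never from $G$ itself, so $G$ is monotone in $t$ and it suffices to reason about the round in which an edge would first be inserted. Second, whenever $\eta_{i,t}>0$, the half-width $w_{i,t}$ appearing in the definition~\eqref{eqn_window_definition1} of $\mathcal{E}_{i,t}$ is \emph{exactly} the quantity $\sqrt{\log(2nT/\delta)/\eta_{i,t}}+F/\eta_{i,t}$ that the algorithm uses in~\eqref{eq:add_edge}; and the add-edge step of Algorithm~\ref{cascading_2} only ranges over products with $\eta_i,\eta_j>0$, so this identification always applies.

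For part (1) I would argue by contradiction. Suppose the erroneous edge $(\p{j},i)$ (with $i>\p{j}$, i.e.\ $\mu_{\p{j}}>\mu_i$) is present in $G$ at round $t$; by monotonicity it was inserted at some round $t_0\le t$, which by~\eqref{eq:add_edge} means $r_{\p{j},t_0}+w_{\p{j},t_0}\le r_{i,t_0}-w_{i,t_0}$. Since $\mathcal{E}=\bigcap_{i,t}\mathcal{E}_{i,t}$, conditioning on $\mathcal{E}$ gives in particular $\mathcal{E}_{\p{j},t_0}$ and $\mathcal{E}_{i,t_0}$, hence $\mu_{\p{j}}\le r_{\p{j},t_0}+w_{\p{j},t_0}$ and $r_{i,t_0}-w_{i,t_0}\le\mu_i$. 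Chaining the three inequalities yields $\mu_{\p{j}}\le\mu_i$, contradicting $\mu_{\p{j}}>\mu_i$. Hence $(\p{j},i)$ is never added, so it is absent in every round.

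For part (2) I would first record the elementary bound that $\eta_{i,t}\ge\gamma_{\p{j},i}$ forces $w_{i,t}\le\Delta_{\p{j},i}/4$: by~\eqref{eqn_sub_optplays}, $\eta_{i,t}\ge 64\log(2nT/\delta)/\Delta_{\p{j},i}^2$ makes $\sqrt{\log(2nT/\delta)/\eta_{i,t}}\le\Delta_{\p{j},i}/8$, while $\eta_{i,t}\ge 8F/\Delta_{\p{j},i}$ makes $F/\eta_{i,t}\le\Delta_{\p{j},i}/8$ (the $\eta=0$ branch of~\eqref{eqn_window_definition1} is irrelevant since $\gamma_{\p{j},i}>0$); the same holds for product $\p{j}$. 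Then under $\mathcal{E}$, at round $t$, $r_{i,t}+w_{i,t}\le(\mu_i+w_{i,t})+w_{i,t}\le\mu_i+\Delta_{\p{j},i}/2$ and $r_{\p{j},t}-w_{\p{j},t}\ge(\mu_{\p{j}}-w_{\p{j},t})-w_{\p{j},t}\ge\mu_{\p{j}}-\Delta_{\p{j},i}/2$. Because $\mu_{\p{j}}-\Delta_{\p{j},i}/2=\mu_i+\Delta_{\p{j},i}/2$ by definition of $\Delta_{\p{j},i}$, we obtain $r_{i,t}+w_{i,t}\le r_{\p{j},t}-w_{\p{j},t}$, i.e.\ condition~\eqref{eq:add_edge} holds at round $t$, so the edge $(i,\p{j})$ is inserted no later than round $t$; monotonicity of $G$ then gives $(i,\p{j})\in G$ for all $t'\ge t$.

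I do not expect a genuine obstacle here — the argument is short bookkeeping — but the point that needs care is the bookkeeping itself: applying the conditioning at the correct index (we need $\mathcal{E}_{i,t_0}$/$\mathcal{E}_{\p{j},t_0}$ at the possibly-earlier insertion round $t_0$, which is legitimate precisely because $\mathcal{E}$ is the intersection over all products and all rounds), matching $w_{i,t}$ in $\mathcal{E}_{i,t}$ with the term used in~\eqref{eq:add_edge}, and invoking the fact that $G$ never loses edges, on which the ``for all $t'\ge t$'' conclusion entirely rests.
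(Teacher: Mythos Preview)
Your proposal is correct and follows essentially the same route as the paper's proof: both parts are handled by direct inspection of the edge-addition rule~\eqref{eq:add_edge} under $\mathcal{E}$, with part~(1) by contradiction via the chain $\mu_{\p{j}}\le r_{\p{j}}+w_{\p{j}}\le r_i-w_i\le\mu_i$, and part~(2) by first bounding $w_{i,t},w_{\p{j},t}\le\Delta_{\p{j},i}/4$ from the definition of $\gamma_{\p{j},i}$ and then sandwiching. Your explicit invocation of the insertion round $t_0$ in part~(1) and of the monotonicity of $G$ for the ``for all $t'\ge t$'' conclusion in part~(2) are careful touches that the paper leaves implicit, but the substance is identical.
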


Lemma \ref{lem_pairwise_edgeadd} implies that under event $\mathcal E$,  graph $G$ does not have any erroneous edges, and when the number of {samples (feedback) collected on any two  products $i$ and $\p{j}$} is large enough, there is a correct edge between their corresponding nodes in the graph $G$. Recall that a directed edge from product $i$ to product $j$ in $G$ is indicative of the fact that product $j$ has a higher click probability.  Our next lemma shows that, conditioned on event $\mathcal E$, any mistake in ranking an inferior product $i$ ahead of product $\p{j} < i$ is due to lack of feedback on product $i$. Further, this is also true when product $i$ is placed at position $\p{j}$, a scenario whose occurrence we are seeking to bound.
\begin{lemma}[{Necessary Conditions for an Incorrect Ordering}]
	\label{lem_dagprop}
	Assume that event  $\mathcal E$ holds. 
	Suppose that the function \textsc{GraphRankSelect}{($\etab, G$)} {(Algorithm~\ref{alg_graph_rank})} ranks product $i$ at a better position compared to product $\p{j}$ in round $t+1$ for some $i >\p{j}$. Then, there must exist some product $\ki \leq \p{j}$ such that {(a)} the {product-ordering graph $G$ does not include the edge $(i,k)$} after round $t$ {(b)} $\eta_{i,t} \leq \eta_{\ki,t}$ (i.e. we have less feedback on product $i$ than on product $\ki$),
	and {(c)} $\eta_{i,t} < \gamma_{\ki,i}$, where $\gamma_{\ki,i}$ is as defined in Equation \eqref{eqn_sub_optplays}. 
\end{lemma}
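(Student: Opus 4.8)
The plan is to analyze a single execution of \textsc{GraphRankSelect} within round $t+1$. Conditioning on $\mathcal E$, Lemma~\ref{lem_pairwise_edgeadd}(1) tells us that every edge of $G$ runs from a lower-quality product to a higher-quality one, so $G$ is acyclic and \textsc{GraphRankSelect} genuinely performs the topological selection of Algorithm~\ref{alg_graph_rank} (it does not fall into the ``arbitrary ranking'' case). First I would fix the iteration of that selection at which product $i$ is assigned its position, and let $S$ be the set of products not yet placed at the start of that iteration; since $i$ is placed strictly before $j$ by hypothesis, $j$ is still unplaced, so $\{i,j\}\subseteq S$. The candidate I would take is $k$, the product of smallest index in $S$ (equivalently, the highest-quality survivor at that moment). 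Two facts are then immediate: $k\le j$ because $j\in S$; and $k\neq i$ because $j\in S$ has index smaller than $i$ (so in fact $k\le j<i$), which makes $\gamma_{k,i}$ from~\eqref{eqn_sub_optplays} well defined.

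Next I would prove parts (a) and (b). A run of \textsc{GraphRankSelect} only ever deletes the incoming edges of already-placed nodes and never adds edges, so the working graph $\hat G$ restricted to the surviving set $S$ coincides with $G$ restricted to $S$; in particular, every $G$-edge with both endpoints in $S$ is still present in $\hat G$ at this iteration. Because $i$ was selected, $i$ has no outgoing edge to $S$ in $\hat G$; since $k\in S$, this forces $(i,k)\notin G$, and as the graph passed to this call is exactly the graph at the end of round $t$, part~(a) follows. For part~(b): by Lemma~\ref{lem_pairwise_edgeadd}(1) every $G$-outgoing edge of $k$ points to a product of strictly smaller index; by minimality of $k$'s index in $S$ all those targets lie outside $S$ and have therefore already been removed, so $k$ has no outgoing edge to $S$ in $\hat G$ and is itself a legal choice at this iteration. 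Since the tie-breaking rule picked $i$ over every legal choice, $\eta_{i,t}\le\eta_{k,t}$, giving part~(b).

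Finally I would obtain part (c) by applying the contrapositive of Lemma~\ref{lem_pairwise_edgeadd}(2) to the pair $(i,k)$, which is legitimate since $i>k$: $(i,k)\notin G$ after round $t$ precludes having both $\eta_{i,t}\ge\gamma_{k,i}$ and $\eta_{k,t}\ge\gamma_{k,i}$, hence $\eta_{i,t}<\gamma_{k,i}$ or $\eta_{k,t}<\gamma_{k,i}$; combined with $\eta_{i,t}\le\eta_{k,t}$ from part~(b) this yields $\eta_{i,t}<\gamma_{k,i}$ in both cases.

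I expect the only non-routine part to be the graph bookkeeping in the second paragraph: one has to argue carefully that edge deletions inside \textsc{GraphRankSelect} affect only incoming edges of placed nodes, so that $\hat G$ and $G$ agree on the surviving set $S$, and that therefore the top-quality survivor $k$ is actually an eligible pick at the very iteration where $i$ is placed. That eligibility claim is exactly where the acyclicity of $G$ under $\mathcal E$ (via Lemma~\ref{lem_pairwise_edgeadd}(1)) is used, and it is also what lets us exclude the degenerate possibility $k=i$ once we remember $j\in S$. Everything else reduces to a couple of one-line deductions from the two parts of Lemma~\ref{lem_pairwise_edgeadd} and the tie-breaking rule of the selection routine.
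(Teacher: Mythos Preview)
Your proposal is correct and follows essentially the same route as the paper's proof: both fix the iteration at which $i$ is placed, identify $k$ as the smallest-index product still unplaced at that moment (the paper phrases this as the smallest-index product with $\pi^{-1}_{t+1}(k)>r$, which is the same object since $j\in S$ forces the minimum to lie below $i$), observe that $k$ is itself eligible because all its potential outgoing edges point to already-placed products, and then read off (a), (b) from the selection rule and (c) from the contrapositive of Lemma~\ref{lem_pairwise_edgeadd}(2). Your version is slightly tidier in two respects: you make the bookkeeping $\hat G|_S=G|_S$ explicit, and your argument handles the $\eta_{i,t}=0$ and $\eta_{j,t}=0$ boundary cases uniformly, whereas the paper disposes of them separately at the outset.
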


By utilizing Lemma~\ref{lem_dagprop}, we will now show that for  any  $i>\p{j}$, we have   $\sum_{t=1}^T \E\big[ \ind(\event_{r,t}(\pi_t(\p{j}) = i)) \big| \mathcal E \big] \le \gamma_{j,i}$. Note that this completes the proof because by Equations \eqref{eqn_regret_decomp1}, \eqref{eqn_conditional_regret_1}, and  \eqref{eqn_regret_conditional}, we get 
\begin{align}
    \reg_T(\vec{P}) & ~\leq~  F+ \sum_{\p{j}=1}^n \sum_{i=\p{j}+1}^n\Delta_{\p{j}, i} \left(\sum_{t=1}^T \E\left[ \ind(\event_{r,t}(\pi_t(\p{j}) = i)) \big| \mathcal E \right] \right) + \delta^2 \notag\\
    & \leq~ F+ \sum_{\p{j}=1}^n \sum_{j=i+1}^n\Delta_{\p{j}, i}\gamma_{\p{j},i} +  \delta^2\notag\\
     & \leq F+1 + \sum_{\p{j}=1}^n \sum_{i=\p{j}+1}^n \max\left\{\frac{64\log(2nT/\delta)}{\Delta_{j, i}}, 8F\right\}\notag .
   \end{align}

The final inequality comes from substituting the expression for $\gamma_{\p{j},i}$ (as given in~\eqref{eqn_sub_optplays}), and using $\delta^2 \leq 1$. Finally, plugging $\delta=\frac{1}{nT}$, we get the following expression, which leads to the theorem. 
\begin{align}
    \reg_T(\vec{P}) & \leq F+1 + \sum_{\p{j}=1}^n \sum_{i=\p{j}+1}^n \max\left\{\frac{128\log(\sqrt{2}nT)}{\Delta_{j, i}}, 8F\right\} \label{eqn_insidethm_regret}
   \end{align}

It remains for us to show that for  any  $i>\p{j}$, we have   $\sum_{t=1}^T \E\big[ \ind(\event_{r,t}(\pi_t(\p{j}) = i)) \big| \mathcal E \big] \le \gamma_{j,i}$. Recall that event $\event_{r,t}(\pi_t(\p{j}) = i)$ is true only if  (a) product $i$ is placed in position $\p{j}$ under ranking $\pi_t$, and (b) a real customer examines it (i.e., we receive feedback on product $i$)\footnote{By definition, $\event_{r,t}(\pi_t(\p{j}) = i)=0$ if the user in round $t$ is fake.}.

Suppose that  event $\event_{r,t}(\pi_t(\p{j}) = i)$  occurs. Since $i > \p{j}$, this implies there must exist at least one product as good as  product $j$ that is (sub-optimally) ranked below position $\p{j}$---this follows from the pigeonhole principle. Mathematically, we can write this as
\begin{equation}
\label{eqn_ordercatch}
     \pi_t(\p{j}) = i; ~~ i > \p{j} \implies \exists {s} \text{~with~} s \leq \p{j} \text{~s.t.~} \pi^{-1}_t({s}) > \p{j}.
\end{equation}
 Note that here, we identify a product $s \leq \p{j}$ which is placed below product $i$ under ranking $\pi_t$.
 That is, $\pi_t^{-1}(s) > \pi_t^{-1}(i)$ while $s < i$. Thus, we can apply Lemma~\ref{lem_dagprop} to the pair of products $i$ and $s$. 
 Lemma~\ref{lem_dagprop} implies that  there exists some product $\ki$ better than (or equal to) $s$ such that $i$ has no edge to $\ki$ in graph $G$ at round $t$ and that $\eta_{i,t-1} \leq \eta_{\ki,t-1}.$  Note that this also implies that $\eta_{i,t-1} < \gamma_{\ki, i}$; if not, then we would have that $\eta_{i,t-1}, \eta_{\ki, t-1} \geq \gamma_{\ki, i}$. By Lemma~\ref{lem_pairwise_edgeadd}, this would imply that $G$ contains an edge from $i$ to $\ki$, contradicting our construction of $\ki$.
 
 Combining this and the fact that $\mu_{i} < \mu_{\p{j}} \leq \mu_{s} \leq \mu_{\ki}$, we can conclude that
\begin{align*}
  \eta_{i,t-1} < \gamma_{\ki,i}  \leq \gamma_{s,i} \leq \gamma_{\p{j},i}.  
\end{align*}
We have therefore shown that conditioned on event $\mathcal E$, if in some round $t$ and for some $i>\p{j}$, $\event_{r,t}(\pi_t(\p{j}) = i)$ holds, we must have that $\eta_{i,t-1} < \gamma_{\p{j},i}$. Conditional on $\mathcal E$, consider any arbitrary instantiation of our algorithm, and let $t_{\p{j}, i}$ denote the first round at which $\eta_{i,t} = \gamma_{\p{j},i}$. {We claim that} for any $t > t_{\p{j}, i}$, our algorithm  never {places} product $i$ at position $\p{j}$ since this would result in a contradiction. To see why this would be a contradiction, assume that $\pi_t(\p{j}) = i$ {in some round} $t > t_{\p{j}, i}$. According to Equation \eqref{eqn_ordercatch}, we can infer the existence of a product $s$ such that $s < \p{j}$ and $\pi^{-1}_t({s}) > \p{j} = \pi^{-1}_t({i})$. Next, we invoke Lemma~\ref{lem_dagprop} with the pair of products $i$ and $s$ to conclude that  $\eta_{i,t-1} < \gamma_{s,i}$. {However, we have} $\gamma_{s,i} \leq \gamma_{\p{j},i}$, which is a contradiction because, by definition, $\eta_{i,t-1} \geq \gamma_{\p{j},i}$ for any $t > t_{\p{j}, i}$.

Therefore, conditional on $\mathcal E$, we have 
\begin{align*}
\sum_{t=1}^T \E\left[ \ind(\event_{r,t}(\pi_t(\p{j}) = i)) \big| \mathcal E \right] & = \E\left[\sum_{t=1}^{t_{\p{j}, i}} \ind(\event_{r,t}(\pi_t(\p{j}) = i)) \big| \mathcal E \right]\\
    & \leq \E\left[\sum_{t=1}^{t_{\p{j}, i}}\eta_{i,t} - \eta_{i,t-1} {\Big| \mathcal E} \right]  = \E\left[\eta_{i,t_{\p{j}, i}} {\Big| \mathcal E} \right]  = \gamma_{\p{j},i}\,,
\end{align*}
{where the first and last equalities follow from the definition of $t_{\p{j}, i}$, and the inequality follows from the observation that event $\event_{r,t}(\pi_t(\p{j}) = i)$ is only a sufficient condition on getting feedback on product $i$ in any round $t$.} This completes our proof. \hfill \Halmos

\end{proof}

\begin{remark}
The vanilla version of the FAR algorithm needs to know the time horizon $T$ in order to set its parameter $\delta$, and the window size. However, by using the standard \emph{doubling trick} (see, e.g., \cite{bubeck2015convex}), one can modify the FAR algorithm so that the algorithm does not need to know $T$ in advance. Instead, the algorithm starts with an initial guess value for $T$, and every time, it reaches  its guess, the algorithm doubles it.
\end{remark}

\section{Unknown Fakeness Budget: Fake-Oblivious Ranking with Cross-Learning (FORC) Algorithm}
\label{sec:FORC}
In this section, we consider a setting where the platform does not know the fakeness budget $F$ in advance. We build on the ideas used in FAR to develop our second algorithm called {\em  Fake-Oblivious Ranking with Cross-Learning (FORC)}, which is agnostic to the fakeness budget. The formal description of FORC is presented in Algorithm \ref{alg_cascading_agnostic}. Here, we highlight the main ideas of the algorithm, and provide an informal description.

\textbf{Multi-Level Learning.} Recall that in the FAR algorithm, we account for the presence of fake users by widening the confidence intervals used to determine the superiority of one product over another, by an additive factor proportional to $F$; see  Equation \eqref{eq:add_edge}. Since we do not  know the value of $F$ in this case, we cannot simply widen the confidence interval. Instead, we use a multi-level randomized algorithm that at each step, selects one of $\level = \log_2(T)$ sub-algorithms running in parallel, with different probabilities. More precisely, in each round $t \in [T]$, (the learning algorithm corresponding to) level $\ell$ is selected with probability $2^{-\ell}$ for $\ell \in \{2, \ldots, \level\}$, and level $\ell=1$ is selected with probability $1 - \sum_{\ell = 2}^{\level} 2^{-\ell}$. Observe that levels with higher indices are sampled with smaller probabilities and as a result, they are exposed to a fewer number of fake users. Adopting such a multi-level mechanism allows us to limit the damage caused by fake users to the learning algorithm at any one level. Leveraging this, we later provide a concrete bound on the number of fake users that higher levels are exposed to, which serves as an ``effective fakeness budget".

As stated earlier, our idea of employing multi-level learning in FORC is inspired by the algorithm in \cite{lykouris2018stochastic}. However, due to the randomness in our feedback structure---i.e., we cannot predict exactly which products will be examined by the customers---coordinating these learning levels is rather challenging, compared to the multi-armed bandit setting in \cite{lykouris2018stochastic}. Later, we discuss  how we facilitate efficient coordination between different learning levels using careful upward and downward cross-learning across levels. 

We begin by formalizing our main rationale for a multi-level scheme by establishing a bound on the number of fake users that higher levels, i.e., those with index $\ell\ge \log_2(F)$, are exposed to. Specifically, the following lemma proves that with high probability  any level $\ell\ge \log_2(F)$ is exposed  to $O(\log(\log(T)))$ fake users.  

\begin{lemma}[Effective Fakeness Budget for High Learning Levels]
\label{lem_maxfake_levels}
For  any fake user policy $\mathbf{P}$ with fakeness budget $F$, with probability at least $1-\frac{\delta}{2}$, 
{the following holds for all levels $\ell \geq \log_2(F)$: level $\ell$ in Algorithm~\ref{alg_cascading_agnostic} is exposed to at most $\log(2\frac{\level}{\delta})+3$ fake users, where $\delta =\frac{1}{n^3T}$ and $\level =\log_2(T)$.} 
\end{lemma}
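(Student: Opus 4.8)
The plan is to reduce the claim, one level at a time, to a martingale concentration bound and then take a union bound over the at most $\level$ integer levels with index $\ell \geq \log_2(F)$. Fix one such level $\ell$. Let $\ell_t$ be the level selected by Algorithm~\ref{alg_cascading_agnostic} in round $t$ and $f_t \in \{0,1\}$ the indicator that round $t$ is fake, so the quantity to control is $N_\ell = \sum_{t=1}^T f_t\,\ind(\ell_t = \ell)$, the number of fake rounds routed to level $\ell$; write $p_\ell$ for the selection probability of level $\ell$, which equals $2^{-\ell}$ for $\ell \geq 2$ (the case $\ell=1$, which can only arise when $F\le 2$, needs only the obvious trivial modification since $p_1 = 1-\sum_{\ell'=2}^{\level}2^{-\ell'} \le 1$). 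The structural fact I would exploit is that the fake entity's policy $P_t$ is a function of $\mathcal{H}_{t-1}$ only, so $f_t$ is determined \emph{before} the fresh level draw $\ell_t$ is made; hence $\ell_t$ is independent of $\mathcal{H}_{t-1}$ and of $f_t$. It follows that $D_t \triangleq f_t\big(\ind(\ell_t = \ell) - p_\ell\big)$ is a martingale difference sequence with respect to the natural filtration, with $|D_t| \leq 1$ and predictable quadratic variation $\sum_{t=1}^T \E\!\left[D_t^2 \mid \mathcal{F}_{t-1}\right] \leq p_\ell \sum_{t=1}^T f_t \leq p_\ell F$.

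Next I would use the decomposition $N_\ell = \sum_{t=1}^T D_t + p_\ell \sum_{t=1}^T f_t \leq \sum_{t=1}^T D_t + p_\ell F$, where the hypothesis $\ell \geq \log_2(F)$ enters exactly once: it gives $p_\ell \leq 1/F$, hence $p_\ell F \leq 1$, which simultaneously caps the ``mean term'' by $1$ and caps the predictable quadratic variation of $(D_t)_t$ by $1$. A Freedman-type inequality for martingales with increments bounded by $1$ and total conditional variance at most $1$ then gives $\P\!\big(\sum_{t} D_t \geq a\big) \leq \exp\!\big(-\tfrac{a^2}{2+2a/3}\big)$; a short computation shows $a = \log(2\level/\delta) + 2$ makes the right-hand side at most $\delta/(2\level)$ (using $\log(2\level/\delta) = \log\!\big(2n^3 T \log_2 T\big) \geq 1$). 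Adding back the mean term yields $N_\ell \leq \log(2\level/\delta) + 3$ off an event of probability at most $\delta/(2\level)$, and a union bound over the at most $\level$ integer levels $\ell \in \big[\lceil\log_2 F\rceil, \level\big]$ delivers the stated failure probability $\delta/2$.

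I expect the only genuine subtlety — and the point I would take care to spell out — to be the independence of $f_t$ and $\ell_t$: it is exactly this that prevents the adversary from ``hunting'' a single level. If $P_t$ could observe $\ell_t$ before committing round $t$ to be fake, the entity could pour its entire budget into level $\ell$ and the lemma would be false; the model's restriction that $P_t$ sees only $\mathcal{H}_{t-1}$ is what makes the adaptivity harmless. Casting $N_\ell$ through a martingale, rather than asserting outright that $N_\ell$ is stochastically dominated by a $\mathrm{Binomial}(F,p_\ell)$ and invoking a plain Chernoff bound, is the cleanest way to make the adaptivity rigorous; the remaining work — checking the conditional-variance bound and tracking the additive constant through Freedman's inequality — is routine bookkeeping.
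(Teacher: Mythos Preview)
Your proposal is correct and takes essentially the same approach as the paper: both fix a level, form the martingale difference $f_t(\ind(\ell_t=\ell)-p_\ell)$, use the independence of $f_t$ and $\ell_t$ (given $\mathcal{H}_{t-1}$) to bound the predictable quadratic variation by $p_\ell F\le 1$, apply a Bernstein-type martingale inequality, and union-bound over at most $\level$ levels. The only cosmetic difference is that the paper invokes the Bernstein-style Lemma~1 of \cite{beygelzimer2011contextual} (yielding $\sum_t D_t \le \log(1/\epsilon)+(e-2)$ with probability $1-\epsilon$) while you invoke Freedman's inequality directly; the resulting constants match.
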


The proof of this lemma is presented in Appendix~\ref{app:FORC}. Informally, the above lemma implies that the effective fakeness budget for the $\ell$-th learning level (when $\ell \geq \log_2 (F)$) is $\log(\frac{2\level}{\delta})+3$, where $\delta =\frac{1}{n^3T}$. As such, it seems intuitive to replace $F$ in Equation \eqref{eq:add_edge} (window size used in the FAR algorithm) by this term. Based on this, we set the confidence interval for each level in Algorithm~\ref{alg_cascading_agnostic} to be slightly larger than $\log(2\frac{\level}{\delta})+3$ to account for this effective fakeness budget, as well as some extra corruption that arises due to cross-learning (see Step \eqref{step_addedge_forc} in Algorithm~\ref{alg_cascading_agnostic}).

\textbf{Product-Ordering Graphs and Cross-Learning.} In the FORC algorithm, each learning level $\ell$ is represented by a product-ordering graph $G^{(\ell)}$. Analogous to Algorithm~\ref{cascading_2}, the edges in the graph represent dominance relationships between pairs of products, which in turn determine the ranking decisions when level $\ell$ is randomly selected. As a consequence of Lemma~\ref{lem_maxfake_levels}, one can show using similar techniques as in the proof of FAR that levels $\ell \geq \log_2(F)$ always converge to the optimal ranking as their product-ordering graphs do not contain erroneous edges. However this claim does not hold for levels $\ell < \log_2(F)$, which witness more fake users, and can incur large regret. At the same time, since the fakeness budget and hence, $\log_2(F)$, are unknown, one cannot simply ignore these lower levels. Instead, we introduce the notion of bi-directional cross-learning---described below---to a) transfer edges from higher levels to lower ones in order to identify which levels contain incorrect edges , and b) speed up the learning process at higher levels.

\begin{itemize}[leftmargin=*
]

 \item \textbf{Downward Cross-Learning.}  In a given round, if the FORC algorithm adds an edge $(i,j)$ to graph $G^{(\ell)}$ corresponding to level $\ell$, the same edge is also added to graphs $G^{(\ell')}$ for all $\ell' < \ell$. The intuition behind this downward propagation is that the higher levels are more cautious in learning product orderings and an inference made at this level is more likely to be accurate than one made at a lower level. Further, downward cross-learning may lead to the formation of cycles in the product-order graphs at lower levels due to the presence of contradictory edges; we eliminate these graphs (Step~\ref{step_cycle_FORC} of Algorithm~\ref{alg_cascading_agnostic}) to curtail the damage caused by such edges. We note that in any given round, if the algorithm selects a level $\ell$ such that its corresponding graph $G^{(\ell)}$ was previously eliminated, then we default to graph $G^{(\ell')}$ for our ranking decisions, where $\ell' > \ell$ is the smallest level whose product-ordering graph has not yet been eliminated (Step~\ref{step_if_eliminated} of Algorithm~\ref{alg_cascading_agnostic}).
 
 

    \item \textbf{Upward Cross-Learning.} Simply widening the confidence intervals along with downward cross-learning is not sufficient to ensure low regret. By the time a conservative higher level learns accurate product orderings (e.g., $i > j$), the lower levels---that are sampled more frequently---can make too many mistakes in their ranking decisions. We control for this by incorporating an upward cross-learning mechanism (as defined in Step~\ref{step_upwardcl_forc} in  Algorithm~\ref{alg_cascading_agnostic}). In particular, the empirical product rewards, e.g., $\hat{r}^{(\ell)}_{i}$, that determine the edges in graph $G^{(\ell)}$, depend on a weighted average of the observations from the corresponding level $\ell$ and samples obtained from lower levels $\ell' < \ell$. Using a weighted average can impact our algorithm in two ways. On one hand,
    they can increase the learning rate at higher levels by including samples acquired in lower levels. On the other hand, they can increase the number of fake users that higher levels are exposed to. We choose the weights carefully to balance the trade-off between these two factors. We highlight that upward cross-learning is one of the novel features of the FORC algorithm  that differentiates it from the prior work on learning with corrupted samples, including  that of \cite{lykouris2018stochastic}.

\end{itemize}

We defer presenting an illustrative instance to Section~\ref{sec:numerics}, where we give a detailed example to demonstrate how (i)  product-ordering graphs of different levels evolve over time (in Figure~\ref{fig_forc_visualization}), (ii)  downward cross-learning leads to the elimination of lower-level graphs, and (iii) upward cross-learning expedites the formation of correct edges in higher levels. {Having highlighted the main ideas of the FORC algorithm,
we now characterize its regret in the following theorem.  }

\begin{theorem}[Unknown Fakeness Budget: Regret of FORC Algorithm]
\label{thm_forc}
Let $F$ be the fakeness budget and assume that $F$ is unknown to the platform. 
Then, the expected regret of the FORC algorithm satisfies
\begin{align} \reg_T &  = O\left( \left(n^2F+ \log(T)\right) \sum_{\p{j}=1}^n \sum_{i=\p{j}+1}^n  \frac{\log(nT)}{\Delta_{j,i}} \right)\,.\label{eq:thm:FORS}
\end{align}

\end{theorem}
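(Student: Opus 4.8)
The plan is to mirror the regret-decomposition strategy used for FAR, but now applied level-by-level and combined with a careful accounting of the corruption that propagates through cross-learning. As in the proof of Theorem~\ref{thm:far}, I would start from Lemma~\ref{lem_regret_decomp} to write the regret as $F$ plus a sum over pairs $(i,j)$ with $i>\p{j}$ of $\Delta_{\p{j},i}$ times the expected number of rounds in which a real customer examines product $i$ placed at position $\p{j}$. The key difference is that $\pi_t$ is now determined by the product-ordering graph $G^{(\ell_t)}$ of the randomly chosen level $\ell_t$ (or the smallest non-eliminated level above it). So I would condition on the chosen level and split the pairwise regret into a contribution from \emph{high} levels $\ell \geq \log_2(F)$ and a contribution from \emph{low} levels $\ell < \log_2(F)$.

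For the high levels, the first step is to invoke Lemma~\ref{lem_maxfake_levels}: with probability $1-\tfrac{\delta}{2}$ every level $\ell \geq \log_2(F)$ sees at most $\log(2\level/\delta)+3 = O(\log(nT))$ fake users (directly plus through upward cross-learning, which I must bound separately — this is where the careful choice of weights matters). Conditioned on this event and on the concentration event analogous to $\mathcal{E}$ from Lemma~\ref{lem:concent}, I would re-run the FAR argument (Lemmas~\ref{lem_pairwise_edgeadd} and~\ref{lem_dagprop}) with $F$ replaced by the effective budget $O(\log(nT))$: each such level never contains an erroneous edge, and a correct edge $(i,\p{j})$ appears once $\eta^{(\ell)}_{i},\eta^{(\ell)}_{\p{j}}$ exceed $\gamma'_{\p{j},i} = O\!\big(\tfrac{\log(nT)}{\Delta^2_{\p{j},i}}\big)$. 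Summing over the $O(\log T)$ high levels gives the $\log(T)\sum_{\p{j},i}\frac{\log(nT)}{\Delta_{j,i}}$ piece. The complementary low-probability events contribute $o(1)$ as in~\eqref{eqn_regret_conditional} since $\delta = \tfrac{1}{n^3 T}$.

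For the low levels $\ell < \log_2(F)$, I would bound the regret in two ways and take the minimum-type combination. First, because level $\ell$ is selected with probability $2^{-\ell}$, the expected number of rounds that ever use level $\ell$ is $O(T 2^{-\ell})$; but more usefully, downward cross-learning guarantees that any correct edge eventually discovered at a high level is pushed down, and any graph that accumulates a contradictory cycle (which an erroneous edge from fake users can create once enough correct edges arrive from above) is eliminated — after which ranking decisions default to a higher, ``clean'' level. The crux is to show that the number of rounds at level $\ell$ before either (a) the correct edge $(i,\p{j})$ is installed via upward/downward cross-learning, or (b) the graph is eliminated, is $O(n^2 F \cdot \tfrac{\log(nT)}{\Delta_{j,i}})$ in expectation; the factor $n^2 F$ reflects that there are up to $\log_2 F$ bad levels each absorbing damage proportional to $F$, with the $n^2$ coming from the number of product pairs that can create spurious edges/cycles. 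Combining the high-level and low-level bounds and simplifying yields~\eqref{eq:thm:FORS}.

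The main obstacle I anticipate is the upward cross-learning bookkeeping: I must show that the weighted-average reward estimates $\hat r^{(\ell)}_i$ at a high level are corrupted by at most $O(\log(nT))$ worth of fake feedback even though they incorporate samples from lower, more-corrupted levels. This requires choosing the cross-learning weights so that the \emph{total} fake mass transferred upward into level $\ell$ is, with high probability, still $O(\log(nT))$ — essentially a second application of the geometric-sampling concentration argument behind Lemma~\ref{lem_maxfake_levels}, but now tracking a weighted sum rather than a raw count, and ensuring the weights don't so dilute the real samples that the effective $\gamma'_{\p{j},i}$ blows up. Getting both sides of this trade-off to hold simultaneously, and threading it through the cycle-elimination argument for the low levels, is the delicate part; the rest is a fairly mechanical adaptation of the FAR analysis summed over $\level = \log_2 T$ levels.
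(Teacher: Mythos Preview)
Your outline matches the paper's proof on the global structure: decompose via Lemma~\ref{lem_regret_decomp}, split rounds by whether $\ell_t \geq \ell^{\star}=\lceil\log_2 F\rceil$, and for high levels rerun the FAR analysis with effective fakeness budget $O(\log(nT))$ summed over $\level$ levels (this is exactly the paper's Lemma~\ref{lem:parts1}). Your concern about upward cross-learning polluting $\hat r^{(\ell)}_i$ is well-placed, though the resolution is simpler than you suggest: the weight $2^{-\ell}$ on lower-level samples means the total fake mass transferred up is at most $F/2^{\ell}\leq 1$ deterministically (no second concentration argument needed), and the weighted Hoeffding inequality (Lemma~\ref{lem_hoeffdingweighted_mean2}) controls the variance of the diluted real samples.

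The genuine gap is in your low-level analysis. You propose to bound the number of rounds before either the correct edge $(i,j)$ arrives via cross-learning or the graph is eliminated, relying on cycle-elimination as a key mechanism. The paper does \emph{not} use elimination to drive the bound; instead it tracks, by induction on $j$, when product $i$ acquires its $j$-th outgoing edge in $G^{(\ell^{\star})}$ (to \emph{some} product with gap at least $\Delta_{j,i}/2$, not necessarily to $j$ itself). Two ``milestones'' are defined (Definitions~\ref{defn_firstmilestone} and~\ref{defn_secondmilestone}): when $\hat\eta^{(\ell^{\star})}_{i}\geq 4\gamma_{j,i}$, and when some product in $\G{\Delta_{j,i}}\setminus S_{j-1}$ reaches $4\gamma_{j,i}$ samples at level $\ell^{\star}$. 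The first is controlled directly by upward cross-learning (your Equation~\eqref{eqn_lem8_upwardcl}-type bound). The second requires a non-obvious coupling (Lemma~\ref{lem_agnostic_conditional}): conditionally on the history, every round in which $i$ gets feedback at a low level can be charged, up to a factor $2F$, to a round in which the highest-ranked product $\sigma_t\notin S_{j-1}$ gets feedback at level $\ell^{\star}$. This mapping is what converts ``progress at low levels on the wrong product'' into ``progress at level $\ell^{\star}$ toward the $j$-th edge,'' and it is the piece your sketch is missing. Without it, there is no way to bound how long a low level keeps misplacing $i$ at position $j$ before enough correct edges arrive from above, because the low-level feedback counts on $i$ can far outpace those at $\ell^{\star}$, and cycle-elimination alone gives no quantitative control on when it triggers.
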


\begin{algorithm}[htbp]
\caption{Unknown Fakeness Budget: Fake-Oblivious Ranking with Cross-Learning (FORC)\label{alg_cascading_agnostic}}
\begin{algorithmic}[1]
\State \textbf{Input.}  Parameter $\delta=\frac{1}{n^3T}$, $\level =\log_2(T)$, and number of rounds $T$.
\State \textbf{Output.}
Ranking $\pi_t$, $t\in [T]$.
\State {\textbf{Initialization.} For all  $i\in [n]$ and  $\ell \in [\level]$, initialize the average rewards $ r_{i}^{(\ell)} \gets 0$ and its cross-learning version $\hat r_{i}^{(\ell)} \gets 0$ and  feedback counts $\eta_{i}^{(\ell)} \gets 0$ and their cross-learning version $\hat \eta_{i}^{(\ell)} \gets 0$.
Further, for any level $\ell \in [ \level]$,  initialize the product-ordering  graph $G^{(\ell)} \gets ([n],\emptyset$).}

\For {$t=1, \ldots, T$,}
\State  \textbf{Choose Level.} Let $\ell_t$ be the level in round $t$, where $\P(\ell_t = \ell)= 2^{-\ell}$ for $\ell\in \{2,\ldots, \level\}$. Choose $\ell_t=1$ with the remaining probability  $1-\sum_{\ell=2}^{\level}2^{-\ell}$.
 \State \label{step_if_eliminated} \textbf{If} {graph $G^{(\ell_t)}$ is not eliminated,} \textbf{then} set
$\emph{G} \leftarrow G^{(\ell_t)}$, \textbf{else}
{set $G \leftarrow G^{(\ell)}$ for the smallest index $\ell> \ell_t$ such that $G^{(\ell)}$ is not eliminated.}
\State \textbf{Ranking Decision.} Display the products according to ranking $\pi_t=$ \Call {GraphRankSelect}{${\boldsymbol{\eta}^{(\ell_t)}}, G$}, and observe $c_{t} \in (\{\emptyset\}\cup[n] \times [n])$, i.e., the clicked product and last examined position. 
\State \textbf{Update Variables.} Let $j$ be the last position that the customer  examined.  Update the feedback counts for any  $i\in \{\pi_t(1),  \ldots, \pi_t(j)\}$: $\eta^{(\ell_t)}_{i}\gets \eta^{(\ell_t)}_{i}+1$.  
Then, for any $i\in \{\pi_t(1), \ldots, \pi_t(j-1)\}$,  update the average rewards: $r^{(\ell_t)}_{i}\gets \frac{ r^{(\ell_t)}_{i}(\eta^{(\ell_t)}_{i}-1)}{\eta^{(\ell_t)}_{i}}$. If position $j$ is clicked, for $i=\pi_t(j)$, update $ r^{(\ell_t)}_{i}\gets \frac{ r^{(\ell_t)}_{i}(\eta^{(\ell_t)}_{i}-1)+1}{\eta^{(\ell_t)}_i}$, otherwise $ r^{(\ell_t)}_{i}\gets \frac{ r^{(\ell_t)}_{i}(\eta^{(\ell_t)}_{i}-1)}{\eta^{(\ell_t)}_{i}+1}$.
\State \textbf{Update Cross-Learning Variables.} \label{step_upwardcl_forc} For any  $\ell \ge \ell_t$ and $i\in \{\pi_t(1),  \ldots, \pi_t(j)\}$, first update $\hat\eta_i^{(\ell)}$
\begin{align}\hat\eta_{i}^{(\ell)} &\gets \frac{\sum_{g=1}^{\ell-1}\eta^{(g)}_{i}}{2^{\ell}}+ \eta_{i}^{(\ell)}\qquad \ell \ge \ell_t, i\in \{\pi_t(1),  \ldots, \pi_t(j)\}\label{eq:eta_cr}\end{align}
and then update $\hat r_i^{(\ell)}$
\begin{align} \label{eq:r_cr} \hat r_{i}^{(\ell)} \gets \frac{\sum_{g=1}^{\ell-1}\eta^{(g)}_{i} r_{i}^{(g)}}{2^{\ell}\hat\eta^{(\ell)}_{i}}+ \frac{\eta_{i}^{(\ell)}}{\hat\eta^{(\ell)}_{i}} r_{i}^{(\ell)} \qquad \ell \ge \ell_t, i\in \{\pi_t(1),  \ldots, \pi_t(j)\}\,.\end{align}
\State \label{step_addedge_forc}
\textbf{Add Edges to Graphs.} If for any  $\ell$ where $G^{(\ell)}$ is not eliminated, there exist $i,j \in [n]$  with $\hat \eta^{(\ell)}_{i},\hat \eta^{(\ell)}_{j} >0$ such that:
\begin{align} \label{eq:alg:edge} 
\hat {r}^{(\ell)}_i + w^{(\ell)}_{i} < {\hat r}^{(\ell)}_{j} - w^{(\ell)}_{j}
\end{align}
add a directed edge $(i,j)$ to  $G^{(g)}$ (i.e., $G^{(g)}\gets G^{(g)} \cup (i,j)$) for all $g\le \ell$ where $G^{(g)}$ is not eliminated. Here, for any $i\in [n]$,  $w^{(\ell)}_{i}  \triangleq  
          \sqrt{\frac{3}{2}\frac{\log(\frac{4nT}{\delta})}{{\hat \eta}^{(\ell)}_{i}}} +  \frac{\log(\frac{2\level}{\delta})+4}{{\hat \eta}^{(\ell)}_{i}}$.

\State \textbf{Eliminate  graphs.} \label{step_cycle_FORC}  If graph {$G^{(\ell)}$ has a cycle for any $\ell \in [\level]$}
eliminate $G^{(g)}$ for any $g\le \ell$. 
\EndFor
\end{algorithmic}

\end{algorithm}

\begin{proof}{Proof of Theorem \ref{thm_forc}}
 Throughout this proof, in order to track the variables over time,  we denote the values of  $r_{i}^{(\ell)}$, $\eta_{i}^{(\ell)}$, $\hat r_{i}^{(\ell)}$,  $\hat \eta_{i}^{(\ell)}$, and $w_{i}^{(\ell)}$  (for $i\in [n]$ and $\ell\in [\level]$),  at the end of round $t\in [T]$  by  $r_{i,t}^{(\ell)}$, $\eta_{i,t}^{(\ell)}$, $\hat r_{i,t}^{(\ell)}$,  $\hat \eta_{i,t}^{(\ell)}$, and $w_{i,t}^{(\ell)}$, respectively. Recall from Step~\ref{step_upwardcl_forc} of Algorithm~\ref{alg_cascading_agnostic} that $\hat r_{i}^{(\ell)}$ is the cross-learned empirical reward for product $i$ corresponding to level $\ell$, which is derived via a weighted average of the reward from level $\ell$ (weight: $1$) and levels smaller than $\ell$ (weight: $\frac{1}{2^{(\ell)}}$); $\hat \eta_{i}^{(\ell)}$ denotes the cross-learned feedback count, which is defined analogously.

Similar to the proof of Theorem \ref{thm_far}, we start our analysis by  decomposing the regret into a sum of pairwise regrets using  Lemma~\ref{lem_regret_decomp}. 
However, we further partition the event $\mathcal{A}_{r,t}(\pi_t(\p{j}) = i)$ into two parts based on whether the learning level sampled in round $t$ is greater or less than $\ell^{\star} \triangleq \ceil*{\log_2(F)}$.
\begin{align}\reg_{T} &~\le~ F+   \sum_{\p{j}=1}^n \sum_{i=\p{j}+1}^n \Delta_{j, i} \E\left[\sum_{t=1}^T  \ind(\event_{r,t}(\pi_t(\p{j}) = i))\right] \notag \\
& = F+   \sum_{\p{j}=1}^n \sum_{i=\p{j}+1}^n \Delta_{j, i} \left(\E\left[\sum_{t=1}^T \ind(\event_{r,t}(\pi_t(\p{j}) = i) \cap \ell_t \geq \ell^{\star})\right]+ \E\left[\sum_{t=1}^T  \ind(\event_{r,t}(\pi_t(\p{j}) = i) \cap \ell_t < \ell^{\star})\right]  \right)\,, \label{eqn_regret_conditional_agnostic}
\end{align}

Recall that $\ell_t$ denotes the learning level sampled in round $t$ and that $\E\left[\sum_{t=1}^T  \ind(\event_{r,t}(\pi_t(\p{j}) = i))\right]$ is the expected number of times that our algorithm misplaces an inferior product $i$ in position $j<i$ when the customer is real and ends up examining this product. In the rest of the proof, 
we separately bound the loss terms $\E\left[\sum_{t=1}^T \ind(\event_{r,t}(\pi_t(\p{j}) = i) \cap \ell_t \geq \ell^{\star})\right]$ and $\E\left[\sum_{t=1}^T  \ind(\event_{r,t}(\pi_t(\p{j}) = i) \cap \ell_t < \ell^{\star})\right]$.  

In order to analyze the above loss terms,  we first define a high-probability good event, denoted by $\eventforc$, under which the product-ordering graphs for levels higher than $\ell^{\star}$ do not have erroneous edges. 
In order to define $\eventforc$, we first describe the following targeted event that is specific to a given round, product, and level: for any $t \in [T]$, $i \in [n]$, and $\ell \geq \ell^{\star}$, we define $\eventforc^{(\ell)}_{i,t}$ 
to be the event that 
 the empirical cross-learning mean $\hat{r}^{(\ell)}_{i,t}$, defined in Equation \eqref{eq:eta_cr}, is within a confidence interval of width $w^{(\ell)}_{i,t}$ around the true click probability $\mu_i$ for product $i$. Recall that the window size $w^{(\ell)}_{i,t}$ is defined in Step~\ref{step_addedge_forc} in Algorithm \ref{alg_cascading_agnostic}. We
 repeat the definition here for convenience and extend it to include the case where $\hat{\eta}^{(\ell)}_{i,t} =0$. For every $i\in[n], t\in [T]$, and $\ell \in [\level]$, define $w^{(\ell)}_{i,t}$ as follows:
 \begin{align}
 \displaystyle
 w^{(\ell)}_{i,t} & =  
          \sqrt{\frac{3}{2}\frac{\log(\frac{4nT}{\delta})}{{\max(\hat \eta}^{(\ell)}_{i,t},1)}} +  \frac{\log(\frac{2\level}{\delta})+4}{{\max(\hat \eta}^{(\ell)}_{i,t},1)}
\label{eqn_windowsize_agnostic} \end{align}

Event $\eventforc^{(\ell)}_{i,t}$ is then defined as follows:
\begin{equation}
  \eventforc^{(\ell)}_{i,t} \triangleq \{ \hat{r}^{(\ell)}_{i,t} - w^{(\ell)}_{i,t} \leq \mu_i \leq \hat{r}^{(\ell)}_{i,t} + w^{(\ell)}_{i,t} \}\,.
    \label{eqn_armwidth_agnostic_2}
\end{equation}

\noindent Now, we are ready to define event $\eventforc$:
\begin{definition}[Event $\eventforc$]
\label{def:event}
Let $\eventforc_1 \triangleq  \bigcap_{\substack{t \in [T]\\ \ell \geq \ell^{\star} \\ i \in [n]}} \eventforc^{(\ell)}_{i,t}$, where for any $t \in [T]$, $i \in [n]$, and $\ell  \geq \ell^{\star}= \ceil*{\log_2(F)}$, the event $\eventforc^{(\ell)}_{i,t}$ is defined in Equation \eqref{eqn_armwidth_agnostic_2}. Further,
let $\eventforc_2 
$ be the event that in the entire time horizon,  any level $\ell \geq \ell^{\star}$ is exposed to at most $\log(2\frac{\level}{\delta})+3$ fake users. We define the event $\eventforc$ to be the intersection 
of $\eventforc_1$ and $\eventforc_2$, i.e., 
$\eventforc \triangleq \eventforc_1 \cap  \eventforc_2.$
\end{definition}

In the following lemma, we show that event $\mathcal{G}$ occurs with high probability:

\begin{lemma}[Lower Bounding $\P(\eventforc)$]
\label{lem:even:E}
The probability of event $\eventforc$, as in Definition \ref{def:event}, is at least $1-\delta$, where $\delta =\frac{1}{n^3T}$.
\end{lemma}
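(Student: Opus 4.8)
The plan is to bound the probability of the complement event $\eventforc^c$ by $\delta$ using a union bound over its two constituent parts: $\P(\eventforc^c) \leq \P(\eventforc_1^c) + \P(\eventforc_2^c)$. The second term is handled immediately by Lemma~\ref{lem_maxfake_levels}, which gives $\P(\eventforc_2^c) \leq \frac{\delta}{2}$. So the crux of the argument is to show $\P(\eventforc_1^c) \leq \frac{\delta}{2}$, i.e., that with high probability the cross-learned empirical means $\hat{r}^{(\ell)}_{i,t}$ stay within the stated confidence windows $w^{(\ell)}_{i,t}$ of the true click probabilities $\mu_i$, simultaneously for all rounds $t \in [T]$, all products $i \in [n]$, and all high levels $\ell \geq \ell^{\star}$.

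To bound $\P(\eventforc_1^c)$, I would first fix a level $\ell \geq \ell^{\star}$, a product $i$, and a round $t$, and decompose the deviation $\hat{r}^{(\ell)}_{i,t} - \mu_i$ into three sources: (i) sampling noise from the \emph{real} customers' clicks at level $\ell$ and at the lower levels $g < \ell$ feeding into it via the weighted average in Equations~\eqref{eq:eta_cr}--\eqref{eq:r_cr}; (ii) the bias from fake users who are ``charged'' to level $\ell$ (either directly at level $\ell$, or via the downweighted contribution $\frac{1}{2^{\ell}}$ of fake users at lower levels $g < \ell$); and (iii) nothing else — the weights are chosen so that the total effective count is $\hat\eta^{(\ell)}_{i,t}$. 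For the sampling-noise term, I would invoke a Hoeffding/Azuma-type bound for weighted sums of bounded martingale differences (the clicks are conditionally Bernoulli given the history and the ranking), giving a deviation of order $\sqrt{\frac{\log(4nT/\delta)}{\hat\eta^{(\ell)}_{i,t}}}$, which is absorbed into the $\sqrt{\tfrac32 \log(4nT/\delta)/\hat\eta^{(\ell)}_{i,t}}$ piece of $w^{(\ell)}_{i,t}$ (the constant $\tfrac32$ leaving room for the weighting). For the fake-user bias, conditioned on $\eventforc_2$, level $\ell$ sees at most $\log(2\level/\delta)+3$ fake users directly; a fake user at a lower level $g<\ell$ contributes weight at most $\frac{1}{2^\ell}\cdot\frac{1}{2^g}\cdot(\text{something})$ — actually, since \emph{all} fake users total $F \leq 2^{\ell^\star} \leq 2^\ell$ and each is downweighted by $\frac{1}{2^\ell}$, the lower-level fakes contribute at most $F/2^\ell \leq 1$ in aggregate to $\hat\eta^{(\ell)}_{i,t}$-weighted mass; combining, the total fake contribution to the numerator is at most $\log(2\level/\delta)+3+1 = \log(2\level/\delta)+4$, which is exactly the numerator of the second term of $w^{(\ell)}_{i,t}$. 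Dividing by $\hat\eta^{(\ell)}_{i,t}$ gives the $\frac{\log(2\level/\delta)+4}{\hat\eta^{(\ell)}_{i,t}}$ term.

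Having bounded a single $\P((\eventforc^{(\ell)}_{i,t})^c)$ by something like $\frac{\delta}{4nT\level}$ (using the concentration bound with the chosen confidence level and a peeling/union argument over the random value of $\hat\eta^{(\ell)}_{i,t}$, or a stopping-time / anytime martingale argument to avoid losing an extra $\log T$), I would take a union bound over $t \in [T]$, $i \in [n]$, and the at most $\level$ relevant levels to conclude $\P(\eventforc_1^c) \leq \frac{\delta}{2}$. Adding the two pieces yields $\P(\eventforc^c) \leq \delta$, as claimed. A minor care point is handling the $\hat\eta^{(\ell)}_{i,t} = 0$ case, where $w^{(\ell)}_{i,t}$ uses $\max(\hat\eta^{(\ell)}_{i,t},1)=1$ in the denominator and the event holds trivially since $\mu_i \in [0,1]$ and the window then has width at least $\sqrt{\tfrac32\log(4nT/\delta)} \geq 1$.

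The main obstacle I anticipate is the concentration step for the \emph{weighted} martingale sum in the presence of the cross-learning structure: the weights $\frac{1}{2^\ell}$ applied to lower-level observations are deterministic, but the number of observations $\eta^{(g)}_i$ at each level is itself random and adapted to the history, and the event $\event_{r,t}(\cdot)$ determining which products receive feedback depends on the algorithm's past (hence on past fake-user actions). Getting a clean anytime bound that holds uniformly over all $t$ without paying an extra logarithmic factor — and correctly accounting for the fact that the effective variance is controlled by $\hat\eta^{(\ell)}_{i,t}$ rather than the raw count — is the delicate part; I would handle it with a carefully constructed exponential supermartingale (Freedman-type inequality) tailored to the weighted sum, or alternatively by a union bound over dyadic values of $\hat\eta^{(\ell)}_{i,t}$ with the constant $\tfrac32$ in $w^{(\ell)}_{i,t}$ absorbing the resulting slack.
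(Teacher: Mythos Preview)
Your proposal is correct and follows essentially the same route as the paper: split the window $w^{(\ell)}_{i,t}$ into a sampling-noise term handled by weighted Hoeffding, plus a bias term of $(\log(2\level/\delta)+3) + F/2^{\ell} \leq \log(2\level/\delta)+4$ coming from fake users at level $\ell$ (bounded via $\eventforc_2$) and fake users at lower levels (at most $F$ total, downweighted by $2^{-\ell}\leq 1/F$). Two small remarks. First, your opening decomposition $\P(\eventforc^c) \leq \P(\eventforc_1^c) + \P(\eventforc_2^c)$ does not quite work as written, because your bias argument explicitly \emph{requires} $\eventforc_2$; the paper instead uses $\P(\eventforc^c) = \P(\eventforc_2^c) + \P(\eventforc_1^c \mid \eventforc_2)\,\P(\eventforc_2)$ and bounds $\P(\eventforc_1^c \mid \eventforc_2)$. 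Second, for the concentration step the paper does not use a martingale argument: it couples the real-customer clicks to two pre-drawn i.i.d.\ Bernoulli($\mu_i$) sequences (one for level $\ell$, one aggregated for levels $<\ell$) and then union-bounds over all deterministic pairs $(\eta_1,\eta_2)\in\{0,\dots,T\}^2$ of possible feedback counts; this is why the constant in the window is $3/2$ (Hoeffding gives $2\exp(-2\epsilon^2\hat\eta)=2(\delta/(4nT))^3$ per pair, and $(T{+}1)^2$ pairs times $n$ products times $\leq \level$ levels still sums to at most $\delta/2$). Your martingale/peeling alternative would also work, but note that peeling on $\hat\eta^{(\ell)}_{i,t}$ alone is not enough---you need to control both $\eta_1$ and $\eta_2$ separately (or use a genuine anytime martingale bound) since fixing only $\hat\eta$ does not fix the variance structure of the weighted sum.
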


The proof of the lemma is deferred to Appendix \ref{sec:proof:lem:even:E}. Here,  we sketch the main arguments. Note that by definition, under event $\eventforc$, event  $\eventforc_2$ which we analyzed in Lemma \ref{lem_maxfake_levels} also holds. More specifically, under event $\eventforc_2$, the effective fakeness budget for levels $\ell  \geq \ell^{\star}$ is $\log(2\frac{\level}{\delta})+3$. This, in turn, ensures that we have set the confidence intervals, i.e., $w^{(\ell)}_{i,t}$, to be large enough to account for the effective fakeness budget  that a learning level $\ell  \geq \ell^{\star}$ is exposed to. 
Using this observation,  we can establish a concentration result similar to Lemma \ref{lem:concent} to bound the probability of event $\eventforc_1$ conditioned on $\eventforc_2$. Recall that $\eventforc= \eventforc_1 \cap \eventforc_2$.
Finally note that in Lemma \ref{lem_maxfake_levels}, we show that event
$\eventforc_2$ also occurs with high probability. This leads to our result in Lemma \ref{lem:even:E}.

 Utilizing the regret decomposition in Equation~\eqref{eqn_regret_conditional_agnostic} and  event $\eventforc$, the rest of the proof consists of 
 three parts which we summarize in the following three lemmas.  As in Theorem~\ref{thm_far}, we also define constants $(\gamma_{j,i})_{j,i \in [n]}$ that capture the minimum number of plays for any pair of products so that the window size or confidence interval is smaller than the gap in rewards $\Delta_{j,i}$. Formally, for any given $j,i \in [n]$, we have that:
         \begin{equation}
        \label{eqn_gamma_agnostic}
    \gamma_{j,i} \triangleq \frac{64\log(4nT/\delta)}{\Delta^2_{j,i}}\,.
         \end{equation}

\begin{lemma}[Upper Bounding Loss for $\ell \geq \ell^{\star}$ under Event $\eventforc$]
\label{lem:parts1}
For any $j<i$, we have
\begin{align*}
       \P(\eventforc) \E\left[\sum_{t=1}^T \ind(\event_{r,t}(\pi_t(\p{j}) = i) \cap \ell_t \geq \ell^{\star}) \bigg| \eventforc \right] 
       \leq \frac{64\level\log(4nT/\delta)}{\Delta^2_{j,i}} \,,
    \end{align*}
    where event $\eventforc$ is defined in Definition \ref{def:event}, $\delta =\frac{1}{n^3T}$, and $\level =\log_2(T)$. 
\end{lemma}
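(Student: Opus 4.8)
The plan is to mirror the argument used for Theorem~\ref{thm_far}, but carried out \emph{level by level} for the high levels $\ell \ge \ell^{\star}$ and restricted to the favorable event $\eventforc$ of Definition~\ref{def:event}. First I would establish the FORC-analog of Lemma~\ref{lem_pairwise_edgeadd}: conditioned on $\eventforc$, for every level $\ell \ge \ell^{\star}$ the graph $G^{(\ell)}$ is never eliminated and never contains an erroneous edge $(\p{j},i)$ with $\mu_j > \mu_i$. The key structural observation is that, by the downward cross-learning rule (Step~\ref{step_addedge_forc}), every edge that ever enters $G^{(\ell)}$ for $\ell \ge \ell^{\star}$ was triggered by the test~\eqref{eq:alg:edge} being satisfied at some level $\ell' \ge \ell \ge \ell^{\star}$; since $\eventforc_1$ controls the cross-learned means $\hat r^{(\ell')}$ at \emph{all} levels $\ell' \ge \ell^{\star}$ simultaneously, the same calculation as in the proof of Lemma~\ref{lem_pairwise_edgeadd} shows such edges always point from the lower-$\mu$ product to the higher-$\mu$ one. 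Hence each $G^{(\ell)}$ with $\ell \ge \ell^{\star}$ is a DAG, so no cycle ever forms and no such graph is eliminated (Step~\ref{step_cycle_FORC}); in particular, whenever $\ell_t \ge \ell^{\star}$ the ranking in round $t$ is produced by \textsc{GraphRankSelect} applied to $(\etab^{(\ell_t)}, G^{(\ell_t)})$ itself (Step~\ref{step_if_eliminated}). I would also record the ``positive'' half: since $\hat\eta^{(\ell)}_i \ge \eta^{(\ell)}_i$ and $\log(2\level/\delta)+4 = O(\log(4nT/\delta))$, once $\eta^{(\ell)}_{i,t},\eta^{(\ell)}_{j,t} \ge \gamma_{j,i}$ with $\gamma_{j,i}$ as in~\eqref{eqn_gamma_agnostic} the window sizes $w^{(\ell)}_{i,t},w^{(\ell)}_{j,t}$ drop below $\Delta_{j,i}/4$, so under $\eventforc_1$ the test~\eqref{eq:alg:edge} fires and the correct edge $(i,\p{j})$ is present in $G^{(\ell)}$ from that round on.

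Next I would port Lemma~\ref{lem_dagprop} essentially verbatim: because \textsc{GraphRankSelect} is the same tie-breaking routine (prioritizing the smallest feedback count) and $G^{(\ell)}$ is a DAG for $\ell \ge \ell^{\star}$, if in a round with $\ell_t = \ell$ the routine places $i$ above $\p{j}$ for some $i > \p{j}$, then there is a product $\ki \le \p{j}$ with no edge $(i,\ki)$ in $G^{(\ell)}$, with $\eta^{(\ell)}_{i,t-1} \le \eta^{(\ell)}_{\ki,t-1}$, and---combining with the positive half above and $\mu_i < \mu_{\ki}$---with $\eta^{(\ell)}_{i,t-1} < \gamma_{\ki,i} \le \gamma_{\p{j},i}$. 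Running this through the pigeonhole step~\eqref{eqn_ordercatch} exactly as in Theorem~\ref{thm_far}, I conclude: on $\eventforc$, whenever $\event_{r,t}(\pi_t(\p{j})=i)$ occurs in a round with $\ell_t = \ell \ge \ell^{\star}$, we must have $\eta^{(\ell)}_{i,t-1} < \gamma_{\p{j},i}$.

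To finish, note that every occurrence of $\event_{r,t}(\pi_t(\p{j})=i)$ with $\ell_t = \ell$ is a sufficient condition for a feedback update on product $i$ at level $\ell$, so $\eta^{(\ell)}_i$ increments at each such occurrence; by the telescoping argument of Theorem~\ref{thm_far} this forces at most $\gamma_{\p{j},i}$ occurrences at level $\ell$, deterministically on $\eventforc$. Summing over the at most $\level$ levels $\ell \in \{\ell^{\star},\ldots,\level\}$ gives, on $\eventforc$,
$$\sum_{t=1}^{T}\ind\big(\event_{r,t}(\pi_t(\p{j})=i)\cap \ell_t \ge \ell^{\star}\big) \;\le\; \level\,\gamma_{\p{j},i} \;=\; \frac{64\,\level\,\log(4nT/\delta)}{\Delta^2_{j,i}}.$$
Taking the conditional expectation given $\eventforc$ and using $\P(\eventforc)\le 1$ yields the stated bound.

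The main obstacle, relative to the FAR proof, is the first step---verifying that downward cross-learning cannot inject a bad edge into a high-level graph and that upward cross-learning (which inflates $\hat\eta^{(\ell)}$ relative to $\eta^{(\ell)}$) does not break the edge-soundness calculation. The bookkeeping is delicate because the edge test at level $\ell$ uses the cross-learned statistics $(\hat r^{(\ell)},\hat\eta^{(\ell)})$ while \textsc{GraphRankSelect} uses the raw counts $\etab^{(\ell)}$, so one must keep the two consistent; the resolution is the pair of facts $\hat\eta^{(\ell)} \ge \eta^{(\ell)}$ and (via Lemma~\ref{lem_maxfake_levels}, embodied in $\eventforc_2$) that $w^{(\ell)}_{i,t}$ for $\ell \ge \ell^{\star}$ is already wide enough to absorb the effective fakeness budget plus the extra corruption induced by cross-learning. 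Once this is in hand, the remainder is a direct adaptation of Lemmas~\ref{lem_pairwise_edgeadd}--\ref{lem_dagprop} together with Lemma~\ref{lem:even:E}.
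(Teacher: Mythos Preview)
Your proposal is correct and follows essentially the same route as the paper's proof: establish the FORC analogs of Lemmas~\ref{lem_pairwise_edgeadd} and~\ref{lem_dagprop} at each level $\ell\ge\ell^{\star}$ under $\eventforc$ (the paper records these as Lemmas~\ref{lem_pairwise_edgeadd_agnostic} and~\ref{lem_dagprop_agnostic}), then apply the pigeonhole and telescoping argument level by level to obtain at most $\gamma_{j,i}$ occurrences per level and sum over $\level$ levels. Your explicit treatment of why downward cross-learning cannot inject a bad edge into a high-level graph (any edge appearing in $G^{(\ell)}$ with $\ell\ge\ell^{\star}$ was triggered at some $\ell'\ge\ell$, hence controlled by $\eventforc_1$) is a point the paper glosses over, and your observation that $\hat\eta^{(\ell)}\ge\eta^{(\ell)}$ is exactly what reconciles the edge test (which uses $\hat\eta$) with the tie-breaking in \textsc{GraphRankSelect} (which uses $\etab$)---the paper uses the same inequality in the final contradiction step.
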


Lemma \ref{lem:parts1} shows that under the good event $\eventforc$, the expected  total loss of our FORC algorithm due to misplacing product $i$ in position $j<i$ over all the rounds $t$ with $\ell_t\ge \ell^{\star}$ does not depend on the fakeness budget $F$ and is upper bounded by  $\frac{64\log(4nT/\delta)}{\Delta^2_{j,i}} \level = O(\frac{\log(nT)}{\Delta^2_{j,i}} \log(T))$.  To show this result, we first establish that under event $\eventforc$, at any level $\ell\ge \ell^{\star}$, (a) the product-ordering graph $G^{(\ell)}$ does not contain any erroneous edge and (b) the graph has 
the  correct edge between any two products $i$ and $j$ after obtaining enough feedback from them
i.e., when $\hat{\eta}^{(\ell)}_{i,t}, \hat \eta^{(\ell)}_{j,t} \geq \gamma_{j,i}$, where $\gamma_{j,i}$ is defined in Equation \eqref{eqn_gamma_agnostic}. We further show that the only reason for mistakenly placing product $i$ in position $j<i$ is lack of sufficient feedback.
 These results then allow us to bound $\E\left[\sum_{t=1}^T \ind(\event_{r,t}(\pi_t(\p{j}) = i) \cap \ell_t \geq \ell^{\star}) \big| \eventforc \right]$. In particular, we show that after obtaining $\gamma_{j,i}$ rounds of feedback on product $i$, under the event $\eventforc$ and learning level $\ell\ge \ell^{\star}$, event $\event_{r,t}(\pi_t(\p{j}) = i)$ does not happen, where $j<i$. 
Recall that for any $j<i$, event $\event_{r,t}(\pi_t(\p{j}) = i)$ holds when we place product $i$ in position $j$ and we receive feedback on product $i$ from a real customer.

\begin{lemma}[Upper Bounding Loss for $\ell < \ell^{\star}$ under Event $\eventforc$]
\label{lem:parts2} For any $j<i$, we have 
\begin{align*}
       \P(\eventforc)\E\left[\sum_{t=1}^T \ind(\event_{r,t}(\pi_t(\p{j}) = i) \cap \ell_t < \ell^{\star}) \Big| \eventforc\right] \leq
       jF\left(8n\gamma_{j,i} +9\gamma_{j,i} + 2T\delta\right)\,,
    \end{align*}
    where event $\eventforc$ is defined in Definition \ref{def:event}, $\delta =\frac{1}{n^3T}$, and $\gamma_{j,i}$ is defined in Equation \eqref{eqn_gamma_agnostic}. 
\end{lemma}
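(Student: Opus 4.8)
Throughout I would condition on the good event $\eventforc$ of Definition~\ref{def:event}; on this event the analysis behind Lemma~\ref{lem:parts1} shows that every graph $G^{(\ell)}$ with $\ell\ge\ell^{\star}$ is acyclic and free of erroneous edges, so in particular $G^{(\ell^{\star})}$ is \emph{never} eliminated. Consequently the set of eliminated levels is, at every round $t$, a prefix $\{1,\dots,m_t\}$ with $m_t\le\ell^{\star}-1$ and $m_t$ nondecreasing in $t$; hence whenever a level $\ell_t<\ell^{\star}$ is sampled, the graph actually fed to \textsc{GraphRankSelect} is $G^{(\ell'_t)}$ with $\ell'_t\triangleq\max(\ell_t,m_t+1)\in\{\ell_t,\dots,\ell^{\star}\}$ — either a still-alive low-level graph or $G^{(\ell^{\star})}$ itself.

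Fix a pair $j<i$ and let $\tau_{j,i}$ be the first round at which $G^{(\ell^{\star})}$ contains the correct edge $(i,j)$. The argument has two parts. First I would show that \emph{after} round $\tau_{j,i}$ the event $\event_{r,t}(\pi_t(\p{j})=i)\cap\{\ell_t<\ell^{\star}\}$ can no longer occur: by downward cross-learning (Step~\ref{step_addedge_forc}) the edge $(i,j)$ belongs to every non-eliminated graph $G^{(g)}$ with $g\le\ell^{\star}$ from round $\tau_{j,i}$ on, so any graph we might use either is acyclic and already rules out placing $i$ ahead of $j$ in \textsc{GraphRankSelect} (an outgoing edge of $i$), or it is a low-level graph that still carries the erroneous edge $(j,i)$, in which case the freshly propagated $(i,j)$ forms the $2$-cycle $i\to j\to i$, the graph is eliminated in Step~\ref{step_cycle_FORC}, and we fall back to a higher graph that still contains $(i,j)$; iterating, the fallback terminates at $G^{(\ell^{\star})}$, which contains $(i,j)$ and no cycle. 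Second, I would upper bound $\tau_{j,i}$ through upward cross-learning: since $\hat\eta^{(\ell^{\star})}_{i,t}\ge\eta^{(\ell^{\star})}_{i,t}+2^{-\ell^{\star}}\sum_{g<\ell^{\star}}\eta^{(g)}_{i,t}$ with $2^{\ell^{\star}}\le 2F$, once the lower levels have collectively gathered $\Omega(F\gamma_{j,i})$ feedbacks on $i$ and, symmetrically, on $j$, we get $\hat\eta^{(\ell^{\star})}_{i,t},\hat\eta^{(\ell^{\star})}_{j,t}\ge\gamma_{j,i}$, which by the choice of $\gamma_{j,i}$ in~\eqref{eqn_gamma_agnostic} forces $w^{(\ell^{\star})}_{i},w^{(\ell^{\star})}_{j}\le\Delta_{j,i}/4$; combined with event $\eventforc_1$ this makes the edge-adding test~\eqref{eq:alg:edge} succeed at level $\ell^{\star}$. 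The delicate point is that securing enough cross-learned feedback on the \emph{inferior} product $j$ — whose placement, and hence feedback rate, is itself controlled by the corrupted low-level graphs — must be handled with the FAR pigeonhole/chain argument of~\eqref{eqn_ordercatch} and Lemma~\ref{lem_dagprop}, now run simultaneously across all levels $\ell<\ell^{\star}$.

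It then remains to count the rounds with $t\le\tau_{j,i}$, $\ell_t<\ell^{\star}$, and $\event_{r,t}(\pi_t(\p{j})=i)$. Each such round is a low-level feedback on $i$ while $i$ sits above $j$; a naive count gives $O(F\gamma_{j,i})$, but the combinatorial structure inflates this. Invoking Lemma~\ref{lem_dagprop} for the used graph $G^{(\ell'_t)}$, placing $i$ above position $\p{j}$ forces a product $s\le\p{j}$ below $i$ and then some product $k\le s$ that is both edge-deficient toward $i$ and under-sampled relative to $i$; allocating the resulting $O(\gamma_{j,i})$ budget per candidate product $k$ (at most $n$ of them) and over the at most $j$ positions that the at most $F$ fake users can poison yields the $jF\cdot 8n\gamma_{j,i}$ term. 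The $jF\cdot 9\gamma_{j,i}$ term collects the residual ``insufficient feedback'' rounds in which the graph used is $G^{(\ell^{\star})}$ itself before $\tau_{j,i}$, and $jF\cdot 2T\delta$ absorbs the tiny probability — still present inside $\eventforc$ — that a low-level graph carrying the erroneous edge is used before elimination kicks in. Multiplying through by $\P(\eventforc)\le 1$ gives the stated bound.

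\textbf{Main obstacle.} The crux is the bookkeeping around elimination and fallback: because edges are never deleted, a corrupted low-level graph recovers only when the propagated correct edge collides with a stale erroneous edge and triggers elimination, after which the algorithm ranks with the \emph{sampled} level's counts $\boldsymbol{\eta}^{(\ell_t)}$ but a \emph{higher} level's graph — so the FAR ``necessary conditions for an incorrect ordering'' must be re-established for mismatched (graph, count) pairs, and the insufficient-feedback rounds at the fallback level must be shown disjoint from those already charged in Lemma~\ref{lem:parts1}. Obtaining a clean quantitative handle on $\tau_{j,i}$, which hinges on lower-bounding feedback on the badly-placed product $j$ while the low-level graphs are adversarially corrupted, is the technical heart and the place where the argument is most fragile.
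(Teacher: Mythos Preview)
Your plan has a structural gap in Part~1. Having the single edge $(i,j)$ in the graph used by \textsc{GraphRankSelect} only guarantees that product $j$ is ranked before product $i$, i.e.\ $\pi_t^{-1}(i)>\pi_t^{-1}(j)$. It does \emph{not} rule out the event $\pi_t(\p{j})=i$, which says product $i$ sits at \emph{position} $j$. For instance with $n=3$, $j=2$, $i=3$, a graph containing only the edge $(3,2)$ (and perhaps an erroneous edge pushing product $1$ down) can produce the ranking $(2,3,1)$, so $\pi_t(2)=3=i$ even though $(i,j)$ is present. To preclude $\pi_t(\p{j})=i$ you need product $i$ to have at least $j$ outgoing edges, so that $j$ distinct products are forced ahead of it and $\pi_t^{-1}(i)\ge j+1$. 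The paper builds exactly this: it proves by induction on $j$ that by a carefully defined round $\tjmax$, product $i$ has acquired $j$ outgoing edges in $G^{(\ell^{\star})}$ (the set $S_j$), and these propagate down. Your stopping time $\tau_{j,i}$ tied to the \emph{specific} edge $(i,j)$ is therefore the wrong object.

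This gap cascades into Part~2. You correctly identify that the hard step is lower-bounding cross-learned feedback on the ``other'' product (incidentally, $j<i$ means $\mu_j>\mu_i$, so $j$ is the \emph{superior} product, not inferior). But because fake users can push product $j$ to arbitrarily low positions at levels $\ell<\ell^{\star}$, there is no clean way to guarantee that $\hat\eta^{(\ell^{\star})}_{j}$ ever reaches $\gamma_{j,i}$ within the budget you state. The paper sidesteps this entirely: it never tracks feedback on the specific product $j$. Instead it defines $\sigma_t$ as the highest-ranked product outside the already-established edge set $S_{j-1}$; since $|S_{j-1}|=j-1$, this $\sigma_t$ is always at position $\le j$, so whenever a real customer examines product $i$ at position $j$ she necessarily examines $\sigma_t$ as well. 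A coupling lemma (Lemma~\ref{lem_agnostic_conditional}) then converts low-level feedback on $i$ into level-$\ell^{\star}$ feedback on $\sigma_t$ at a $2F$ exchange rate, and a second milestone $\tdelta$ triggers once \emph{some} product in $\G{\Delta_{j,i}}\setminus S_{j-1}$ (not necessarily $j$) accumulates $4\gamma_{j,i}$ samples at level $\ell^{\star}$, yielding the $j$-th outgoing edge from $i$. The $8n\gamma_{j,i}$ factor comes from summing over the at most $n$ candidates for $\sigma_t$, and the induction over $j$ produces the leading $jF$ factor. Your pigeonhole/chain sketch via Lemma~\ref{lem_dagprop} across mismatched (graph, count) pairs does not supply this mechanism.
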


The proof of Lemma \ref{lem:parts2} differs substantially from our previous analysis and crucially uses the downward and upward cross-learning mechanisms included in the FORC algorithm. We dedicate Section \ref{subsec:lem:parts2} to highlighting some of the main ideas. 
But first, we finish the proof of Theorem \ref{thm_forc} by stating an upper bound on the loss when event ${\eventforc}$ does not hold.

\begin{lemma}[Upper Bounding Loss under Event ${\eventforc}^c$]
\label{lem:parts3}
Let event $\eventforc^c$ be the complement of event $\eventforc$,  specified in Definition \ref{def:event}. We have 
\begin{align*}
      \P({\eventforc}^c)\sum_{\p{j}=1}^n \sum_{i=\p{j}+1}^n   \Delta_{j,i} \E\left[\sum_{t=1}^T \ind(\event_{r,t}(\pi_t(\p{j}) = i)  \Big| {\eventforc}^c\right] \leq \frac{1}{n^2} \,.
    \end{align*}

\end{lemma}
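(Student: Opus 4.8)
The plan is to mirror the treatment of the low-probability event in the proof of Theorem~\ref{thm:far}, specifically the chain of inequalities culminating in Equation~\eqref{eqn_regret_conditional}, since conditioning on $\eventforc^c$ contributes only a negligible amount to the regret. The only nontrivial input is Lemma~\ref{lem:even:E}, which gives $\P(\eventforc^c)\le \delta=\tfrac{1}{n^3T}$; everything else reduces to crude deterministic bounds on the summand that hold on every sample path and hence survive conditioning.

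First I would rewrite $\P(\eventforc^c)\,\E[\,\cdot \mid \eventforc^c]$ as the expectation of the relevant quantity restricted to those histories on which $\eventforc^c$ holds, so that it suffices to bound
$\sum_{\p{j}=1}^n\sum_{i=\p{j}+1}^n \Delta_{j,i}\sum_{t=1}^T \ind(\event_{r,t}(\pi_t(\p{j})=i))$
uniformly over all realizations (and over all fake-user policies $\bm{P}\in\mathcal{P}$), and then multiply the resulting bound by $\P(\eventforc^c)$. I would then discard the gaps via $\Delta_{j,i}\le 1$, and weaken the indicator using the fact that $\pi_t(\p{j})=i$ is a necessary condition for the event $\event_{r,t}(\pi_t(\p{j})=i)$, so that $\ind(\event_{r,t}(\pi_t(\p{j})=i))\le \ind(\pi_t(\p{j})=i)$.

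Next, interchanging the finite sums, I would observe that for a fixed position $\p{j}$ and round $t$, the ranking $\pi_t$ assigns exactly one product to position $\p{j}$, so at most one $i>\p{j}$ can satisfy $\pi_t(\p{j})=i$; hence $\sum_{i=\p{j}+1}^n \ind(\pi_t(\p{j})=i)\le 1$. Summing over the $n$ positions and the $T$ rounds gives the deterministic bound $nT$ on the inner double sum, valid on every sample path irrespective of the conditioning. Combining this with $\P(\eventforc^c)\le \delta=\tfrac{1}{n^3T}$ from Lemma~\ref{lem:even:E} yields $\P(\eventforc^c)\cdot nT\le \tfrac{nT}{n^3T}=\tfrac{1}{n^2}$, which is exactly the claimed bound; this together with Lemmas~\ref{lem:parts1} and~\ref{lem:parts2} and the decomposition in Equation~\eqref{eqn_regret_conditional_agnostic} completes the proof of Theorem~\ref{thm_forc}.

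There is essentially no substantive obstacle here. The only point requiring mild care is the measure-theoretic bookkeeping: ensuring that the product $\P(\eventforc^c)\,\E[\,\cdot\mid\eventforc^c]$ is correctly read as an expectation over the event $\eventforc^c$, and that the bound $nT$ on the counting quantity is genuinely pathwise (so it is unaffected by the conditioning and by the adversary's choice of $\bm{P}$). Apart from that, the argument is the same elementary counting step already used in Equation~\eqref{eqn_regret_conditional}.
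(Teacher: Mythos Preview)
Your proposal is correct and follows exactly the same argument as the paper's own proof, which explicitly states that the proof of Lemma~\ref{lem:parts3} ``uses the exact same steps as the derivation in~\eqref{eqn_regret_conditional}'' and is therefore omitted. Your sequence of bounds---$\Delta_{j,i}\le 1$, $\ind(\event_{r,t}(\pi_t(\p{j})=i))\le \ind(\pi_t(\p{j})=i)$, the pathwise bound $\sum_{i>\p{j}}\ind(\pi_t(\p{j})=i)\le 1$, and then $\P(\eventforc^c)\le \delta=\tfrac{1}{n^3T}$ from Lemma~\ref{lem:even:E}---matches the paper's derivation line for line, with the only change being the value of $\delta$.
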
  
The proof of Lemma  \ref{lem:parts3} uses the exact same steps as the derivation in \eqref{eqn_regret_conditional}---which establishes an upper bound  for the loss under event $\mathcal{E}^c$ in the proof of Theorem \ref{thm_far}---and is therefore omitted. Recall that since event $\mathcal{G}$ holds with probability at least $1-\delta$, we have that $\P({\eventforc}^c) \leq \delta = \frac{1}{n^3T}$. Putting the upper bounds established in the above three lemmas back into Equation \eqref{eqn_regret_conditional_agnostic},  we complete the proof of Theorem \ref{thm_forc} as follows:
\begin{align*}\reg_{T} 
& \le  F+   \sum_{\p{j}=1}^n \sum_{i=\p{j}+1}^n \P(\eventforc)\Delta_{j, i} \E\left[\sum_{t=1}^T  \ind(\event_{r,t}(\pi_t(\p{j}) = i) \cap \ell_t < \ell^{\star})\bigg | \eventforc\right] \\
&+ \sum_{\p{j}=1}^n \sum_{i=\p{j}+1}^n \P(\eventforc)\Delta_{j, i} \E\left[\sum_{t=1}^T \ind(\event_{r,t}(\pi_t(\p{j}) = i) \cap \ell_t \geq \ell^{\star})\bigg | \eventforc\right]\\
&+ \sum_{\p{j}=1}^n \sum_{i=\p{j}+1}^n  \P({\eventforc}^c)\Delta_{j, i}  \E\left[\sum_{t=1}^T \ind(\event_{r,t}(\pi_t(\p{j}) = i)  \big| {\eventforc}^c\right] \\
&\le F + \sum_{\p{j}=1}^n \sum_{i=\p{j}+1}^n \left( \frac{64\log(4nT/\delta)}{\Delta_{j,i}} (\level +9jF + 8njF)+ 2\delta jFT\right) + \frac{1}{n^2}
\\
&\le F + \sum_{\p{j}=1}^n \sum_{i=\p{j}+1}^n \frac{64\log(4nT/\delta)}{\Delta_{j,i}} (\level +9jF + 8njF)+ \delta n^3 FT + \frac{1}{n^2}\,.
\end{align*} 

Substituting $\delta = \frac{1}{n^3T}$, we get the desired regret bound:
\begin{align*}\reg_{T} &\le 2F + 1 + \sum_{\p{j}=1}^n \sum_{i=\p{j}+1}^n  \frac{64\log(4nT/\delta)}{\Delta_{j,i}} (\level +9jF + 8njF)\\
&= O\left( \left(n^2F+ \log(T)\right) \sum_{\p{j}=1}^n \sum_{i=\p{j}+1}^n  \frac{\log(nT)}{\Delta_{j,i}} \right)\,,
\end{align*}
where the last equation holds because $\level =\log_2(T)$.
\end{proof}

\subsection{Proof Ideas of Lemma \ref{lem:parts2}}
\label{subsec:lem:parts2}

 In Lemma \ref{lem:parts2}, we aim to bound $\E\left[\sum_{t=1}^T \ind(\event_{r,t}(\pi_t(\p{j}) = i) \cap \ell_t < \ell^{\star}) \big| \eventforc\right]$, which is the expected  number of rounds that a learning level smaller than $ \ell^{\star}$ is sampled and an inferior product $i$ is misplaced in position $j < i$ for a real customer, who in turn, provides feedback on that product. Similar to Lemma \ref{lem_dagprop},  we wish to show that such misplacement can only happen if we do not have enough samples from product $i$. 
To do so, we relate the number of samples from product $i$ in levels smaller than $ \ell^{\star}$ to its counterpart in level $ \ell^{\star}$---this is made possible since level $ \ell^{\star} = \ceil*{\log_2(F)}$ is sampled independently with probability at least $\frac{1}{2F}$ and the probability of sampling any lower level cannot exceed one. Further, the number of times we receive feedback on product $i$ is easier to characterize for level $\ell^*$ since the product-ordering graph corresponding to this level does not contain any incorrect edges. 

To that end, we consider an arbitrary instantiation of our algorithm and define
  two \emph{milestones} for   product $i$ in level $ \ell^{\star}$. We show that in any round $t$ that occurs after product $i$ reaches these  two milestones, the event $\event_{r,t}(\pi_t(\p{j}) = i)$ does not happen for any level $\ell_t < \ell^{\star}$.

\begin{definition}[First Milestone $\tj$]
Define $\tj$  as the  smallest round $t$  {in} which $\hat{\eta}^{(\ell^{\star})}_{i,t} \geq 4\gamma_{\p{j},i}$, where {$\hat{\eta}^{(\ell^{\star})}_{i,t}$ and }$\gamma_{j,i}$ are respectively defined in Equations  \eqref{eq:eta_cr} and \eqref{eqn_gamma_agnostic}. 
\label{defn_firstmilestone}
\end{definition}
\medskip

In words, $\tj$ denotes the first time that the number of cross-learned samples  for product $i$ in level $\ell^{\star}$ crosses $4\gamma_{\p{j},i}$. Thus, by definition, in round  $\tj$, $\hat{\eta}^{(\ell^{\star})}_{i,t}$ is upper bounded by $4\gamma_{\p{j},i}+1$, since it cannot increase by more than one in any round. Upward cross-learning (see definition of $\hat{\eta}^{(\ell^{\star})}_{i,t}$ in Step~\ref{step_upwardcl_forc} of Algorithm \ref{alg_cascading_agnostic})  then enables us to use this upper bound on $\hat{\eta}^{(\ell^{\star})}_{i,t}$ to establish an upper bound on the number of samples for product $i$ in lower levels.\footnote{{We formally establish the upper bound in Lemma \ref{lem:tjmax}.}} To see why, note that for any $t\in[T]$, we have 
\begin{align}
\sum_{\ell=1}^{\ell^{\star}-1}\eta^{(\ell)}_{i,t} \leq 2^{\ell^{\star}} \hat{\eta}^{(\ell^{\star})}_{i,t} \leq 2F \hat{\eta}^{(\ell^{\star})}_{i,t}\,,    \label{eqn_lem8_upwardcl}
\end{align}
where the first inequality follows from the definition of the cross-learning variables and the second one follows from the observation that $2^{\ell^{\star}} \leq 2F$. Furthermore, having a sufficient number of samples for product $i$ at level $\ell^{\star}$  (since $\hat{\eta}^{(\ell^{\star})}_{i,\tj} \geq 4\gamma_{j,i}$) is a prerequisite for adding (correct) edges to the product-ordering graph $G^{(\ell^{\star})}$ containing this product---this follows from arguments similar to those used in the proof of Lemma~\ref{lem:parts1}

Next, we proceed to introduce a second milestone whose definition relies on the construction of the product-ordering graph in level $\ell^{\star}$. In particular, for all $j \geq 1$, define $S_j \triangleq \{i_1, i_2, \ldots, i_{j}\}$ as the set of the first $j$ products (chronologically) that product $i$ {forms}  outgoing edges to in the product-ranking graph $G^{(\ell^{\star})}$. For consistency,  we also define $S_0 = \emptyset$. Next, for a given $i \in [n]$, define $\G{\Delta} = \{ k ~|~ \Delta_{k,i} \geq \frac{\Delta}{2}\}$---i.e., {this set includes any product} better than $i$ ($k < i$) whose  reward{-gap} to $i$ is at least half as much as the input parameter $\Delta$. We are now ready to define the second milestone. 
\medskip

\begin{definition}[Second Milestone $\tdelta$]
Define $\tdelta$ as the smallest round $t$ in which the following inequality holds:
\begin{align} \label{eq:condition:delta}
    \max_{k \in \G{\Delta_{j,i}} \setminus S_{j-1}}\left\{\eta^{(\ell^{\star})}_{k,t}\right\} \geq 4\gamma_{\p{j},i},
\end{align}
where $\gamma_{j,i}$ is  defined in Equation \eqref{eqn_gamma_agnostic}.\label{defn_secondmilestone}
\end{definition}

Just as $\tj$ represents the first time that we see $4\gamma_{\p{j},i}$ samples on product $i$, $\tdelta$ represents the first time that we see $4\gamma_{\p{j},i}$ samples on a product ``comparable'' to $j$; i.e., a product whose gap to product $i$ is at least half of the corresponding gap between product $i$ and product $j$. Our insistence on $4\gamma_{j,i}$ samples (as opposed to just $\gamma_{j,i}$) stems from this halving of the reward gap since $\gamma_{k,i}$ for any $k \in [n]$ is inversely proportional to the square of $\Delta_{k,i}$ as can be gleaned from Equation~\eqref{eqn_gamma_agnostic}.

Equipped with these two milestones, we finally define $\tjmax$ as the maximum of $\tj$ and $\tdelta$; that is, 
\begin{align}
    \label{eq:tjmax:def}
    \tjmax \triangleq \max\{\tj, \tdelta\}\,.
\end{align}
Finally, as we will show later, product $i$ has outgoing edges to at least $j-1$ products by round $\tjmaxminus{}$. Thus, the set $S_{j-1}$ is well-defined by round $\tjmaxminus{}$.
 
 Then, since $\G{\Delta_{j,i}}$ contains at least $j$ products ($[j] \subseteq \G{\Delta_{j,i}}$) and $|S_{j-1}| = j-1$, the set $\G{\Delta_{j,i}} \setminus S_{j-1}$ is non-empty, and hence $\tdelta$ is well defined; see Equation \eqref{eq:condition:delta}. 
 
Our high level argument is as follows: by round $\tjmax= \max\{\tj, \tdelta\}$,  i.e., after the two milestones, we have  ``enough'' crossed-learned samples from product $i$ in level $\ell^{\star}$ and another product that is better than $i$ but not already in $S_{j-1}$. Let us call such a product $\sigma$. Using the properties of  the product-ordering graph  $G^{(\ell^{\star})}$ (similar to Lemma~\ref{lem_pairwise_edgeadd}),  we can show  that  there is an edge  from $i$ to $\sigma$ at the end of round $\tjmax$. Downward cross-learning (Step~\ref{step_addedge_forc} of Algorithm \ref{alg_cascading_agnostic}) ensures that the same edge exists in all product-ordering graphs in lower levels at the end of round $\tjmax$. Consequently, for any level $\ell < \ell^{\star}$, product $i$ has (correct) outgoing edges to at least $j$ products (i.e., $S_{j-1} \cup \{\sigma\}$), all of which will be ranked above $i$ since our ranking decision follows the \Call{GraphRankSelect}{$\boldsymbol{\eta}^{(\ell)}_t, G$} function. This property implies  that after our two milestones, the event $\event_{r,t}(\pi_t(\p{j}) = i)$ does not occur.

\begin{lemma}[Properties of $\tjmax$ {under Event $\eventforc$}] \label{lem:tjmax}
Define $\eventz_{i,t}$ as the event that (a) the customer {in} round $t$ is real, and (b) we receive feedback on product $i$ in round $t$.  Then, 
for any $j\in \{1, \ldots, i-1\}$, we have
\begin{align*} \E\Big[\sum_{t=1}^{\tjmax}\ind(\event_{r,t}(\pi_t(\p{j}) = i) \cap \ell_t < \ell^{\star})\big| \eventforc\Big]&\le \E\Big[\sum_{t=1}^{\tjmax}\ind(\eventz_{i,t} \cap \ell_t < \ell^{\star})~|~\eventforc\Big] \\
&\leq jF\left(8n\gamma_{j,i} +9\gamma_{j,i} + 2T\frac{\delta}{\P(\eventforc)}\right)\,,\end{align*} 
where $\tjmax$ and $\gamma_{j,i}$ are respectively  defined in Equations  \eqref{eq:tjmax:def} and  \eqref{eqn_gamma_agnostic}, and $\delta = \frac{1}{n^3T}$. Furthermore,  for any instantiation,  conditional on $\eventforc$, at the end of round $\tjmax$, product $i$ has at least $j$ outgoing edges in graph $G^{(\ell^{\star})}$. 
\end{lemma}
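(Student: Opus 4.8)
First, the inequality $\E[\sum_{t=1}^{\tjmax}\ind(\event_{r,t}(\pi_t(\p{j})=i)\cap\{\ell_t<\ell^{\star}\})\mid\eventforc]\le\E[\sum_{t=1}^{\tjmax}\ind(\eventz_{i,t}\cap\{\ell_t<\ell^{\star}\})\mid\eventforc]$ is immediate from the definitions, since $\event_{r,t}(\pi_t(\p{j})=i)$ already requires that the round-$t$ customer is real and that product $i$ receives feedback in round $t$ -- exactly the two conditions defining $\eventz_{i,t}$ -- so the corresponding indicators obey a pointwise inequality. For the second inequality, I would note that whenever $\eventz_{i,t}$ occurs in a round with $\ell_t<\ell^{\star}$, the algorithm increments $\eta^{(\ell_t)}_i$ by one; hence $\sum_{t=1}^{\tjmax}\ind(\eventz_{i,t}\cap\{\ell_t<\ell^{\star}\})\le\sum_{\ell=1}^{\ell^{\star}-1}\eta^{(\ell)}_{i,\tjmax}\le 2F\,\hat\eta^{(\ell^{\star})}_{i,\tjmax}$, the last step being the upward-cross-learning bound \eqref{eqn_lem8_upwardcl}. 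So it remains to show that on $\eventforc\cap\{\tjmax\le T\}$ one has $\hat\eta^{(\ell^{\star})}_{i,\tjmax}=O(nj\,\gamma_{j,i})$, and to handle the event $\{\tjmax>T\}$ separately.

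\textbf{Controlling $\hat\eta^{(\ell^{\star})}_{i,\tjmax}$ on $\eventforc$.} This is the technical core, and I would argue by induction on $j$ (which also yields the ``Furthermore'' assertion). The workhorse is the level-$\ell^{\star}$ analogue of Lemma \ref{lem_pairwise_edgeadd}, established inside the proof of Lemma \ref{lem:parts1}: on $\eventforc$, $G^{(\ell^{\star})}$ never acquires an erroneous edge, and it does acquire the correct edge $(i,k)$ as soon as both $i$ and $k$ have at least $\gamma_{k,i}$ cross-learned samples at level $\ell^{\star}$. By the induction hypothesis for $j-1$, product $i$ already has outgoing edges to the $j-1$ products of $S_{j-1}$ by round $\tjmaxminus\le\tjmax$, so $S_{j-1}$ and the set $\G{\Delta_{j,i}}\setminus S_{j-1}$ (nonempty because $[j]\subseteq\G{\Delta_{j,i}}$) are well defined. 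I would split $[1,\tjmax]$ at $\tj$: since $\hat\eta^{(\ell^{\star})}_{i,\cdot}$ rises by at most one per round and $\tj$ is the first round it reaches $4\gamma_{j,i}$, we get $\hat\eta^{(\ell^{\star})}_{i,\tj}\le 4\gamma_{j,i}+1$; on the window $(\tj,\tjmax]$ (empty unless $\tdelta>\tj$), product $i$ already satisfies $\hat\eta^{(\ell^{\star})}_{i,t}\ge 4\gamma_{j,i}\ge\gamma_{k,i}$ for every $k\in\G{\Delta_{j,i}}$, so $G^{(\ell^{\star})}$ gains the edge $(i,k)$ -- and, by downward cross-learning, so does every lower graph -- the instant such a $k$ reaches $\gamma_{k,i}$ cross-learned samples, after which $i$ can never again be ranked above $k$. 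Any further feedback on product $i$ (at level $\ell^{\star}$ or, through its downward-propagated edges, below it) can therefore be charged to comparable products in $\G{\Delta_{j,i}}$ that have not yet accumulated their at most $4\gamma_{j,i}$ samples; as there are at most $n$ such products, this bounds the growth of $\hat\eta^{(\ell^{\star})}_{i,\cdot}$ on $(\tj,\tjmax]$ and gives $\hat\eta^{(\ell^{\star})}_{i,\tjmax}=O(nj\,\gamma_{j,i})$ on $\eventforc\cap\{\tjmax\le T\}$.

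\textbf{Structural claim and the slack term.} For the ``Furthermore'' part, consider round $\tjmax=\max\{\tj,\tdelta\}$: the product $\sigma\in\G{\Delta_{j,i}}\setminus S_{j-1}$ witnessing \eqref{eq:condition:delta} has $\hat\eta^{(\ell^{\star})}_{\sigma,\tjmax}\ge\eta^{(\ell^{\star})}_{\sigma,\tdelta}\ge 4\gamma_{j,i}\ge\gamma_{\sigma,i}$ (using $\Delta_{\sigma,i}\ge\Delta_{j,i}/2$), and also $\hat\eta^{(\ell^{\star})}_{i,\tjmax}\ge 4\gamma_{j,i}\ge\gamma_{\sigma,i}$; hence on $\eventforc$ the edge $(i,\sigma)$ lies in $G^{(\ell^{\star})}$ by the end of round $\tjmax$, which with the $j-1$ edges to $S_{j-1}$ gives product $i$ at least $j$ outgoing edges and closes the induction. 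Finally, to dispose of $\{\tjmax>T\}$: there the sum is at most $T$ deterministically, and a Chernoff bound on the number of rounds in which level $\ell^{\star}$ is sampled (it is drawn with probability $2^{-\ell^{\star}}\ge\tfrac{1}{2F}$ independently each round) shows $\P(\tjmax>T)=O(\delta)$, so -- after dividing by $\P(\eventforc)$ to convert the unconditional estimate into an $\eventforc$-conditional one -- this contributes the additive term $2T\delta/\P(\eventforc)$. Putting the pieces together, $2F(4\gamma_{j,i}+1)\le 9F\gamma_{j,i}$ from $[1,\tj]$, $2F\cdot O(nj\gamma_{j,i})=8njF\gamma_{j,i}$ from $(\tj,\tjmax]$, and the slack, gives $jF\big(8n\gamma_{j,i}+9\gamma_{j,i}+2T\delta/\P(\eventforc)\big)$. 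The step I expect to be the main obstacle is the charging argument on $(\tj,\tjmax]$: because the cascade feedback is random, feedback on product $i$ need not be accompanied by feedback on the lower-ranked comparable products, so one has to leverage both that $G^{(\ell^{\star})}$ contains no erroneous edges and the under-sampled-first tie-breaking of \textsc{GraphRankSelect} to prevent product $i$ from being placed above a well-sampled comparable product too many times.
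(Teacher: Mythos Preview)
Your treatment of the first inequality, the structural ``Furthermore'' claim, and the bound on the $[1,\tj]$ window via upward cross-learning are all fine and match the paper. The gap is exactly where you flagged it: the charging argument on $(\tj,\tjmax]$ that is supposed to give a pathwise bound $\hat\eta^{(\ell^{\star})}_{i,\tjmax}=O(nj\gamma_{j,i})$ on $\eventforc$ does not go through. The difficulty is that at levels $\ell<\ell^{\star}$ the graph $G^{(\ell)}$ may contain \emph{incorrect} edges pointing into $i$ from products in $\G{\Delta_{j,i}}\setminus S_{j-1}$; in that case $i$ sits at position exactly $j$ while every comparable product you would like to charge to is ranked \emph{below} $i$, so feedback on $i$ yields no feedback on any such product. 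The under-sampled-first tie-breaking does not help here because \textsc{GraphRankSelect} ties are broken using $\boldsymbol\eta^{(\ell_t)}$, not $\boldsymbol\eta^{(\ell^{\star})}$, and the ``no erroneous edges'' guarantee holds only at level $\ell^{\star}$. In fact your route is circular: the lower-level part of $\hat\eta^{(\ell^{\star})}_{i,\tjmax}$ equals $\tfrac{1}{2^{\ell^{\star}}}\sum_{g<\ell^{\star}}\eta^{(g)}_{i,\tjmax}$, which is (up to the factor $2^{-\ell^{\star}}$) the very quantity you are trying to bound, and $2F\cdot 2^{-\ell^{\star}}\approx 1$, so the inequality $\sum_{g<\ell^{\star}}\eta^{(g)}_{i,\tjmax}\le 2F\hat\eta^{(\ell^{\star})}_{i,\tjmax}$ gives nothing new.

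The paper resolves this not by a pathwise charge but by a probabilistic \emph{change-of-level} coupling (its Lemma~\ref{lem_agnostic_conditional}). Conditioning on $\mathcal H_{t-1}$ (equivalently on $\eventforc_{t-1}$), the level $\ell_t$ is still an independent draw; because $i$ carries edges to $S_{j-1}$ at all levels, the probability a real customer reaches $i$ when $\ell_t<\ell^{\star}$ is at most $Q(j)\prod_{s\in S_{j-1}}(1-\mu_s)$, while at level $\ell^{\star}$ the highest-ranked product $\sigma_t\notin S_{j-1}$ is at position $\le j$ and is reached with probability at least the same quantity. Combined with $\P(\ell_t<\ell^{\star})\le 2F\,\P(\ell_t=\ell^{\star})$, this converts lower-level feedback on $i$ into level-$\ell^{\star}$ feedback on $\sigma_t$, which is then bounded via Lemma~\ref{lem:sigma_t} by $4n\gamma_{j,i}$. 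The paper's three-term decomposition also explicitly uses the quantitative induction hypothesis on $[1,\tjmaxminus]$, which your outline does not. Finally, the additive $2T\delta/\P(\eventforc)$ does \emph{not} come from bounding $\P(\tjmax>T)$ via Chernoff; it arises because the coupling lemma is stated conditional on $\eventforc_{t-1}$ (so that $\ell_t$ is still random), and passing back to $\eventforc$-conditional expectations costs $\P(\eventforc^c)/\P(\eventforc)\le\delta/\P(\eventforc)$ per round.
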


The proof of Lemma \ref{lem:tjmax} is presented in Appendix \ref{sec:proof:tjmax}. The first inequality follows from the fact that  
 $\mathcal{A}_{r,t}(\pi_t(j) = i) \leq \eventz_{i,t}$ since the latter captures a superset of events. The second inequality is more challenging to show. One of the main challenges in proving this inequality involves upper-bounding the number of samples we receive for product $i$ (which helps us upper bound $\sum \ind(\eventz_{i,t}$)) when the learning level is smaller than $\ell^{\star}$. The difficulty here stems from the fact that, due to the frequent exposure of lower levels to  fake users, their corresponding graphs may have incorrect edges. Consequently, in those levels, product $i$ may be placed in a disproportionately high rank which, in turn, results in receiving `too many' samples for this product. To control for the occurrence of this scenario, we have to show that every time $\eventz_{i,t}$ is true for a level $\ell < \ell^{\star}$, some progress is made at level $\ell^{\star}$ towards adding a correct outgoing edge from $i$---we know that such an edge will be transferred to the lower levels via downward cross-learning. Our main technique here is a mapping that connects receiving feedback on product $i$ at a lower level to receiving feedback on a product $\sigma \notin S_{j-1}$ at level $\ell^{\star}$; see Lemma \ref{lem_agnostic_conditional} in the Appendix. Secondly, we also leverage  the upward cross-learning argument from Equation~\eqref{eqn_lem8_upwardcl} to show that feedback on product $i$ at a lower level contributes to adding a correct outgoing edge containing this product in the graph $G^{(\ell^{\star})}$.

Finally, we show how we use Lemma \ref{lem:tjmax} to complete the proof of Lemma \ref{lem:parts2}.
{Fixing $j < i$, we divide the rounds into two groups based on $\tjmax$ as follows} 
\begin{align} \nonumber
    \E\Big[\sum_{t=1}^T \ind(\event_{r,t}(\pi_t(\p{j}) = i) \cap \ell_t < \ell^{\star}) \big| \eventforc\Big]& =  \E\Big[\sum_{t=1}^{\tjmax}\ind(\event_{r,t}(\pi_t(\p{j}) = i) \cap \ell_t < \ell^{\star})\big| \eventforc\Big] \\
    &+  \E\Big[\sum_{t=\tjmax+1}^T\ind(\event_{r,t}(\pi_t(\p{j}) = i) \cap \ell_t < \ell^{\star})\big| \eventforc\Big]\,, \notag 
\end{align}
where by Lemma \ref{lem:tjmax},  the first term, i.e., $\E\Big[\sum_{t=1}^{\tjmax}\ind(\event_{r,t}(\pi_t(\p{j}) = i) \cap \ell_t < \ell^{\star})\big| \eventforc\Big]$,  is upper bounded by  $jF\left(8n\gamma_{j,i} +9\gamma_{j,i} + 2T\frac{\delta}{\P(\eventforc)}\right)$. The second term, i.e., $\E\Big[\sum_{t=\tjmax+1}^T\ind(\event_{r,t}(\pi_t(\p{j}) = i) \cap \ell_t < \ell^{\star})\big| \eventforc\Big]$, is zero because by Lemma \ref{lem:tjmax},
 after round $\tjmax$, product $i$ has at least $j$ outgoing edges in $G^{(\ell^{\star})}$. Therefore, product $i$ has at least $j$ outgoing edges in $G^{(\ell)}$ for all $\ell \leq \ell^{\star}$ too. As a result, the algorithm will never place this product at position $j$ when $\ell_t < \ell^{\star}$. Note that this claim is valid even if the graph $G^{(\ell_t)}$ itself is eliminated. This is because in any round $t$, (a) if graph $G^{(\ell_t)}$ is eliminated, 
 our algorithm would select a ranking $\pi_t$ based on a graph $G^{(\ell)}$, where $\ell$ is the smallest index for which the corresponding graph is not eliminated, and (b) conditioned on event  {$\eventforc$}, graph $G^{(\ell^{\star})}$ is not eliminated because it does not have any erroneous  edges; see the proof of Lemma \ref{lem:parts1}.  \hfill \Halmos
 
\section{Numerical Studies}\label{sec:numerics} 
In the previous sections, we proposed two  algorithms for learning how to rank  products in the presence of  fake users: FAR, which incorporates the fakeness budget $F$ into its design, and FORC, which learns the optimal ranking even without knowing $F$. In this section, we evaluate the performance of these two algorithms in a synthetically generated setting. Simulating our algorithms on this synthetic data serves the following purposes: (a) it provides insights into the performance of our two algorithms on a more average, real-world inspired setting, as opposed to a worst-case bound, (b) it allows us to test these algorithms  against a benchmark algorithm, namely the popular UCB algorithm\footnote{In particular, we use an adaptation of the UCB algorithm that has been proposed for the ranking problem by \cite{kveton2015cascading}.}, and (c) it provides an opportunity to assess the value of knowing the fakeness budget $F$ in advance. Additionally, we use the setup presented below to further illustrate the inner workings of the FORC algorithm and how these carefully designed steps lead to a low regret.

\subsection*{Setup and Parameter Choices}
We begin by describing our model primitives for the underlying product ranking problem, as well as the customer behavior, and fake users. We consider a cascading bandits scenario with $n=10$, where all customers only examine the products in the first $k= 4$ positions  (i.e., $q_j = 0$ for all $j \in [n-1] \setminus \{k\}$ and $q_k = 1$), and exit if none of these elicit a click. We believe that such a cascade model reasonably approximates customers who pay attention to the products in top ranks but rarely pursue later products. Second, by adopting the aforementioned strict  cascade model, we can compare and contrast our algorithms  against a benchmark that was developed particularly for this setting (albeit without fake users), namely the Cascade UCB algorithm from~\cite{kveton2015cascading}.

Conditional upon viewing a product $i \in [n]$, a real customer clicks on it with probability $\mu_i$. These probabilities are chosen uniformly at random from the range $[0.02,0.3]$ with the added constraint that for any two products $i,i'$, the click probabilities satisfy $|\mu_i - \mu_{i'}| \geq 0.02$---this avoids degenerate instances where the products are too similar and thus, their exact order is mostly irrelevant from customers' perspectives. As in Section~\ref{sec:model}, we label the products in the decreasing order of their click probabilities.

We assume that the fake user is particularly interested in boosting the click probabilities of two sub-optimal products, in this case products six and seven. In the absence of the fake user, these products would not be placed in the top four positions  due to their low popularity, and thus they lose out on many potential customers due to position bias. The fake user adopts a two-pronged strategy similar to what we outlined in Theorem~\ref{thm_ucb_bad}. In the first prong,  the fake user tries to lower the estimated click probability for competitors (e.g., products three and four) by avoiding clicks entirely
and in the second prong, the fake user only clicks on the corresponding product that they seek to boost (product six or seven). We note that the fake user never examines products beyond position $k=4$, so that the platform cannot distinguish them from real customers.

We consider a time horizon of $T=2 \times 10^6$ users with the total fakeness budget given by $F = 14 \sqrt{T} \approx 20,000$.  Fake users are predominantly concentrated at the early stages of the algorithm. Specifically, until 
reaching the limit on the number of fake users, $F$, each initial user is fake with probability $\frac{3}{4}$ and real otherwise. This modeling choice captures situations where the retailer faces a new set of products and is unsure of their relative popularity, and some of these product manufacturers hire fake users at the early stages to boost their visibility.

Given this setup, we evaluate both the FAR and FORC algorithms along with the Cascade UCB algorithm from~\cite{kveton2015cascading}. Our choice of UCB as a benchmark is well-motivated given the popularity of the algorithm and its variants in both theory and  practice, and its optimal performance across a wide range of environments (e.g., see~\cite{auer2000using, garivier2011kl, balseiro2019contextual}). For all three algorithms, we set the parameter $\delta = 0.02$ and the window size to be $\sqrt{\frac{\log(2nT/\delta)}{\eta}} + \frac{\tilde{F}}{\eta}$, where $\eta$ is the number of times we receive feedback on a given arm (product). Further (a) $\tilde{F} = 0$ for Cascade UCB since it does not account for fake users, (b) $\tilde{F} = \frac{F}{2}$ for FAR, and (c) $\tilde{F} = \frac{1}{2}\log(\frac{2L}{\delta})$ for each level in FORC where $L = \log_2(T)$.\footnote{We set the first term in the window size to be identical for all three algorithms for the sake of consistency. Further, as is common practice in the literature (e.g., see~\cite{auer2000using}), we set the confidence intervals to be slightly smaller than what is required for our theory to ensure faster convergence. Despite this, we note that both our algorithms converge to the optimal ranking in all of the instances we simulate.} 

\subsection*{Results and Discussion}
Figure~\ref{fig:regret_main} presents the regret of each of the three implemented algorithms in comparison to the optimal ranking $\pi^{\star} = \{1,2,3,4\}$ for $100$ simulations with the parameter choices mentioned above.\footnote{Note that the ranking of products beyond the fourth position does not matter, and hence, in the optimal ranking,  $\pi^{\star} = \{1,2,3,4\}$, we assign only the top four products.} Our simulations lead to two core findings. First, Cascade UCB incurs linear regret for the type of instances we study while both of our methods have regret that is sublinear in the time horizon $T$. In particular, we observe that Cascade UCB quickly converges to a sub-optimal ranking that places products $\{6,7\}$ at positions $j \leq 4$; upon convergence it continuously incurs regret until the end of the time horizon as this ranking appeals to fewer customers than $\pi^{\star}$. In contrast, our algorithms  are more conservative in the early rounds but eventually converge to $\pi^{\star}$. This can be verified by observing in Figure~\ref{fig:regret_main} that in the initial rounds, Cascade UCB outperforms both FAR and FORC as the former rapidly converges to a ranking (albeit a sub-optimal one) while the latter algorithms are still learning the correct product-ordering graph. 
This numerical result shows that the poor performance of Cascade UCB is not limited to a worst-case instance (as illustrated in Theorem~\ref{thm_ucb_bad}) and can indeed occur for more realistic, randomly generated instances.

\begin{figure}
    \centering
    \includegraphics[width=0.85\textwidth]{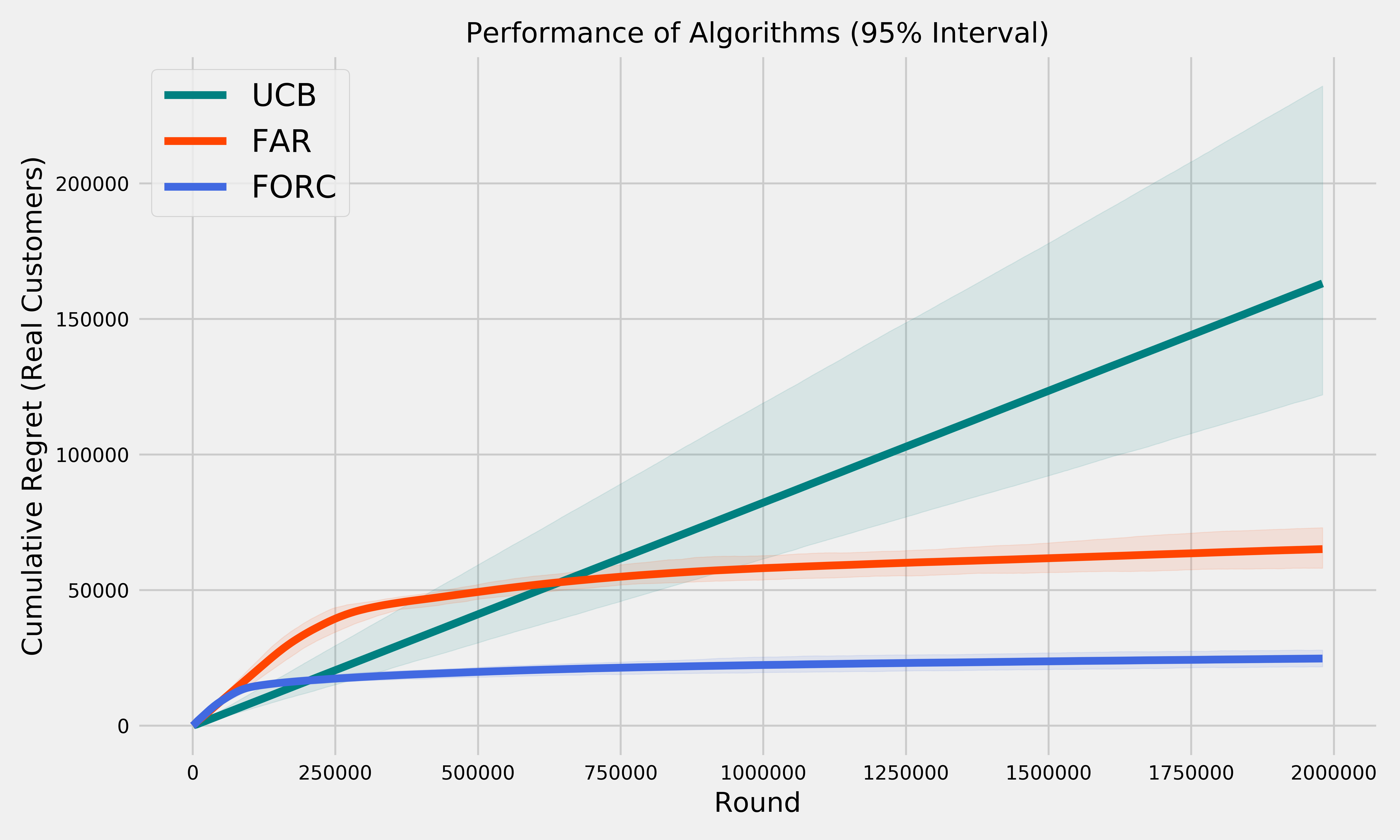}
    \caption{Regret incurred by Cascade UCB (UCB), FAR, and FORC in comparison to the optimal ranking $\pi^{\star}$. We only calculate the reward (and therefore, regret) stemming from clicks by real customers. The shaded regions mark the $95\%$ performance interval---i.e., in $95\%$ of our simulations, the regret of each algorithm lies within the corresponding shaded region. }
    \label{fig:regret_main}
\end{figure}

\begin{figure}[hbtp]
\centering
\includegraphics[scale=0.8]{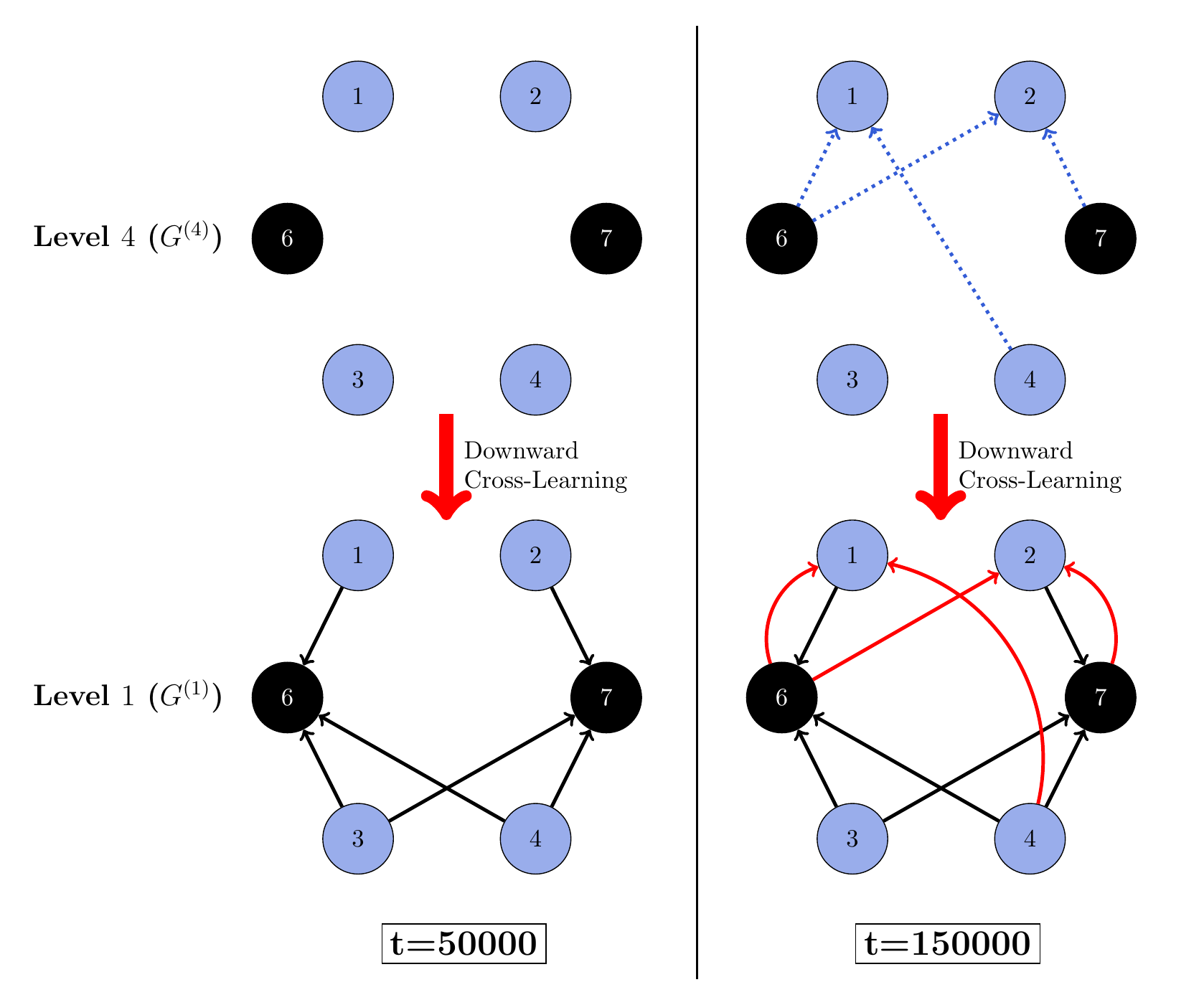}
\caption{The state of the product-ordering graphs at levels one ($G^{(1)}$) and four ($G^{(4)}$) in two different rounds ($50,000$ and $150,000$) during a single run of the FORC algorithm. The nodes in black represent the products that fake users promote (products six and seven). The black edges denote the incorrect edges added due to fake users' actions. The red edges are those that are transmitted from level four to one by downward cross-learning; the correct edges in level four are displayed in blue dotted lines. Some edges in these graphs have been intentionally omitted for the sake of clarity. At $t=150,000$, level one has been eliminated as $G^{(1)}$ contains at least one cycle (e.g., $1 \rightarrow 6 \rightarrow 1$). \label{fig_forc_visualization}}
\end{figure}

Second, and surprisingly, the regret achieved by our FORC algorithm that is completely oblivious to the fakeness budget $F$ is better than the regret of the FAR algorithm that uses knowledge of $F$. On one hand, one could take this finding as indicative of the divergence between a worst-case bound (in Theorem~\ref{thm_forc}) and the regret incurred on a more realistic instance. Yet, we believe this highlights the salience of multi-level learning and cross-learning in achieving superior performance. We also find this numerical observation practically appealing as knowing $F$ in advance may not be feasible for many ranking platforms. 
Next, we provide intuition on why FORC outperforms FAR for this class of instances.

Let us first focus our attention on FORC. Figure~\ref{fig_forc_visualization} provides a representative snapshot of the product-ordering graphs for levels one and four at two different points in time ($t=50,000$ and $t=150,000$). 
We note that by round $50,000$, most likely, all the fake users have already arrived. Since level one is selected with probability at least one-half, it witnesses a majority of the fake users, which in turn results in the addition of  incorrect edges. For instance, at $t=50,000$, based on the product-ordering graph of level one, our algorithm would always place products six and seven in the top positions when level one is selected as the learning level. Given that this does not coincide with the optimal ranking $\pi^{\star}$, the FORC algorithm incurs high regret early in the process. Level four however, has not collected enough samples up to round $50,000$ to make edge determinations (as evident in Figure~\ref{fig_forc_visualization}).

On the other hand, by round $150,000$, level four---having witnessed fewer fake users---has already correctly determined the superiority of product one over six. Due to downward cross-learning, this edge is then transferred to level one, leading to a cycle in $G^{(1)}$ which in turn results in the elimination of this product-ordering graph (and potentially those of levels two and three).  Following the elimination of the lower levels' graphs, FORC quickly converges to the optimal ranking as no fake users arrive after this point. Finally, we also highlight the importance of upward cross-learning here: given that level four is  chosen with low probability (i.e., $\frac{1}{16}$), it is unlikely that it would receive sufficient feedback on products one and six 
by round $150,000$. However, after the arrival of all fake users (which occurs before round $50,000$), level one receives a large number of samples corresponding to these products, all of which are from real users. Upward cross-learning enables FORC to successfully use these samples to derive a tighter estimate for the given products at level four, which 
expedites the formation of the correct edge between products one and six, eventually leading to the elimination of the product-ordering graph at level one.

In contrast, the FAR algorithm does not utilize multi-level learning and takes much longer to add the correct edges to its product-ordering graph, and consequently converge to the optimal ranking. Specifically, the actions of the fake users lead the algorithm to initially overestimate the popularity of products six and seven, and underestimate that of other products; the algorithm then requires a large number of samples from real customers to recover from this manipulation and identify, for example,  that product one dominates product six.  Arguably, FAR's slower convergence compared to FORC could be partially  attributed to its large window size. However, due to this larger window size, FAR never includes incorrect edges in its product-ordering graph, unlike Cascade UCB. Further, additional numerical analyses (omitted for the sake of brevity) highlight the importance of this conservative approach as lowering the window size increases the possibility that FAR converges to a sub-optimal ranking.

We complete this section by discussing the generalizability of these results. Although our observations pertain to a specific simulation setup, we believe that these findings are broadly applicable to a wide range of scenarios. First, the restricted  Cascade model (customers exit at position $k$) is not essential for our results as the same phenomena can be recreated even when the exit probabilities are non-zero across all $n$ positions. That is, once the fake user suppresses the empirical rewards of certain products, UCB and other traditional learning algorithms would place these at lower positions, and the ensuing position bias would lead to fewer clicks, and slow recovery. Second, in practice, one would expect the click probabilities to be smaller and closely clustered. Such a scenario would actually favor fake users as the smaller reward gap (i.e., $\Delta_{j,i}$) implies that fewer fake clicks are needed to boost the position of an inferior product. At the same time, there is a trade-off involved as the resulting ranking would not be too sub-optimal (due to products' rewards being close to each other), and hence a longer time horizon is required to achieve the same regret levels as in Figure~\ref{fig:regret_main}. Finally, altering the arrival and behavior of fake users may affect our results, e.g., all three algorithms would perform better if the fake users always click on products and arrive later in time. In that sense, our results could be interpreted as arising from the adversarial actions of fake users---this provides valuable insights as in reality, one would expect the fake users to be adaptive and employ whatever strategy provides them with the maximum benefit. 

\section{Concluding Remarks} \label{sec:conclude}
As the volume and granularity of available data increases
at an unprecedented rate, many platforms rely on data-driven algorithms to optimize their operational decisions. Moving to a data-centric environment, however, can put online platforms in a vulnerable situation when the generated data is prone to manipulation. Indeed, as we show in this work, popular learning strategies such as those based on upper confidence bound ideas are not particularly robust to fake users, whose actions may mislead the algorithm to make poor decisions. In the context of product ranking specifically, such sub-optimal decisions may result in the most visible positions being occupied by unpopular products, which in turn, can significantly hurt customer engagement and other metrics of interest. Motivated by these challenges, we develop new  algorithms that are robust to the actions of fake users and converge to the optimal product ranking, even when we are completely oblivious to the identity and number of fake users. While many recent works (e.g., see~\cite{ursu2016power}) have explored how position bias can prevent platforms from accurately inferring customer preferences, ours is among the first to highlight how sellers can exploit this in their favor by employing fake users, and develop constructive solutions for preventing such a situation.

At a high level, our work presents a number of insights on how to design methods for uncertain environments to guarantee robustness in the face of manipulation. These include: (a) being more conservative in inferring key parameters and changing decisions based on limited data; (b) employing parallelization and randomization to limit the damage caused by fake users; and (c) augmenting a conservative approach via cross-learning.  
We believe that the ideas proposed in this work can serve as a starting point for designing robust data-driven algorithms to tackle other operational challenges. 

Our work opens up a number of avenues for future investigation. A natural direction is to pursue the design of algorithms resilient to manipulation for alternative customer choice models and objective functions. While the exact details of the algorithm may be context specific, as stated earlier, the general insights provided in this work could be valuable in other settings as well. More generally, modeling other channels through which fake users can manipulate a platform's ranking algorithm (e.g., fake orders, fake reviews) and developing learning algorithms that ensure robustness to multiple sources of manipulation is a key challenge for many platforms. In this regard, developing prescriptive solutions for product ranking in presence of fake users, as we do in this paper, could complement the recent body of work that has analyzed this problem from a more descriptive or passive angle~\citep{luca2016fake,ivanova2017can,jin2019brush}.

\bibliographystyle{informs2014} 

\bibliography{sample-bibliography}

\medskip

\newpage

\begin{APPENDICES}

\section{Model: Alternative Notions of Regret}
\label{app:regret}
In our notion of regret, we only consider  real customers. Recall that  for any fixed policy ${\bm P}= (P_t)_{t \in [T]}$ followed by the fake users, we define
$\reg_{T}(\bm{P}) ~=~  \E_{\mathcal{H}_T(\pi^{\star})}\Big[\sum_{t=1}^T \mathcal{C}_{r,t}(\pi^\star)\Big]- \E_{\mathcal{H}_T(\pi_t)}\Big[\sum_{t=1}^T \mathcal{C}_{r,t}(\pi_t) \Big]$. In this expression, the realization of the histories depend on the randomness in the clicks and exit positions of the customers as well as the fake users (if $\bm{P}$ is not deterministic), and any randomness in the algorithm itself. 
 The relationship between our definition of regret and one where fake users are included is formalized below:
\begin{align}
\reg_{T}(\bm{P}) &= \E_{\mathcal{H}_T(\pi^{\star})}\Big[\sum_{t=1}^T \mathcal{C}_{r,t}(\pi^\star)\Big]- \E_{\mathcal{H}_T(\pi_t)}\Big[\sum_{t=1}^T \mathcal{C}_{r,t}(\pi_t) \Big] \notag\\
& \geq \E_{\mathcal{H}_T(\pi^{\star})}\Big[\sum_{t=1}^T \left(\mathcal{C}_{r,t}(\pi^\star) + \mathcal{C}_{f,t}(\pi^\star) \right)\Big] - \E_{\mathcal{H}_T(\pi_t)}\Big[\sum_{t=1}^T \left(\mathcal{C}_{r,t}(\pi_t) + \mathcal{C}_{f,t}(\pi_t)\right)\Big]\notag \\
&- \E_{\mathcal{H}_T(\pi^{\star})}\Big[\sum_{t=1}^T\ind(\text{user $t$ is fake})\Big] \notag\\
& \geq \E_{\mathcal{H}_T(\pi^{\star})}\Big[\sum_{t=1}^T \left(\mathcal{C}_{r,t}(\pi^\star) + \mathcal{C}_{f,t}(\pi^\star) \right)\Big] - \E_{\mathcal{H}_T(\pi_t)}\Big[\sum_{t=1}^T \left(\mathcal{C}_{r,t}(\pi_t) + \mathcal{C}_{f,t}(\pi_t)\right)\Big]\notag -F \notag 
\end{align}

Recall from our definition of $\mathcal{C}_{f,t}(\pi)$ that $\E_{\mathcal{H}_T(\pi^{\star})}\Big[\sum_{t=1}^T\ind(\text{user $t$ is fake})\Big] = \E_{\mathcal{H}_T(\pi^{\star})}\Big[\sum_{t=1}^T \mathcal{C}_{f,t}(\pi^\star) \Big]$---this gives rise to the second inequality above. Based on the final line, we can conclude that the difference between our notion of regret $\reg_{T}(\bm{P})$ and one where fake users are included, i.e., $\E_{\mathcal{H}_T(\pi^{\star})}\Big[\sum_{t=1}^T \left(\mathcal{C}_{r,t}(\pi^\star) + \mathcal{C}_{f,t}(\pi^\star) \right)\Big] - \E_{\mathcal{H}_T(\pi_t)}\Big[\sum_{t=1}^T \left(\mathcal{C}_{r,t}(\pi_t) + \mathcal{C}_{f,t}(\pi_t)\right)\Big]$, is at most the fakeness budget $F$.  

\section{Proof of Theorem~\ref{thm_ucb_bad}
}
We now formally prove that for the two-product instance described in Section~\ref{sec:nonRob}, UCB incurs linear regret. Suppose that the click probabilities for the products are given by $\mu_1 = 1$ and $\mu_2 = 1/2$, and the exit probability is $q_{1} = 1$---i.e. the customer only ever observes the product in the first position and never moves on to the product in the second position. Since we only have two products and customers exit after the first position, computing a product ranking is equivalent to simply selecting a single product for the top position, and receiving feedback only on this product. The total fakeness budget for this instance is $F=4\log^2(T)$ and the fake users' actions are as described in Figure~\ref{fig:UCB}. In particular, the fake users who arrive in the first $2\log^2 (T)$ rounds of the time horizon do not click on any product and exit after the first position (so as to appear similar to a real customer). The second wave of fake users who arrive in the subsequent $2\log^2(T)$ rounds always click on product two if it is in the first position; if not, the fake user will not click on any product and exit after the first position. We assume that the time horizon $T$ is sufficiently large, and expand on this later.

Our main claim is that after such a sequence of fake users, with high probability, the UCB algorithm will present the sub-optimal ranking $\pi$ given by $\pi({1}) = 2$ and $\pi({2}) = 1$ (henceforth referred to as $(2, 1)$) for the remaining $T-4\log^2(T)$ rounds, in which only real customers arrive. On the other hand, the optimal ranking $\pi^{\star}$ is given by $\pi^{\star}({1}) = 1$ and $\pi^{\star}({2}) = 2$ (henceforth referred to as $(1, 2)$). Note that the optimal ranking achieves a reward of one per-round since all real customers click on the first product, and $\pi$, in expectation, leads to a click with probability $\frac{1}{2}$. Therefore, the regret achieved by selecting ranking $\pi$ in the final $T-4\log^2(T)$ rounds is $\Omega(T)$ (recall that regret is only computed for real customers).

To prove this claim, we will first characterize the state of the UCB algorithm after the first and second waves of fake users respectively. First, let us recall that in any given round, the UCB algorithm tracks the empirical mean rewards $\hat \mu_i$ and feedback counts $\eta_i$ for $i \in \{1,2\}$ and selects the product with the highest upper confidence bound given by Equation~\eqref{eqn_ucb_ucb} for the first position.
\begin{equation}
\text{UCB}_i = \hat{\mu_i} + \sqrt{\frac{\log T}{\eta_i}},\quad i\in \{1,2\}.
\label{eqn_ucb_ucb}
\end{equation}
Based on this, we argue that: 
\begin{enumerate}
    \item After the first $2\log^2(T)$ rounds containing only fake users, UCB has received feedback on both products exactly $\log^2(T)$ times. In particular, during any round $t \in [1,2\log^2(T)]$, the empirical average reward of both products will be zero since the fake users within this period never click on any product. Therefore, the algorithm always selects the product with a smaller feedback count in the top position as it maximizes the upper confidence bound given in Equation~\eqref{eqn_ucb_ucb}; ties are broken arbitrarily. 
    
    \item During the subsequent $2\log^2(T)$ rounds, we claim that the ranking $(2, 1)$ is played at least half of the time---i.e., product two is selected by the UCB algorithm for at least $ \log^2(T)$ rounds when $t \in [2\log^2(T)+1,4\log^2(T)]$. To see why, note that the empirical average reward of product two within this period is at least as high as that of product one, since the fake users never click on the latter. In fact, the empirical click probability for product one remains at zero. Then, it is not hard to see that UCB would place product two in the top position at least whenever the feedback count on this product is smaller than that of the other product (and arguably in other rounds as well). The feedback count of product one cannot be larger than that of product two for more than half the rounds during this phase given that both products started out with $\eta_1 = \eta_2 = \log^2(T)$ at $t=2\log^2(T)$.  It follows that UCB will select the ranking $(2, 1)$ at least $\log^2(T)$ times.
\end{enumerate}

We now summarize the state of the algorithm---empirical rewards and feedback counts---at round $t=4\log^2(T)$ after the departure of the last fake user. As per our earlier arguments, we have that:
\begin{align}
    \text{Product One}: ~~& \hat{\mu}_1 = 0, ~~\eta_1 \geq \log^2(T), \\
    \text{Product Two}: ~~& \hat{\mu}_2 \geq \frac{1}{2}, ~~\eta_2 \geq 2\log^2(T).\label{eqn_thm1_stateof2}
\end{align}

Observe that $\hat\mu_2 \geq \frac{1}{2}$ is due to the fact that exactly $\log^2(T)$ users examine but do not click on product two (first wave), whereas at least $\log^2(T)$ users click on this product (second wave). Plugging this into Equation~\eqref{eqn_ucb_ucb}, we get that at the end of $t=4\log^2(T)$ rounds: $\text{UCB}_{1} \leq 1/\sqrt{\log T}$, and $\text{UCB}_{2} \geq 1/2$.

In order to complete this proof, we finally show that, with high probability, the UCB algorithm will never play the ranking $(1, 2)$ for the remaining $T-4\log^2(T)$ rounds where real customers arrive, and consequently, never receives feedback on product one. First, note that our claim stating `the UCB algorithm will never select product one for the top position' during $t \in [4\log^2(T)+1, T]$ is equivalent to saying $\text{UCB}_{1} < \text{UCB}_{2}$; moreover, $\text{UCB}_{1} < \text{UCB}_{2}$ is certainly true at the end of $t=4\log^2(T)$. Using Chernoff bounds and the fact that $\mu_2 = \frac{1}{2}$, we will now prove that with high probability $\hat{\mu}_2 \geq \frac{1}{\sqrt{\log T}}$ for the remaining rounds. Since $\text{UCB}_2 > \hat{\mu_2}$, this in turn implies that product one will never be placed in the top position leading to linear regret. 

Let $\alpha_t$ be the probability that in round $4\log^2(T) + t$, the ranking $(1, 2)$ is chosen or equivalently product one is placed in the top position. From our upper bound above, we know that $\P(\alpha_t) < \P(\hat{\mu}_2 < \text{UCB}_1)$. We will show that for sufficiently large $T$ ($T \geq e^
{16}$), $\P(\alpha_t) \leq 1/T^2$ for any given $t \in [1,T-4\log^2(T)]$. Therefore, by the union bound, the probability that the ranking $(1,2)$ is ever chosen in the final $T-4\log^2(T)$ rounds is at most $\frac{1}{T}$. In terms of regret, this implies that with probability $(1-\frac{1}{T})$, UCB incurs an expected regret of $\frac{1}{2}(T-4\log^2(T))$, and therefore:
$$\reg_T =  \frac{1}{2}(1-\frac{1}{T})(T-4\log^2(T)) = \Omega(T)$$

To bound the probability of $\alpha_t$, note first that if $t \leq 2\log^2(T)$, then it is impossible for the empirical click probability of product two---i.e., $\hat{\mu}_2$---to drop below $\frac{1}{\sqrt{\log(T)}}$ since the empirical click probability of product two after $4\log^2(T)$ rounds is at least $1/2$ according to~\eqref{eqn_thm1_stateof2}; even if all $t$ samples resulted in no click for this product, the empirical probability cannot drop below $\frac{1}{4}$ since the algorithm has recorded at least $\log^2(T)$ clicks for this product (from fake users).

More generally, the above argument implies that in order for $\hat{\mu}_2 < \frac{1}{\sqrt{\log(T)}}$ to be true at some point during the algorithm, we require at least $2\log^2(T)$ samples after round $4\log^2(T)$ during which the customer examines but does not click on product two. With this in mind, fix an  arbitrary $t \geq 2\log^2(T)$: it is not hard to see that if $\hat\mu_2 < \frac{1}{\sqrt{\log(T)}}$ after round $t$, then the average reward corresponding to product two from the samples obtained during the $t$ rounds $[4\log^2(T)+1, 4\log^2(T)+t]$ cannot be larger than $\frac{1}{\sqrt{\log(T)}}$---once again, this follows from~\eqref{eqn_thm1_stateof2}, since the empirical reward was at least $\frac{1}{2}$ after round $4\log^2(T)$. However, given that these customers are real, each sample is simply an independent Bernoulli random variable with mean $\frac{1}{2}$; denote the $i$--th such sample obtained during the above period by $X_i$. Suppose that we have $k \geq 2\log^2(T)$ samples\footnote{Ideally, $k=t$, however, we prove this more generally here since our goal is to prove that $\hat\mu_2 \geq \frac{1}{\sqrt{\log(T)}}$ in round $4\log^2(T)+t$.} corresponding to product two which were obained during the rounds $[4\log^2(T)+1, 4\log^2(T)+t]$. By Hoeffding's inequality, the probability that the sum of $k$ such Bernoulli variables is at most $(1/2 - \eps)k$ is given by:
$$\P\left(\sum_{i=1}^{k}X_i \leq (1/2 - \eps)k\right) \leq \exp(-2\eps^2 k).$$

In our case, $\eps = 1/2 - 1/\sqrt{\log(T)} \geq 1/4$ since $T > e^{16}$, so 
$$\P(\alpha_t) \leq \P(\hat\mu_2 < 1/\sqrt{\log(T)} ) \leq \P\left(\sum_{i=1}^{k}X_i \leq \frac{1}{\sqrt{\log(T)}} k \right)\leq \exp(-k/8).$$
Since $k \geq 2\log^2(T) \geq 32\log(T)$, we have that $\exp(-k/8) \leq e^{-4\log(T)} = \frac{1}{T^4} \leq \frac{1}{T^2}$ as desired. The theorem statement therefore follows.\hfill\Halmos


\section{{Proof of Statements in Section~\ref{sec:FAR}}}
\label{sec:proof:FAR}
In this section, we present the proofs of Lemmas \ref{lem_regret_decomp} and \ref{lem:concent}, followed by the proofs of Lemmas \ref{lem_pairwise_edgeadd} and \ref{lem_dagprop}.

\subsection{Proof of Lemma~\ref{lem_regret_decomp}} \label{sec:proof_lemma_regret_decomp}
Fix any arbitrary round $t$. Let $h_t(\pi^{\star}) = \mathcal{H}_t(\pi^{\star}) \setminus \mathcal{H}_{t-1}(\pi^{\star})$ and similarly $h_t(\pi_t) = \mathcal{H}_t(\pi_t) \setminus \mathcal{H}_{t-1}(\pi_t)$. That is, $h_t(\pi_t)$ consists of the ranking in round $t$, $\pi_t$, the binary variable $f_t$, indicating if the fake user is present in round $t$, and $c_t$, which includes the product clicked on in round $t$ (if any) and the exit position if not.  
Since the actions occurring after round $t$ do no affect the reward in this round, the incremental regret from this round can be written as:
\begin{align*}
\E_{\mathcal{H}_T(\pi^{\star})}\Big[ \mathcal{C}_{r,t}(\pi^\star)\Big]- \E_{\mathcal{H}_T(\pi_t)}\Big[ \mathcal{C}_{r,t}(\pi_t) \Big] & =  \E_{\mathcal{H}_t(\pi^{\star})}\Big[ \mathcal{C}_{r,t}(\pi^\star)\Big]- \E_{\mathcal{H}_t(\pi_t)}\Big[ \mathcal{C}_{r,t}(\pi_t) \Big].
\end{align*}
Note that we have replaced $\mathcal{H}_T(\pi^{\star})$ and $\mathcal{H}_T(\pi_t)$ with $\mathcal{H}_t(\pi^{\star})$ and $\mathcal{H}_t(\pi_t)$ respectively. Further, by the law of total expectation, we have:
\begin{align}
    \E_{\mathcal{H}_t(\pi^{\star})}\Big[ \mathcal{C}_{r,t}(\pi^\star)\Big] & = \E_{\mathcal{H}_{t-1}(\pi^{\star})}\Big[\E_{h_t(\pi^{\star})}[ \mathcal{C}_{r,t}(\pi^\star) ~|~ \mathcal{H}_{t-1}(\pi^{\star})]\Big]\label{eqn_iteratedexp_lem1_1}\\
    \E_{\mathcal{H}_t(\pi_t)}\Big[ \mathcal{C}_{r,t}(\pi_t) \Big] & =\E_{\mathcal{H}_{t-1}(\pi_t)}\Big[\E_{h_t(\pi_t)}[ \mathcal{C}_{r,t}(\pi_t) ~|~ \mathcal{H}_{t-1}(\pi_t)]\Big]. \label{eqn_iteratedexp_lem1_2}
\end{align}
Consider any arbitrary instantiations of $\mathcal{H}_{t-1}(\pi^{\star})$ and $\mathcal{H}_{t-1}(\pi_t)$. Given these instantiations, and a fixed policy $P_t$ for the fake user in round $t$, we recall that $h_t(\pi_t)$ depends only on the randomness in $P_t$ and conditional upon the customer being real, the randomness in her click behavior. For this fixed instantiation, the regret in round $t$ can be expressed in the form: 
\begin{equation}
   \E_{h_t(\pi^{\star}), h_t(\pi_t)}\big[\mathcal{C}_{r,t}(\pi^\star) - \mathcal{C}_{r,t}(\pi_t)\big].
    \label{eqn_simplified_reg_lem1}
\end{equation}
We omit the conditional operator inside the expectation to keep the notation readable. Let $f_t(\pi^{\star})$ and $f_t(\pi_t)$ denote the events that $h_t(\pi^{\star})$ and $h_t(\pi_t)$ contain a fake user in round $t$ respectively---i.e., $f_t(\pi_t)=1$ when our algorithm sees a fake user in round $t$ under $h_t(\pi_t)$. The rest of the proof proceeds in two cases depending on the realized value of $f_t(\pi_t)$.
\begin{itemize}
    \item \textbf{Case I.} $f_t(\pi_t) = 1$. Since $\mathcal{C}_{r,t}(\pi_t)=0$ in this case, we have that:
    \begin{align}
        \mathcal{C}_{r,t}(\pi^\star) - \mathcal{C}_{r,t}(\pi_t) & \leq 1 
         ~=~ f_t(\pi_t). \label{eqn_lem1_case1}
    \end{align}

    \item \textbf{Case II.} $f_t(\pi_t) = 0$. This is the main case, so we proceed carefully. For any arbitrary ranking $\pi$, the expected reward from a real customer can be quantified as:
\begin{align*}
    \E\Big[\mathcal{C}_{r,t}(\pi) ~|~ f_t(\pi) = 0\Big] = \sum_{\p{j}=1}^{n} Q(\p{j}) \prod_{\p{r}=1}^{\p{j}-1}(1-\mu_{\pi(\p{r})})\mu_{\pi(\p{j})}\,.
\end{align*}
Here,  $Q(\p{j}) = \prod_{\p{r} = 1}^{\p{j}-1}(1-q_{\p{r}}),$ is the probability that the customer does not exit before viewing the products in the first $\p{j}$ positions.  By definition, $\pi^{\star}(\p{j}) = j$ for all $j \leq n$. Therefore, when $f_t(\pi_t) = f_t(\pi^{\star}_t) = 0$, both our algorithm and the optimal algorithm see a real customer in round $t$ and the incremental regret conditional on this can be written as:
\begin{align}
\E\big[\mathcal{C}_{r,t}(\pi^{\star})- \mathcal{C}_{r,t}(\pi_t)~ | ~ f_t(\pi^{\star}) =0, f_t(\pi_t) = 0\big] = \sum_{\p{j}=1}^n Q(\p{j}) \left[ \prod_{\p{r}=1}^{\p{j}-1}(1-\mu_{\pi^{\star}(\p{r})}) \mu_{\pi^{\star}(\p{j})} -  \prod_{\p{r}=1}^{\p{j}-1}(1-\mu_{{\pi_t(\p{r})}})\mu_{{\pi_t(\p{j})}}\right]\,. \label{eqn_incregret_optreal}
\end{align}

Next, consider the case where $f_t(\pi_t) = 0$ but $f_t(\pi^{\star}) = 1$---i.e., the optimal algorithm sees a fake user in round $t$. When $f_t(\pi^{\star}_t) = 1$, we have that $\mathcal{C}_{r,t}(\pi^{\star}) = 0$ due to the presence of a fake user and therefore,
\begin{align}
\E\big[\mathcal{C}_{r,t}(\pi^{\star})- \mathcal{C}_{r,t}(\pi_t)~ | ~ f_t(\pi^{\star}) =1, f_t(\pi_t) = 0\big] & \leq 0 \notag\\
& \leq \sum_{\p{j}=1}^n Q(\p{j}) \left[ \prod_{\p{r}=1}^{\p{j}-1}(1-\mu_{\pi^{\star}(\p{r})}) \mu_{\pi^{\star}(\p{j})} -  \prod_{\p{r}=1}^{\p{j}-1}(1-\mu_{{\pi_t(\p{r})}})\mu_{{\pi_t(\p{j})}}\right]\,. \label{eqn_incregret_optfake}
\end{align}
Note that the inequality above is due to the fact that the expression in~\eqref{eqn_incregret_optfake} is non-negative due to the optimality of $\pi^{\star}.$

Combining the two cases from~\eqref{eqn_incregret_optreal} and~\eqref{eqn_incregret_optfake}, we get have:
\begin{align}\E\big[\mathcal{C}_{r,t}(\pi^{\star})- \mathcal{C}_{r,t}(\pi_t)~ | ~  f_t(\pi_t) = 0\big] & \leq \sum_{\p{j}=1}^n Q(\p{j}) \left[ \prod_{\p{r}=1}^{\p{j}-1}(1-\mu_{\pi^{\star}(\p{r})}) \mu_{\pi^{\star}(\p{j})} -  \prod_{\p{r}=1}^{\p{j}-1}(1-\mu_{{\pi_t(\p{r})}})\mu_{{\pi_t(\p{j})}}\right]\nonumber\\
& \leq \sum_{\p{j}=1}^{n} Q(\p{j}) \left[ \prod_{\p{r}=1}^{\p{j}-1}(1-\mu_{{\pi_t(\p{r})}}) \mu_{\pi^{\star}(\p{j})} -  \prod_{\p{r}=1}^{\p{j}-1}(1-\mu_{{\pi_t(\p{r})}})\mu_{{\pi_t(\p{j})}}\right]
\notag
\\
& = \sum_{\p{j}=1}^{n} Q(\p{j}) \prod_{\p{r}=1}^{\p{j}-1}(1-\mu_{{\pi_t(\p{r})}}) \Delta_{\pi^{\star}(\p{j}),{\pi_t(\p{j})}}\nonumber\\
& \leq \sum_{\p{j}=1}^{n} \sum_{i=\p{j}+1}^n Q(\p{j}) \ind(\pi_t(\p{j})=i) \prod_{\p{r}=1}^{\p{j}-1}(1-\mu_{{\pi_t(\p{r})}}) \Delta_{\p{j},i}\nonumber\\
& = \sum_{\p{j}=1}^{n} \sum_{i=\p{j}+1}^n \E\big[\event_{r,t}(\pi_t(\p{j}) = i) ~|~ f_t(\pi_t)=0\big] \Delta_{\p{j},i} \label{eqn_lem1_case3}\,.
\end{align}
The second inequality follows from the fact that for any position $\p{j}$, the probability that a customer does not click on any of the first $\p{j}-1$ ranked products is
smallest under $\pi^{\star}$ among all possible rankings. This is because $\pi^{\star}$ ranks products in the decreasing order of the click probabilities. The final equation follows from the definition of the event $\event_{r,t}(\pi_t(\p{j}) = i)$ as stated in Lemma~\ref{lem_regret_decomp}. 
\end{itemize}

Combining~\eqref{eqn_lem1_case1} and~\eqref{eqn_lem1_case3}, and plugging these back into~\eqref{eqn_simplified_reg_lem1}, we get that:
\begin{align}
\E\big[\mathcal{C}_{r,t}(\pi^\star) - \mathcal{C}_{r,t}(\pi_t)\big] & \leq \E\big[\mathcal{C}_{r,t}(\pi^\star) - \mathcal{C}_{r,t}(\pi_t) ~|~f_t(\pi_t) = 1\big]\cdot \P\left(f_t(\pi_t)=1 \right) \notag \\ &  + \E\big[\mathcal{C}_{r,t}(\pi^\star) - \mathcal{C}_{r,t}(\pi_t) ~|~f_t(\pi_t) = 0\big]\cdot \P\left(f_t(\pi_t) = 0\right) \notag \\
& \leq \E[f_t(\pi_t)] + \sum_{\p{j}=1}^{n} \sum_{i=\p{j}+1}^n \E\big[\event_{r,t}(\pi_t(\p{j}) = i)  ~|~ f_t(\pi_t)=0 \big] \Delta_{\p{j},i} \P(f_t(\pi_t) = 0) \notag \\ \notag
& \leq \E[f_t(\pi_t)] + \sum_{\p{j}=1}^{n} \sum_{i=\p{j}+1}^n \E\big[\event_{r,t}(\pi_t(\p{j}) = i)\big] \Delta_{\p{j},i}\,.
\end{align}

Since the above bound is valid for any arbitrary instantiations of $\mathcal{H}_{t-1}(\pi^{\star})$ and $\mathcal{H}_{t-1}(\pi_t)$, we can utilize~\eqref{eqn_iteratedexp_lem1_1} and~\eqref{eqn_iteratedexp_lem1_2} and simply sum over all rounds $t$ to get:
   $$\sum_{t=1}^T\E_{\mathcal{H}_t(\pi^{\star})}\Big[ \mathcal{C}_{r,t}(\pi^\star)\Big] - \sum_{t=1}^T\E_{\mathcal{H}_t(\pi_t)}\Big[ \mathcal{C}_{r,t}(\pi_t)\Big] \leq  \sum_{t=1}^T\sum_{\p{j}=1}^{n} \sum_{i=\p{j}+1}^n \E_{\mathcal{H}_t(\pi_t)}\big[\event_{r,t}(\pi_t(\p{j}) = i)\big] \Delta_{\p{j},i}  + \sum_{t=1}^T\E_{\mathcal{H}_t(\pi_t)}[f_t(\pi_t)]. $$

   To conclude, recall that the total number of fake users that our algorithm is exposed to is bounded by $F$---i.e., there are at most $F$ rounds in $[T]$ where fake users arrive. This, in turn,  implies that $\sum_{t=1}^T\E[f_t(\pi_t)~ |~\mathcal{H}_T(\pi_t)] \leq F$ for any given choice of the history $\mathcal{H}_T(\pi_t)$. Substituting this back into the above expression gives us Equation \eqref{eqn_regret_decomp1} and the lemma statement.\hfill\Halmos

\subsection{Proof of Lemma~\ref{lem:concent}}

We will first prove a more abstract claim. Let $X_1, X_2, \dots, X_T$ be a sequence of $\{0, 1\}$ random variables such that at least $T-F$ of these random variables are i.i.d. Bernoulli random variables with expectation $\mu$ (with the remainder chosen adversarially). Let $R_{\S} = \frac{1}{\S}\sum_{i=1}^{\S} X_{i}$ be the average of the first ${\S}$ values in the sequence $(X_1, X_2, \ldots, X_T)$, and define 

$$\varepsilon({\S}) = \sqrt{\frac{\log(2nT/\delta)}{{\S}}}$$

\noindent
and 
$$w({\S}) = \eps({\S}) + \frac{F}{{\S}}\,.$$
We then claim that with high probability, $R_{\S} \in [\mu - w({\S}), \mu + w({\S})]$ for all ${\S} \in [T]$. To see why, first define a related sequence $\tilde{X}_{\S}$ of random variables such that $\tilde{X}_{\S} = X_{\S}$ if $X_{\S}$ is one of the $T-F$ non-adversarial random variables; otherwise, let $\tilde{X}_{\S}$ be an i.i.d. Bernoulli random variable with mean $\mu$. Note that $|\sum_{i=1}^{T} X_{i} - \sum_{i=1}^{T} \tilde{X}_{i}| \leq F$, since there are at most $F$ adversarial random variables and $X_i \in \{0,1\}$ for all $i \in [T]$. It follows that if we let $\tilde{R}_{\S} = \frac{1}{{\S}}\sum_{i=1}^{{\S}}\tilde{X}_{i}$, then $|\tilde{R}_{\S} - R_{\S}| \leq \frac{F}{{\S}}$. Therefore, in order to prove our earlier claim regarding $R_{S}$, it suffices to show that with high probability $\tilde{R}_{{\S}} \in [\mu - \eps({\S}), \mu + \eps({\S})]$ for all $S \in [T]$.

Since $\tilde{R}_{{\S}}$ is the average of ${\S}$ i.i.d. Bernoulli random variables with mean $\mu$, we can directly apply Hoeffding's concentration inequality. Hoeffding's inequality immediately implies that 

\begin{equation}
\P\left(|\tilde{R}_{{\S}} - \mu| \geq \eps({\S})\right) \leq 2\exp\left(-2\eps({\S})^2 \cdot S\right).
\end{equation}

Substituting our expression for $\eps({\S})$, we have that

$$\P\left(|\tilde{R}_{{\S}} - \mu| \geq \eps({\S})\right) \leq \frac{\delta^2}{2n^2T^2} \leq \frac{\delta^2}{n^2T^2}$$

This holds for any fixed ${\S} \in [T]$. Taking the union bound over all ${\S} \in [T]$, it follows that the probability there exists an ${\S} \in [T]$ such that $\tilde{R}_{{\S}} \not\in [\mu - \eps({\S}), \mu + \eps({\S})]$ is at most $\delta^2/n^2T$. As a corollary, we also have that the probability there exists an ${\S} \in [T]$ such that $R_{{\S}} \not\in [\mu - w({\S}), \mu + w({\S})]$ is at most $\delta^2/n^2T$.

We now describe how to apply this to the problem at hand. Fix a specific product $i$. Our goal is show that (with high probability) for all rounds $t \in [T]$, $r_{i, t} \in [\mu_i - w_{i, t}, \mu_i + w_{i, t}]$. Recall that $r_{i,t}$ is the empirical average reward for product $i$ at the end of round $t$ and $w_{i,t}  = \sqrt{\frac{\log(\frac{2nT}{\delta})}{\eta_{i,t}}} + \frac{F}{\eta_{i,t}}$ as per our definition in Equation~\eqref{eqn_window_definition1}.

Define a sequence of random variables $Y_1, Y_2, \dots$, so that $Y_{j} = 1$ if product $i$ recevied a click during the $j$--th time the algorithm received feedback on it (and $Y_{j} = 0$ if product $i$ did not get clicked on). Note that the length of this sequence is the number of times the algorithm receives feedback about product $i$, i.e., $\eta_{i,T}$. If this number is less than $T$, pad the sequence with i.i.d. Bernoulli  random variables with mean $\mu_i$ until it has length $T$.

Note that this sequence satisfies the properties of our earlier claim; it is a sequence of $T$ random variables where all but (at most) $F$ are i.i.d. Bernoulli random variables with mean $\mu_i$. Specifically, for any $1 \leq m \leq T$, let $\mathcal{Y}_{i,m}$ denote the event that $\frac{1}{m}\sum_{k=1}^{m}Y_k \in [\mu_i - w(m), \mu_i + w(m)]$; then by our above claim $\P\left(\bigcap_{m=1}^{T}\mathcal{Y}_{i, m}\right) \geq 1 - \frac{\delta^2}{n^2T}$. Recall that $w({\S}) = \eps({\S}) + \frac{F}{{\S}}$, where  $\varepsilon({\S}) = \sqrt{\frac{\log(2nT/\delta)}{{\S}}}$.

We are now ready to complete the proof. As mentioned earlier, our goal is to show that $\mathcal{E}$ holds with probability $1-\frac{\delta^2}{nT}$, where $\mathcal{E} = \bigcap_{t\in [T], i \in [n]}\mathcal{E}_{i,t}$ and $\mathcal{E}_{i,t}$ denotes the event that $r_{i, t} \in [\mu_i - w_{i, t}, \mu_i + w_{i,t}]$. We claim that for a fixed product $i \in [n]$, if events $\mathcal{Y}_{i, m}$ hold for all $m \in [T]$, this implies that events $\mathcal{E}_{i, t}$ also hold for all $t \in [T]$ and the same product $i$. This is true since $r_{i, t}$ is always the average of one particular prefix of the sequence $(Y_1, Y_2, \ldots, Y_T)$; in particular,

$$r_{i, t} = \frac{1}{\eta_{i, t}}\sum_{k=1}^{\eta_{i, t}}Y_{k}.$$
Therefore, the validity of event $\mathcal{Y}_{i,\eta_{i, t}}$ implies that event $\mathcal{E}_{i, t}$ also holds. Note here that $\eta_{i,t} \in [T]$, and $w_{i, t}$ is simply $w(\eta_{i, t})$ as per our definition earlier. In conclusion, for a fixed product $i \in [n]$, we have that:
$$(\mathcal{Y}_{i,m})_{m=1}^{T} \text{~ is true~} \implies (\mathcal{E}_{i,t})_{t=1}^{T} \text{~ is true~}$$

It therefore follows that with probability at least $1 - \frac{\delta^2}{n^2T}$, it is true that $\mathcal{E}_{i, t}$ holds for all $t \in [T]$ for a fixed product $i \in [n]$. Taking the union bound over all $n$ possibilities for $i$, we have that the event $\mathcal{E}$ holds with probability at least $1 - \frac{\delta^2}{nT}$, as desired.\hfill\Halmos

\subsection{Proof of Lemma~\ref{lem_pairwise_edgeadd}}
The first part of the lemma states that under event $\mathcal E$, the graph $G$ does not contain any inaccurate  edges in any round $t$. We show this result by contradiction. Let $i>\p{j}$. Contrary to our claim, assume that in some round $t$,  the graph $G$ contains an incorrect edge from $\p{j}$ to $i$. According to Condition~\eqref{eq:add_edge}, this implies that  the upper confidence bound on the estimate of the click probability of product $\p{j}$ must be smaller than or equal to the lower confidence bound on the estimate of the click probability of product  $i$. However, comparing the actual value of these two quantities, we see that:
\begin{align*}
    r_{\p{j},t} + w_{\p{j},t} & \geq \mu_{j}  > \mu_{i} \geq r_{i,t} - w_{i,t}\,,
\end{align*}
where the first and third inequality hold because we assume that event $\mathcal E$ holds.  The second inequality holds because $i> \p{j}$ and hence $\mu_{i} < \mu_{\p{j}}$. The above inequality shows that the upper bound on product $\p{j}$ cannot be smaller than or even equal to the lower bound on product $i$ and consequently, the product-ordering graph $G$ will not contain the incorrect edge $(\p{j},i)$. 

To show the second part of the lemma, we verify that for any $i>\p{j}$ when $\eta_{i,t}, \eta_{\p{j},t} \geq \gamma_{\p{j},i}$, we have  $r_{i,t} + w_{i,t} \leq r_{\p{j},t} - w_{\p{j},t}.$ As we argued earlier, when $\eta_{i,t}, \eta_{\p{j},t} \geq \gamma_{\p{j},i}$, it must be the case that $w_{i,t}, w_{\p{j},t} \leq \tfrac{\Delta_{\p{j}, i}}{4}$; see our discussion after Equation \eqref{eqn_sub_optplays}.  Proceeding along these lines, we have that:
\begin{align*}
r_{i,t} + w_{i,t} & \leq \mu_{i} + w_{i,t} + w_{i,t}\\
& = \mu_{i} + 2w_{i,t}\\
& = \mu_{\p{j}} - \Delta_{\p{j}, i} + 2w_{i,t}\\
& \leq r_{\p{j},t} + w_{\p{j},t} + 2w_{i,t} - \Delta_{\p{j}, i} \\
& = \left(r_{\p{j},t} - w_{\p{j},t}\right) + 2w_{\p{j},t} + 2w_{i,t} - \Delta_{\p{j}, i} \\
& \le \left(r_{\p{j},t} - w_{\p{j},t}\right) + \frac{\Delta_{\p{j}, i}}{2} + \frac{\Delta_{\p{j}, i}}{2} - \Delta_{\p{j}, i} \\
& = r_{\p{j},t} - w_{\p{j},t}\,,
\end{align*}
where the first and second inequalities follow from our assumption that event $\mathcal E$ holds. The third inequality holds because  $w_{i,t}, w_{\p{j},t} \leq \tfrac{\Delta_{\p{j}, i}}{4}$. 
 Since the upper bound on product $i$, i.e., $r_{i,t} + w_{i,t}$, is smaller than the lower bound on product ${j}$, i.e., $r_{j,t} - w_{j,t}$, by Condition~\eqref{eq:add_edge}, our graph should contain a correct edge from $i$ to $\p{j}$ in any round $t' \geq t$.\hfill\Halmos

\subsection{Proof of Lemma~\ref{lem_dagprop}}

Consider products $i, \p{j}$ as mentioned in the statement of the lemma and suppose that the \Call{GraphRankSelect}{} algorithm ranks $i$ ahead of $\p{j}$ in round $t+1$ despite it being the case that $i > \p{j}$. First, we note that the lemma follows trivially if either $\eta_{i,t}=0$ or $\eta_{\p{j},t}=0$. Indeed, if $\eta_{i,t} = 0$, then the lemma holds trivially for $\ki=\p{j}$. If $\eta_{\p{j},t}=0$ and $\eta_{i,t} > 0$, then there will be no edge attached to product $j$ and as such, our algorithm will select $\p{j}$ ahead of $i$, so the condition mentioned in the lemma cannot be true. For the rest of this proof, we will assume that $\eta_{i,t}, \eta_{\p{j},t} > 0$.

Since event $\mathcal E$ holds, by Lemma \ref{lem_pairwise_edgeadd},   the graph $G$ does not contain any erroneous edges. This implies that there is no edge between $i$ and $\p{j}$ in either direction in the graph at this time. Indeed an edge from $\p{j}$ to $i$ would be an incorrect edge and an edge going from $i$ to $\p{j}$ would lead to $\p{j}$ being ranked  ahead of $i$, which violates the conditions of the lemma.

Now, during the execution of \Call{GraphRankSelect}{}, suppose that product $i$ was added to position $\p{r}$ in $\pi_{t+1}$ (i.e., $\pi_{t+1}(\p{r}) = i$) and let $S_{\p{r}}$ denote the set of products with no outgoing edges in $\hat{G}$ (as per Algorithm~\ref{alg_graph_rank}) at the beginning of the \Call{GraphRankSelect}{} algorithm's $\p{r}$--th iteration before product $i$ was chosen. Our main claim is that $S_{\p{r}}$ contains some product $\ki$ such that $\ki \leq \p{j}$. Define $\ki$ to be the smallest indexed product such that $\pi^{-1}_{t+1}(\ki) > \p{r}$. Clearly, $\ki \leq \p{j}$ since $\pi^{-1}_{t+1}(\p{j}) > \pi^{-1}_{t+1}(i)= \p{r}$ by definition. Further, product $\ki$ must belong to the set $S_{\p{r}}$ (i.e., have no outgoing edges in round $t$ in $\hat{G}$) since all the products better than product $\ki$ have already been ranked at positions smaller or more visible than $\p{r}$ and graph $G$ (and consequently graph $\hat G$) does not have any erroneous edges. Then, since  $i, \ki \in S_{\p{r}}$,  we must have, $\eta_{i,t} \leq \eta_{\ki,t}$ because \Call{GraphRankSelect}{} selects the product with the smallest feedback count ($\eta$ value) when multiple products have no outgoing edges. 

Part (c) of the lemma now follows trivially from Lemma~\ref{lem_pairwise_edgeadd} because if it were true that $\eta_{\ki,t} \geq \eta_{i,t} \geq \gamma_{k,i}$, then according to Lemma~\ref{lem_pairwise_edgeadd}, there would be an edge from product $i$ to product $\ki$ at the end of round $t$, which is a contradiction. This concludes our proof of the lemma. \hfill\Halmos

\section{Proof of Statements in Section \ref{sec:FORC}
} 
\label{app:FORC}
In this section, we first present the proof of Lemma \ref{lem_maxfake_levels} and a corollary of this lemma. We then provide the proof of Lemmas \ref{lem:even:E}, \ref{lem:parts1}, and \ref{lem:tjmax}.

\subsection{Proof of Lemma~\ref{lem_maxfake_levels}}
In order to show Lemma~\ref{lem_maxfake_levels}, we first establish the following auxiliary lemma that proves the same claim but for each individual level $\ell \geq \log_2(F)$.   
\begin{lemma}
\label{lem_maxfake}
For  any policy $\mathbf{P}$ adopted by the fake users with fakeness budget $F$ and any given $\delta \in (0,1)$, with probability $1-\frac{\delta}{2\level}$,  any level $\ell \geq \log_2(F)$ is  exposed to at most $\log(2\frac{\level}{\delta})+3$ fake users. Here, $\level = \log_2(T)$.

\end{lemma}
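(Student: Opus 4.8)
The plan is to control $Z_\ell := \sum_{t=1}^T f_t\,\ind(\ell_t=\ell)$, the number of rounds in which a fake user is routed to the fixed level $\ell\ge\log_2 F$. Observe first that $Z_\ell\le F$ deterministically, so the claim is trivial when $F\le\log(2\level/\delta)+3$; hence assume $F>\log(2\level/\delta)+3\ge 3$, so $F\ge 4$ and every level $\ell\ge\log_2 F$ satisfies $\ell\ge 2$ and $\P(\ell_t=\ell)=2^{-\ell}\le 2^{-\log_2 F}=1/F$. The structural fact I would rely on is that in each round $t$ the fake entity commits to $f_t$ as a function of the observable history $\mathcal H^o_{t-1}$, which is determined \emph{before} the platform draws the fresh coin $\ell_t$. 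Letting $\mathcal F_{t-1}$ denote the $\sigma$-field generated by everything up to round $t-1$ together with $f_t$, this means each $f_t$ is $\mathcal F_{t-1}$-measurable and $\ind(\ell_t=\ell)\mid\mathcal F_{t-1}\sim\mathrm{Bernoulli}(2^{-\ell})$.

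Next I would prove the moment generating function bound $\E[e^{\lambda Z_\ell}]\le e^{e^\lambda-1}$ for every $\lambda\ge 0$ — i.e.\ $Z_\ell$ is dominated in the Laplace-transform sense by a $\mathrm{Poisson}(1)$ variable — in a way that is robust to the adaptivity of the fake users. Fix $\lambda\ge 0$, put $c:=2^{-\ell}(e^\lambda-1)\ge 0$, and define $W_k:=\sum_{t\le k}f_t\ind(\ell_t=\ell)$, $U_k:=\sum_{t\le k}f_t$, and $V_k:=e^{\lambda W_k}(1+c)^{F-U_k}$. Using $\E[e^{\lambda f_t\ind(\ell_t=\ell)}\mid\mathcal F_{t-1}]=1+f_t c$ and the identity $(1+f_tc)(1+c)^{-f_t}=1$ for $f_t\in\{0,1\}$, a short computation gives $\E[V_k\mid\mathcal F_{k-1}]=V_{k-1}$, so $(V_k)$ is a martingale and $\E[V_T]=\E[V_0]=(1+c)^F$. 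Since $U_T\le F$ and $1+c\ge 1$ we have $e^{\lambda Z_\ell}\le V_T$, hence $\E[e^{\lambda Z_\ell}]\le(1+c)^F=(1+2^{-\ell}(e^\lambda-1))^F\le\bigl(1+\tfrac{e^\lambda-1}{F}\bigr)^F\le e^{e^\lambda-1}$, using $2^{-\ell}\le 1/F$ and $1+x\le e^x$.

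To finish, apply Markov's inequality: for $a\ge 1$, $\P(Z_\ell\ge a)\le\inf_{\lambda\ge 0}e^{-\lambda a}\E[e^{\lambda Z_\ell}]\le\inf_{\lambda\ge 0}e^{-\lambda a+e^\lambda-1}=e^{-1}(e/a)^a\le(e/a)^a$, the infimum being attained at $e^\lambda=a$. Taking $a=\log(2\level/\delta)+3$ (which is $>3$ since $\delta<1$ and $\level\ge 1$), the claim reduces to checking $(e/a)^a\le\delta/(2\level)$, equivalently $a(\log a-1)\ge\log(2\level/\delta)=a-3$, equivalently $h(a):=a\log a-2a+3\ge 0$ for all $a\ge 3$. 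This holds because $h(3)=3\log 3-3>0$ and $h'(a)=\log a-1>0$ on $[3,\infty)$. Hence $\P\bigl(Z_\ell\ge\log(2\level/\delta)+3\bigr)\le\delta/(2\level)$, which is the statement. The one genuinely delicate point is that a Hoeffding/Azuma-type bound on $Z_\ell$ would only give a $\sqrt F$-type deviation, whereas the logarithmic bound needs a multiplicative (Poisson) tail that exploits the small mean $F\cdot 2^{-\ell}\le 1$; the adaptivity of the fake entity is exactly what the auxiliary martingale $V_k$ is engineered to absorb.
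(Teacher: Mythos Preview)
Your proof is correct and takes a genuinely different route from the paper's. The paper centers the sequence, writing $X_t=f_t\ind(\ell_t=\ell)-f_t2^{-\ell}$, bounds the total conditional variance by $\sum_t f_t/2^{\ell}\le F/2^{\ell}\le 1$, and then invokes a Freedman/Bernstein martingale inequality (cited from \cite{beygelzimer2011contextual}) with $R=1$, $V=1$ to get $\sum_t X_t\le\log(2\level/\delta)+1$, whence $Z_\ell\le\log(2\level/\delta)+2$. You instead build the multiplicative martingale $V_k=e^{\lambda W_k}(1+c)^{F-U_k}$ directly and read off the Poisson(1) MGF domination $\E[e^{\lambda Z_\ell}]\le e^{e^\lambda-1}$, then optimize the Chernoff bound. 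Both arguments rely on exactly the same two structural facts---that $f_t$ is fixed before the fresh coin $\ell_t$ is drawn, and that $F\cdot 2^{-\ell}\le 1$---and both deliver the same logarithmic tail (an additive/Hoeffding bound would indeed fail here, as you note). Your version is fully self-contained and makes the Poisson comparison explicit; the paper's version is shorter once the Freedman lemma is in hand. One minor inaccuracy: in this model the fake entity's policy $P_t$ maps the \emph{full} history $\mathcal H_{t-1}$, not the platform-observable history $\mathcal H^o_{t-1}$; this does not affect your argument, since all you use is that $\ell_t$ is independent of $\mathcal F_{t-1}$.
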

\begin{proof}{Proof of Lemma~\ref{lem_maxfake}}
The proof is very similar to that of Lemma 3.3 in~\cite{lykouris2018stochastic} and we only sketch the key differences here to minimize redundancy. First, note that any level $\ell \geq \log_2(F)$ is selected with probability $2^{-\ell}$ in a given round and the total fakeness budget $F = 2^{\log_2(F)} \leq 2^{\ell}$. Therefore, the expected number of fake users that any level $\ell \geq \log_2(F)$ is exposed to is at most $1$. In light of this, our goal here is to show a high probability bound on this quantity for a fixed level $\ell \geq \log_2(F)$.

Let $f_t$ be a random variable such that $f_t=1$ denotes the presence of a fake user in round $t$. Note that $f_t$ can depend on the history prior to round $t$---i.e., $\mathcal{H}_{t-1}$---but is independent of the realization of the sampled level in round $t$, i.e., $\ell_t.$ Since the total fakeness budget is $F$, we know that $\sum_{t=1}^T f_t \leq F$ for any realization of the underlying randomness. 

Define the random variable $\ZE^{(\ell)}_t$ as follows: 
\begin{align*}
  \ZE^{(\ell)}_t \triangleq f_t\cdot \ind(\ell_t = \ell).  
\end{align*}
Note that $\ZE^{(\ell)}_t=1$ if the user in round $t$ is fake and the level is $\ell$ and evaluates to zero if either of these two conditions are not met. 

The objective of this lemma is to obtain a high probability bound on $\sum_{t=1}^T \ZE^{(\ell)}_t$. Consider the martingale $X_t \triangleq \ZE^{(\ell)}_t - \E_{\ell_t}[\ZE^{(\ell)}_t ~|~ \mathcal{H}_{t-1}]$ where $\E_{\ell_t}[\cdot]$ highlights that the expectation is w.r.t. the random choice of the level in round $t$. Since $\ZE^{(\ell)}_t = f_t$ with probability $\frac{1}{2^{\ell}}$, we have that $\E_{\ell_t}[\ZE^{(\ell)}_t ~|~ \mathcal{H}_{t-1}] = \frac{f_t}{2^{\ell}}.$ Next, we bound the variance of this martingale: 
\begin{align*}
    \E_{\ell_t}[X^2_t | X_1, \ldots, X_{t-1}] & = \frac{1}{2^{\ell}}\Big(f_t - \frac{f_t}{2^{\ell}}\Big)^2 + (1-\frac{1}{2^{\ell}})\Big(\frac{f_t}{2^{\ell}}\Big)^2  = 
    \frac{f_t^2}{2^{\ell}} - \Big(\frac{f_t}{2^{\ell}}\Big)^2
     \leq \frac{f_t}{2^{\ell}}.
\end{align*}

 Next, we bound the total variance over all rounds as 
\begin{align*}
\sum_{t=1}^T\E_{\ell_t}[X^2_t | X_1, \ldots, X_{t-1}] \leq \frac{1}{2^{\ell}}\sum_{t=1}^T f_t = \frac{F}{2^{\ell}} \leq 1.
\end{align*}
Here, the inequality holds because 
 $F \leq 2^{\ell}$. 
 
Finally, we apply a martingale concentration inequality, i.e., Bernstein's inequality as stated in \citet{beygelzimer2011contextual}. We start by repeating this lemma for convenience and then proceed with applying it to our setting.

\begin{lemma}[Lemma 1 in \citet{beygelzimer2011contextual}]
Let $X_1, X_2, \ldots, X_T$ be a sequence of real-valued random numbers. Assume, for all $t$, that $X_t \leq R$ and that $\E[X_t | X_1, \ldots, X_{t-1}] = 0$. Also, let
\begin{align*}
    V = \sum_{t=1}^{T} \E[X_t^2 | X_1, \ldots, X_{t-1}].
\end{align*}
Then, for any $\epsilon  >0$: 
\begin{align*}
    \P\Big(\sum_{t=1}^{T} X_t > R \log(1/\epsilon ) + \frac{e - 2}{R} . V\Big) \leq \epsilon 
\end{align*}
\end{lemma}

Applying the above lemma to our setting---in which we can set $R$ to be $1$ and $V$ to be $1$---we 
 get that with probability $1-\epsilon$, we have $\sum_{t=1}^T X_t \leq \log(\frac{1}{\epsilon}) + 1.$ Substituting $\epsilon = \frac{\delta }{2\level}$, we can now use this high-probability upper bound on $\sum_{t=1}^T X_t$ to achieve the desired result on $\sum_{t=1}^T \ZE^{(\ell)}_t$, which denotes the number of fake users that level $\ell$ is actually exposed to. 
\begin{align*}
    \sum_{t=1}^T \ZE^{(\ell)}_t & = \sum_{t=1}^T X_t + \sum_{t=1}^T \E_{\ell_t}[\ZE^{(\ell)}_t ~|~  \mathcal{H}_{t-1}]\\
    & \leq \sum_{t=1}^T \frac{f_t}{2^{\ell}} + \log(\frac{2\level}{\delta}) + 1 \\
    & \leq \log(\frac{2\level}{\delta}) + 2. \hfill\Halmos
\end{align*}

\end{proof}

\noindent With Lemma~\ref{lem_maxfake}, we complete the proof of Lemma~\ref{lem_maxfake_levels} by applying the union bound for all levels $\ell \geq \log_2(F)$ and noticing that $|\{\ell: \ell \geq \log_2(F)\}| \leq  \level$. \hfill\Halmos

\subsection{Corollary of Lemma \ref{lem_maxfake_levels}} \label{sec:corollary}

We start with a relevant definition and then proceed to a corollary of Lemma~\ref{lem_maxfake_levels}, which will be utilized in future proofs. For any product $i$, round $t$, and level $\ell$, we define $\tau^{(\ell)}_{i,t}$ as the set of rounds up to $t$ in which we receive feedback on product $i$ at level $\ell$, i.e.,
\begin{equation}
\tau^{(\ell)}_{i,t} \triangleq \{ t' \leq t ~|~ \ell_{t'} = \ell, \eta^{(\ell)}_{i,t'} = \eta^{(\ell)}_{i,t'-1} + 1 \}\,.
    \label{defn_tau}
\end{equation}
We note that $\tau^{(\ell)}_{i,t}$ is a random set that depends on the history.

\begin{corollary}[Corollary of Lemma \ref{lem_maxfake_levels}]
\label{corr_maxfake}
Given level $\ell \geq \ell^{\star}$, and conditioned on the event $\eventforc_2$, i.e., the event that 
 in the entire time horizon,  every level $\ell \geq \ell^{\star}$ is exposed to at most $\log(2\frac{\level}{\delta})+3$  fake users (see Definition~\ref{def:event}), the gap between the real reward and the observed reward for any product $i$ up to round $t \leq T$ can be bounded as follows: 
$$\Bigg|\sum_{t \in \tau_{i,t}^{(\ell)}}X_{i,t} - r^{(\ell)}_{i,t}\eta^{(\ell)}_{i,t}\Bigg| \leq \log(\frac{2\level}{\delta})+3\,.$$
Here, $X_{i,t}$ is the reward of product $i$ in round $t$, and  $\tau_{i,t}^{(\ell)}$ denotes the rounds up to $t$ in which we received feedback on product $i$ at level $\ell$ as defined in~\eqref{defn_tau}, $\delta =\frac{1}{n^3T}$, and $\level =\log_2(T)$.

\end{corollary}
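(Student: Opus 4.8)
The plan is to reduce the statement to a one-line counting argument. The quantity $r^{(\ell)}_{i,t}\eta^{(\ell)}_{i,t}$ is, by construction, exactly the number of \emph{observed} clicks recorded for product $i$ at level $\ell$ through round $t$, while $\sum_{t' \in \tau^{(\ell)}_{i,t}} X_{i,t'}$ is the corresponding sum of the \emph{true} Bernoulli$(\mu_i)$ rewards for those rounds. These two sums can disagree only in a round $t'$ in which level $\ell$ is sampled ($\ell_{t'}=\ell$) \emph{and} the customer is fake, and in each such round the discrepancy is at most one; conditioning on $\eventforc_2$ bounds the number of such rounds.

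First I would record the bookkeeping identity. From the ``Update Variables'' step of Algorithm~\ref{alg_cascading_agnostic}, for every product $i$ and level $\ell$ the running average $r^{(\ell)}_{i}$ and counter $\eta^{(\ell)}_{i}$ are maintained so that
\[
 r^{(\ell)}_{i,t}\,\eta^{(\ell)}_{i,t} \;=\; \sum_{t' \in \tau^{(\ell)}_{i,t}} C_{i,t'},
\]
where $C_{i,t'}\in\{0,1\}$ is the click observed on product $i$ in round $t'$ (equal to $1$ iff the round-$t'$ customer clicks on product $i$) and $\tau^{(\ell)}_{i,t}$ is the set of rounds defined in~\eqref{defn_tau}. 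This identity holds regardless of which product-ordering graph actually produced the displayed ranking $\pi_{t'}$: the counters for level $\ell_{t'}$ are incremented whenever $\ell_{t'}=\ell$, even in rounds where $G^{(\ell)}$ has been eliminated and the ranking is driven by a higher-level graph.

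Next I would partition $\tau^{(\ell)}_{i,t}$ into the subset $\tau^{\mathrm{real}}$ of rounds with a real customer and the subset $\tau^{\mathrm{fake}}$ of rounds with a fake customer. For $t'\in\tau^{\mathrm{real}}$ the real customer examines product $i$ (this is precisely what ``receiving feedback on $i$'' means), so under the cascade model her click on product $i$ is exactly the realization $X_{i,t'}$ of the Bernoulli$(\mu_i)$ draw, i.e.\ $C_{i,t'}=X_{i,t'}$. For $t'\in\tau^{\mathrm{fake}}$ I would only use $C_{i,t'},X_{i,t'}\in\{0,1\}$, so $|C_{i,t'}-X_{i,t'}|\le 1$. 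Hence
\[
 \Bigl|\sum_{t' \in \tau^{(\ell)}_{i,t}} X_{i,t'} - r^{(\ell)}_{i,t}\eta^{(\ell)}_{i,t}\Bigr|
 = \Bigl|\sum_{t' \in \tau^{(\ell)}_{i,t}} \bigl(X_{i,t'}-C_{i,t'}\bigr)\Bigr|
 \le |\tau^{\mathrm{fake}}|.
\]
Finally, every round of $\tau^{\mathrm{fake}}$ has $\ell_{t'}=\ell$ and contains a fake user, so $|\tau^{\mathrm{fake}}|$ is at most the total number of fake users to which level $\ell$ is exposed over the horizon; since $\ell\ge\ell^{\star}$, conditioning on $\eventforc_2$ bounds this count by $\log(2\tfrac{\level}{\delta})+3$ by the very definition of $\eventforc_2$ (established via Lemma~\ref{lem_maxfake_levels}), which is the claimed bound.

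Since every step is bookkeeping, there is no genuine obstacle; the only point needing a little care is the identity in the second paragraph for rounds in which $G^{(\ell)}$ has been eliminated but $\ell_{t'}=\ell$ is still sampled, so that such rounds contribute to $\tau^{(\ell)}_{i,t}$ and to $r^{(\ell)}_{i,t}\eta^{(\ell)}_{i,t}$ without introducing any corruption beyond the at-level-$\ell$ fake-user count. (Note this corollary concerns only $\eta^{(\ell)},r^{(\ell)}$, not their cross-learned versions $\hat\eta^{(\ell)},\hat r^{(\ell)}$, so the extra corruption from lower-level fake users plays no role here.)
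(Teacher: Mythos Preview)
Your proposal is correct and is precisely the intended argument: the paper states this corollary without a detailed proof, treating it as immediate from the definition of $\eventforc_2$, and your bookkeeping---identifying $r^{(\ell)}_{i,t}\eta^{(\ell)}_{i,t}$ as the sum of observed clicks, noting that observed and true rewards agree on real-user rounds and differ by at most one on fake-user rounds, then invoking the at-most-$\log(2\level/\delta)+3$ bound on fake users at level $\ell$---is exactly the one-line counting argument the paper has in mind. Your care about rounds where $G^{(\ell)}$ is eliminated is also well placed and correct.
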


\subsection{Proof of Lemma \ref{lem:even:E}} \label{sec:proof:lem:even:E} Our goal here is to show
the probability of event $\eventforc = \eventforc_1 \cap  \eventforc_2$ is at least $1-\delta = 1-\frac{1}{n^3T}$. Here,   $\eventforc_1 = \bigcap_{\substack{t \in [T]\\ \ell \geq \ell^{\star} \\ i \in [n]}} \eventforc^{(\ell)}_{i,t}$, where for any $t \in [T]$, $i \in [n]$, and $\ell  \geq \ell^{\star}= \ceil*{\log_2(F)}$, the event $\eventforc^{(\ell)}_{i,t}$ is defined as follows:
\begin{equation*}
  \eventforc^{(\ell)}_{i,t} = \{ \hat{r}^{(\ell)}_{i,t} - w^{(\ell)}_{i,t} \leq \mu_i \leq \hat{r}^{(\ell)}_{i,t} + w^{(\ell)}_{i,t} \}\,.
\end{equation*}
For the sake of brevity, we also define \begin{align}\eventforc^{(\ell)}_{i} \triangleq \bigcap_{t \in [T]} \eventforc^{(\ell)}_{i,t}\label{eq:event_G_i}\end{align}
for any product $i \in [n]$ and level $\ell \geq \ell^{\star}$. Finally,
 $\eventforc_2 
$ is the event that in the entire time horizon,  every level $\ell \geq \ell^{\star}$ is exposed to at most $\log(2\frac{\level}{\delta})+3$ fake users; see Lemma \ref{lem_maxfake_levels}. We begin the proof by bounding the probability of the event $\eventforc^{(\ell)}_{i}$ in the following claim---i.e., for a fixed product $i$, and level $\ell \geq \ell^{\star}$, the probability that the cross-learning empirical mean differs from the true mean of product $i$'s rewards. 

\begin{claim}\label{claim:event_2}
Conditioned  on event $\eventforc_2$, for any given product $i$ and level $\ell \geq \ell^{\star}$, event  $\eventforc^{(\ell)}_{i}$, defined in Equation \eqref{eq:event_G_i},  holds with probability at least $(1-\frac{1}{\P(\mathcal{G}_2)}\frac{\delta^3}{8n^3T})$, where $\delta = \frac{1}{n^3 T}$.
\end{claim}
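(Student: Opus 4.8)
The plan is to prove Claim~\ref{claim:event_2} by splitting the cross-learned estimate $\hat r^{(\ell)}_{i,t}$ into a ``clean'' weighted average of i.i.d.\ Bernoulli$(\mu_i)$ samples plus a bounded corruption term, and to match the two summands of the window $w^{(\ell)}_{i,t}=\sqrt{\tfrac32\log(4nT/\delta)/\max(\hat\eta^{(\ell)}_{i,t},1)}+(\log(2\level/\delta)+4)/\max(\hat\eta^{(\ell)}_{i,t},1)$ to these two error sources. First I would unfold Step~\ref{step_upwardcl_forc}: at the end of any round $t$,
\begin{equation*}
\hat r^{(\ell)}_{i,t}\,\hat\eta^{(\ell)}_{i,t} \;=\; r^{(\ell)}_{i,t}\,\eta^{(\ell)}_{i,t} \;+\; \frac{1}{2^{\ell}}\sum_{g=1}^{\ell-1} r^{(g)}_{i,t}\,\eta^{(g)}_{i,t},\qquad \hat\eta^{(\ell)}_{i,t}=\eta^{(\ell)}_{i,t}+\frac{1}{2^{\ell}}\sum_{g=1}^{\ell-1}\eta^{(g)}_{i,t}.
\end{equation*}
Then I would define a clean counterpart $\widetilde S^{(\ell)}_{i,t}$ of $\hat r^{(\ell)}_{i,t}\hat\eta^{(\ell)}_{i,t}$ by running the same update rules but feeding an independent fresh Bernoulli$(\mu_i)$ outcome in place of every click generated by a fake user; the effective count $\hat\eta^{(\ell)}_{i,t}$ is unchanged, since it weighs real and fake feedback identically. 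As in Lemma~\ref{lem:concent}, these fresh Bernoulli draws are i.i.d.\ Bernoulli$(\mu_i)$ and independent of the filtration generated by the level draws, the fake-user policy, and the other customers' clicks.

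Second, I would bound the corruption $\bigl|\hat r^{(\ell)}_{i,t}\hat\eta^{(\ell)}_{i,t}-\widetilde S^{(\ell)}_{i,t}\bigr|$ by $\log(2\level/\delta)+4$, the numerator of the second term of $w^{(\ell)}_{i,t}$. The level-$\ell$ part of this discrepancy is exactly the quantity controlled by Corollary~\ref{corr_maxfake}: conditioned on $\eventforc_2$, it is at most $\log(2\level/\delta)+3$. For the contribution of levels $g<\ell$, the only corrupted summands are those produced by fake users routed to those levels; the total number of fake rounds is at most $F$, each contributes a reward in $\{0,1\}$, and each is down-weighted by $2^{-\ell}$, so this part contributes at most $F/2^{\ell}\le 1$, using $2^{\ell}\ge 2^{\ell^{\star}}\ge F$ because $\ell\ge\ell^{\star}=\ceil*{\log_2(F)}$. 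Adding the two parts gives the claimed $\log(2\level/\delta)+4$ deterministically on $\eventforc_2$; this is where the hypothesis $\ell\ge\ell^{\star}$ is used in an essential way.

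Third, I would establish concentration of the clean term: with probability at least $1-\tfrac{\delta^3}{8n^3T}$, one has $\bigl|\widetilde S^{(\ell)}_{i,t}-\mu_i\hat\eta^{(\ell)}_{i,t}\bigr|\le \sqrt{\tfrac32\,\hat\eta^{(\ell)}_{i,t}\log(4nT/\delta)}$ for all $t\in[T]$. Writing $\widetilde S^{(\ell)}_{i,t}-\mu_i\hat\eta^{(\ell)}_{i,t}=\sum_{k}c_k(B_k-\mu_i)$, where $k$ ranges over the feedback events on product $i$ at levels $\le\ell$ in chronological order, $B_k$ is the clean Bernoulli$(\mu_i)$ outcome of the $k$-th such event, and $c_k\in\{1,2^{-\ell}\}$ is its cross-learning weight, this is a martingale in $k$ with increments bounded by $1$ and predictable quadratic variation at most $\tfrac14\sum_k c_k^2\le\tfrac14\sum_k c_k=\tfrac14\hat\eta^{(\ell)}_{i,t}$ (since $c_k\le1$). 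Applying a martingale Bernstein/Freedman inequality, and then a peeling/union bound over the at most $T$ rounds---mirroring the prefix-length union bound in the proof of Lemma~\ref{lem:concent}---gives the stated uniform-in-$t$ tail bound with the constants indicated (the factor $\tfrac32$ provides the slack needed to absorb the $\sum c_k^2$ term and the range term).

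Finally, on the intersection of the clean-concentration event of Step~three with $\eventforc_2$, the triangle inequality applied to Steps~two and three yields $\bigl|\hat r^{(\ell)}_{i,t}-\mu_i\bigr|\le w^{(\ell)}_{i,t}$ after dividing through by $\max(\hat\eta^{(\ell)}_{i,t},1)$, i.e.\ $\eventforc^{(\ell)}_{i,t}$ holds for every $t$, hence $\eventforc^{(\ell)}_i$ holds. Therefore $(\eventforc^{(\ell)}_i)^c\cap\eventforc_2$ is contained in the failure event of the clean concentration, so $\P\bigl((\eventforc^{(\ell)}_i)^c\cap\eventforc_2\bigr)\le \tfrac{\delta^3}{8n^3T}$, and dividing by $\P(\eventforc_2)$ gives the claim. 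I expect Step~three to be the main obstacle: one must handle concentration of an adaptively sampled, non-uniformly weighted average whose effective sample size $\hat\eta^{(\ell)}_{i,t}$ is itself a random, fractional, history-dependent quantity, and push this through a union bound over all $T$ rounds while keeping the failure probability as small as $\delta^3/(8n^3T)$; the secondary subtlety is verifying that the corruption from levels below $\ell$ really is at most $F/2^\ell\le1$ rather than merely ``small.''
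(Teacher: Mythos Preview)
Your decomposition into a clean weighted average plus a corruption term is exactly the paper's, and your bound on the corruption---$\log(2\level/\delta)+3$ at level $\ell$ via Corollary~\ref{corr_maxfake} plus at most $F/2^{\ell}\le 1$ from levels $g<\ell$, totalling $\log(2\level/\delta)+4$---is identical to the paper's Equations~\eqref{eqn_fakeness_measure}--\eqref{eqn_max_fake}. The conditioning argument at the end (bounding $\P\bigl((\eventforc^{(\ell)}_i)^c\cap\eventforc_2\bigr)$ and dividing by $\P(\eventforc_2)$) also matches.

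The one genuine difference is how the clean term is concentrated. You treat $\widetilde S^{(\ell)}_{i,t}-\mu_i\hat\eta^{(\ell)}_{i,t}$ as a single martingale with \emph{adaptive} weights $c_k\in\{1,2^{-\ell}\}$ and reach for Freedman plus peeling over $T$ rounds. The paper instead \emph{decouples} the two weight classes: it introduces two separate i.i.d.\ Bernoulli$(\mu_i)$ sequences $(X_t)$ and $(Y_t)$---one for level-$\ell$ feedback, one for all sub-$\ell$ feedback---so that the clean estimate becomes $\hat x^{(\ell)}_{i,t}=\bigl(\sum_{t'\le\eta_1}X_{t'}+2^{-\ell}\sum_{t'\le\eta_2}Y_{t'}\bigr)/\hat\eta$ with $\hat\eta=\eta_1+\eta_2/2^{\ell}$. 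It then union-bounds over all \emph{pairs} of integer prefix lengths $(\eta_1,\eta_2)\in\{0,\dots,T\}^2$, giving $(T+1)^2\le 4T^2$ terms; for each fixed pair the weights are deterministic, so ordinary Hoeffding (Lemma~\ref{lem_hoeffdingweighted_mean2}) applies directly and yields $2\exp(-2\epsilon^2\hat\eta)$, which after the union bound gives exactly $\delta^3/(8n^3T)$.

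What this buys: the pair-of-prefix-lengths trick removes the adaptive-weight difficulty entirely and makes the constants fall out of a one-line Hoeffding computation, whereas your martingale route has to cope with a deviation threshold that scales with the \emph{random} quantity $\hat\eta^{(\ell)}_{i,t}$ (so a naive union over $t\in[T]$, ``mirroring'' Lemma~\ref{lem:concent}, is not enough---fixing $t$ does not fix the weight composition). Your approach can be made to work with Freedman plus a peeling over dyadic ranges of $\hat\eta$, but reproducing the exact $\tfrac32$ constant and the $\delta^3/(8n^3T)$ failure probability would require some care. If you want the cleanest path, adopt the paper's two-sequence decoupling and the $(\eta_1,\eta_2)$ union bound in place of the martingale step.
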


The proof of the claim is deferred to the end. The probability that event $\eventforc$ holds can now be derived as follows. 
\begin{align*}
    \P(\eventforc) & = \P(\eventforc_1 \cap \eventforc_2)  = 1 - \P(\eventforc^c_1 \cup \eventforc^c_2) 
     = 1 - \P(\eventforc^c_2) - \P(\eventforc^c_1 \cap \eventforc_2) \\& = 1 - \P(\eventforc^c_2) - \P(\eventforc^c_1 ~|~  \eventforc_2) \P(\eventforc_2) 
     \geq 1 - \frac{\delta}{2} - \frac{\delta}{2}\frac{1}{\P(\mathcal{G}_2)} \times \P(\eventforc_2)  \ge 1-\delta\,.
\end{align*}
 In the above expressions,  we use (a) the trivial inequality that $\P(\eventforc_2) \leq 1$, (b)
 Lemma \ref{lem_maxfake_levels}, where we show $\P(\eventforc^c_2) \le \delta/2$, and (c) Claim  \ref{claim:event_2}. Specifically, by this claim, 
 $$\P(\eventforc_1^c | \eventforc_2) \leq \sum_{i=1}^n\sum_{\ell=\ell^{\star}}^{\log_2(T)} \P((\eventforc^{(\ell)}_{i})^c | \eventforc_2) \leq  n\log_2(T)\cdot \frac{1}{\P(\mathcal{G}_2)}\frac{\delta^3}{8n^3 T} \leq \frac{1}{\P(\mathcal{G}_2)}\frac{\delta}{2}. \hfill \Halmos$$

\subsubsection*{Proof of Claim \ref{claim:event_2}}
Fix product $i$ and level $\ell \geq \ell^{\star}$. We begin by noting that the following event is trivially true at any round $t$ in which $\hat{\eta}^{(\ell)}_{i,t} = 0$
\begin{equation*}
  \eventforc^{(\ell)}_{i,t} = \{ \hat{r}^{(\ell)}_{i,t} - w^{(\ell)}_{i,t} \leq \mu_i \leq \hat{r}^{(\ell)}_{i,t} + w^{(\ell)}_{i,t} \}\,.
\end{equation*}
This is because by definition---see~ Equation \eqref{eqn_windowsize_agnostic}---$w^{(\ell)}_{i,t} > 1$ when $\hat{\eta}^{(\ell)}_{i,t} = 0$. Moreover, since $0 \leq \mu_i \leq 1$, this implies that $\hat{r}^{(\ell)}_{i,t} - w^{(\ell)}_{i,t} \leq \mu_i \leq \hat{r}^{(\ell)}_{i,t} + w^{(\ell)}_{i,t}$. For the remainder of this proof, without loss of generality, we assume that  any round $t$ considered satisfies $\hat{\eta}^{(\ell)}_{i,t} > 0$.

Next, the cross-learning empirical mean in round $t$ can be expressed as
\[ \hat r_{i,t}^{(\ell)} = \frac{\sum_{g=1}^{\ell-1}\eta^{(g)}_{i,t} r_{i,t}^{(g)}}{2^{\ell}\hat\eta^{(\ell)}_{i,t}}+ \frac{\eta_{i,t}^{(\ell)}}{\hat\eta^{(\ell)}_{i,t}} r_{i,t}^{(\ell)}\,, \]
where $\hat\eta_{i,t}^{(\ell)} = \frac{\sum_{g=1}^{\ell-1}\eta^{(g)}_{i,t}}{2^{\ell}}+ \eta_{i,t}^{(\ell)}$. Note that $r^{(\ell)}_{i,t}\eta^{(\ell)}_{i,t}$ is the sum of observed rewards for product $i$ at level $\ell$ over all the rounds where the algorithm selects level $\ell$. Some of these observations could be from fake users. However, we know from Lemma~\ref{lem_maxfake_levels} that with high probability, at most $\log(\frac{2\level}{\delta})+3$ of these rounds  contain fake users and potentially corrupted rewards. Therefore, before showing a concentration inequality for $\hat r_{i,t}^{(\ell)}$ as is required for this claim, we first prove bounds on the empirical mean (for the hypothetical case) when there is no fake user and all of the rewards are sampled i.i.d. from a Bernoulli distribution with mean $\mu_i$. We then use the triangle equality and an upper bound on the number of fake users to transform this hypothetical scenario to the case with fake users.

Formally, we can interpret the process by which rewards are obtained from real customers as follows. Suppose that there two sequences of real rewards $(X_1, X_2, \ldots,X_T)$ and $(Y_1, Y_2, \ldots, Y_T)$ of each of length $T$ such that:
\begin{enumerate}
    \item All $T$ samples in both sequences are independently drawn from a Bernoulli distribution with mean $\mu_i$.
    \item If our algorithm picks level $\ell$ at some round $t$ and we receive feedback on product $i$ during this round from a real customer, then we observe sample $X_{\eta_{1,t}}$ where (for simplicity) $\eta_{1,t} \triangleq \eta^{(\ell)}_{i,t}$ is the total number of times we have received feedback on product $i$ up to round $t$.
    \item If our algorithm picks level $g < \ell$ at some round $t$ and we receive feedback on product $i$ during this round from a real customer, then we observe sample $Y_{\eta_{2,t}}$ where $\eta_{2,t} \triangleq \sum_{g=1}^{\ell-1}\eta^{(g)}_{i,t}$ is the total number of times we have received feedback on product $i$ on all levels smaller than $\ell$ up to round $t$.
\end{enumerate}
Note that when there is a fake user in a particular round, the corresponding sample ($X_{\eta_{1,t}}$ or $Y_{\eta_{2,t}}$) is ignored and a potentially different sample is obtained. 

Consider the term:
\[ \hat{x}^{(\ell)}_{i,t} \triangleq \frac{\sum_{t'=1}^{\eta_{1,t}}X_{t'}}{\hat\eta^{(\ell)}_{i,t}} + \frac{\sum_{t' =1}^{\eta_{2,t}}Y_{t'} }{2^{\ell}\hat\eta^{(\ell)}_{i,t}}, \]
where $\hat\eta^{(\ell)}_{i,t} = \eta_{1,t} + \eta_{2,t}/2^{\ell}$, as defined in Algorithm~\ref{alg_cascading_agnostic}. This quantity denotes the cross-learning empirical mean for level $\ell$ in the absence of fake customers. We seek to apply Lemma~\ref{lem_hoeffdingweighted_mean2} to bound the probability that $\hat{x}^{(\ell)}_{i,t}$, defined above, exceeds its true mean. First, for any given $\eta > 0$, define 
$$\epsilon_{\eta} \triangleq  \sqrt{\frac{3}{2}\frac{\log(\frac{4nT}{\delta})}{\eta}}\,,$$
where $\delta =\frac{1}{n^3T}$. Note that $\epsilon_{{\hat \eta}^{(\ell)}_{i,t}}= w^{(\ell)}_{i,t} - \frac{\log(\frac{2\level }{\delta})+4}{{\hat \eta}^{(\ell)}_{i,t}}$, 
where $w^{(\ell)}_{i,t}$ is the window size as defined in Equation \eqref{eqn_windowsize_agnostic}.
\begin{align*}
\P\left( \exists t:  |\hat{x}^{(\ell)}_{i,t} - \mu_i| > \epsilon_{{\hat \eta}^{(\ell)}_{i,t}} \right) & = \P\left( \exists (\eta_{1,t}, \eta_{2,t}): \left| \frac{\sum_{t'=1}^{\eta_{1,t}}X_{t'}}{\hat\eta^{(\ell)}_{i,t}} + \frac{\sum_{t' =1}^{\eta_{2,t}}Y_{t'} }{2^{\ell}\hat\eta^{(\ell)}_{i,t}} - \mu_i \right| > \epsilon_{{\hat \eta}^{(\ell)}_{i,t}} \right) \notag \\
& \leq \P\left( \exists (\eta_{1}, \eta_{2}): \left| \frac{\sum_{t=1}^{\eta_{1}}X_{t}}{\hat\eta} + \frac{\sum_{t =1}^{\eta_{2}}Y_{t} }{2^{\ell}\hat\eta} - \mu_i \right| > \epsilon_{\hat\eta}~,~  0 <\eta_1, \eta_2\le T~,~ \hat \eta = \eta_1+\frac{\eta_2}{2^{\ell}} \right) \\
& \leq \sum_{\eta_1=0}^T \sum_{\substack{\eta_2=0}}^T \ind(\eta_1 + \eta_2 > 0)\P\left( \left| \frac{\sum_{t=1}^{\eta_{1}}X_{t}}{\hat\eta} + \frac{\sum_{t =1}^{\eta_{2}}Y_{t} }{2^{\ell}\hat\eta} - \mu_i \right| > \epsilon_{\hat\eta} \right),
\end{align*}
 where we use  $\hat\eta$  as short hand for $\eta_1 + \eta_2/2^{\ell}$. We now apply Lemma~\ref{lem_hoeffdingweighted_mean2} directly on the expression in the last inequality  taking $\epsilon = \epsilon_{\hat\eta} = \sqrt{\frac{3\log(\frac{4nT}{\delta})}{2\hat\eta}}$. This gives us:
\begin{align}
    \sum_{\eta_1=0}^T \sum_{\eta_2=0}^T \ind(\eta_1 + \eta_2 > 0)\P\left( \left| \frac{\sum_{t=1}^{\eta_{1}}X_{t}}{\hat\eta} + \frac{\sum_{t =1}^{\eta_{2}}Y_{t} }{2^{\ell}\hat\eta} - \mu_i \right| > \epsilon_{\hat\eta} \right) & \leq \sum_{\eta_1=0}^T \sum_{\eta_2=0}^T 2\exp(-3\cdot\log(\frac{4nT}{\delta})) \notag\\
    & = 2 (T+1)^2 \cdot \frac{\delta^3}{4^3 n^3T^3}\notag\\
    & \leq 2 \cdot  (2T)^2 \cdot \frac{\delta^3}{4^3 n^3T^3}\notag\\
    & = \frac{\delta^3}{8n^3T}.\notag
\end{align}

Note that the second inequality  is due to $T +1 \leq 2T$ since $T \geq 1$. In summary, we have that:
\begin{equation}
\P\left(\exists t: \left| \hat{x}^{(\ell)}_{i,t} - \mu_{i} \right| > w^{(\ell)}_{i,t} - \frac{\log(\frac{2\level}{\delta})+4}{{\hat \eta}^{(\ell)}_{i,t}}\right) \leq \frac{\delta^3}{8n^3T}\,.
\label{eqn_hoeffding_true}
\end{equation}

Using this, we can bound the desired quantity $\hat{r}^{(\ell)}_{i,t}$ which differs from $\hat{x}^{(\ell)}_{i,t}$ only in terms of the feedback due to fake users. 
\begin{align}
    |\hat{r}^{(\ell)}_{i,t} - \mu_{i} | & \leq |\hat{r}^{(\ell)}_{i,t} - \hat{x}^{(\ell)}_{i,t} | + |\hat{x}^{(\ell)}_{i,t} - \mu_{i} | \notag\\
    & \leq \frac{1}{2^{\ell}\hat {\eta}^{(\ell)}_{i,t}}\left|\sum_{g=1}^{\ell-1}r_{i,t}^{(g)}\eta_{i,t}^{(g)} -  \sum_{g=1}^{\ell-1}\sum_{t' \in \tau^{(g)}_{i,t}}X_{i,t'} \right| +  \frac{1}{\hat {\eta}^{(\ell)}_{i,t}}|r_{i,t}^{(\ell)}\eta_{i,t}^{(\ell)} - \sum_{t' \in \tau^{(\ell)}_{i,t}}X_{i,t'}  | + |\hat{x}^{(\ell)}_{i,t} - \mu_{i} | \notag\\
    & \leq \frac{1}{2^{\ell}\hat {\eta}^{(\ell)}_{i,t}} F + \frac{\log(\frac{2\level }{\delta})+3}{{\hat \eta}^{(\ell)}_{i,t}}  + |\hat{x}^{(\ell)}_{i,t} - \mu_{i} | \label{eqn_fakeness_measure}\\
    & \leq \frac{1}{\hat {\eta}^{(\ell)}_{i,t}} + \frac{\log(\frac{2\level}{\delta})+3}{{\hat \eta}^{(\ell)}_{i,t}}  + |\hat{x}^{(\ell)}_{i,t} - \mu_{i} | \label{eqn_max_fake}\\
    & = \frac{\log(\frac{2\level}{\delta})+4}{{\hat \eta}^{(\ell)}_{i,t}}  + |\hat{x}^{(\ell)}_{i,t} - \mu_{i} |\notag\,.
\end{align}
Equation~\eqref{eqn_fakeness_measure} follows from the following two arguments:
\begin{itemize}
    \item $\left|\sum_{g=1}^{\ell-1}r_{i,t}^{(g)}\eta_{i,t}^{(g)} -  \sum_{g=1}^{\ell-1}\sum_{t' \in \tau^{(g)}_{i,t}}X_{i,t'} \right| \leq F$ because the maximum number of fake samples (arising from fake users) over all the rounds in $(\tau^{(g)}_{i,t})_{g=1}^{\ell-1}$ is at most $F$.
    
    \item $|r_{i,t}^{(\ell)}\eta_{i,t}^{(\ell)} - \sum_{t' \in \tau^{(\ell)}_{i,t}}X_{i,t'}  |  \leq \log(\frac{2\level}{\delta})+3$ as per Corollary~\ref{corr_maxfake}, stated in Section \ref{sec:corollary}, and conditioned on $\mathcal{G}_2$.
\end{itemize}
Finally Equation ~\eqref{eqn_max_fake} is a consequence of the fact that $F \leq 2^{\ell}$ since $\ell \geq \ell^{\star} = \ceil*{\log_2(F)}$. We are now ready to complete the proof. Going back to the statement of this claim and applying~\eqref{eqn_hoeffding_true}, we have that: 
\begin{align*}
\P\left(\mathcal{G}^{(\ell)}_i ~|~ \mathcal{G}_2\right) & =  1- \P\left(\left| \hat{r}^{(\ell)}_{i,t} - \mu_{i} \right| > w^{(\ell)}_{i,t} ~|~\mathcal{G}_2\right)\\ & \geq 1-\P\left(\left|  \hat{x}^{(\ell)}_{i,t} - \mu_{i} \right| + \frac{\log(\frac{2\level}{\delta})+4}{{\hat \eta}^{(\ell)}_{i,t}} > w^{(\ell)}_{i,t}~|~\mathcal{G}_2\right) \\
    & = 1-\P\left(\left|  \hat{x}^{(\ell)}_{i,t} - \mu_{i} \right|   > w^{(\ell)}_{i,t} - \frac{\log(\frac{2\level}{\delta})+4}{{\hat \eta}^{(\ell)}_{i,t}}~|~\mathcal{G}_2\right) \\
        & \geq 1-\frac{1}{\P(\mathcal{G}_2)}\P\left(\left|  \hat{x}^{(\ell)}_{i,t} - \mu_{i} \right|   > w^{(\ell)}_{i,t} - \frac{\log(\frac{2\level}{\delta})+4}{{\hat \eta}^{(\ell)}_{i,t}}\right) \\
    & \geq 1-\frac{1}{\P(\mathcal{G}_2)}\frac{\delta^3}{8n^3T}. \hfill \Halmos
\end{align*}

\begin{lemma}
\label{lem_hoeffdingweighted_mean2}
Fix a positive integer $\ell > 0$, and $\eta_1, \eta_2 \in \{0,1,\ldots,T\}$ and define $\hat{\eta} = \eta_1 + \frac{\eta_2}{2^{\ell}}$. Suppose that $X_1, X_2, \ldots, X_T$ and $Y_1, Y_2, \ldots, Y_T$ are two sequences of $\{0, 1\}$-valued i.i.d. random variables with mean $\mu$. Given that $\hat{\eta}> 0$, for any $\epsilon > 0$, we have that:

$$\P\left(\left|\frac{1}{\hat\eta}\left(\sum_{i=1}^{\eta_1} X_i + \sum_{i=1}^{\eta_2}\frac{Y_i}{2^{\ell}}\right) - \mu \right| > \epsilon\right) \leq 2\exp(-2\eps^2\hat{\eta})\,.$$
\end{lemma}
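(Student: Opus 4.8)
The plan is to recognize the quantity inside the probability as a normalized sum of mutually independent, bounded random variables, and then to apply the general (interval-version) Hoeffding inequality, exploiting the fact that the weights $2^{-\ell}$ attached to the $Y_i$ contribute only $4^{-\ell}$ to the sum of squared ranges.

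First I would set $S \triangleq \sum_{i=1}^{\eta_1} X_i + \sum_{i=1}^{\eta_2} \frac{Y_i}{2^{\ell}}$, so that $S$ is a sum of $\eta_1 + \eta_2$ mutually independent random variables: the $\eta_1$ variables $X_i$ each take values in $[0,1]$, and the $\eta_2$ variables $Y_i/2^{\ell}$ each take values in $[0, 2^{-\ell}]$. By linearity and the assumption $\E[X_i] = \E[Y_i] = \mu$, we have $\E[S] = \mu(\eta_1 + \eta_2/2^{\ell}) = \mu\hat\eta$. Since $\hat\eta > 0$, the event in the statement is exactly $\{|S - \E[S]| > \epsilon\hat\eta\}$. (If one of $\eta_1,\eta_2$ equals zero, the corresponding empty sum is simply dropped; the case $\eta_1=\eta_2=0$ is excluded by the hypothesis $\hat\eta>0$.)

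Next I would invoke Hoeffding's inequality in its general form: for independent random variables $Z_1,\dots,Z_m$ with $Z_j \in [a_j,b_j]$ almost surely and $S=\sum_j Z_j$, one has $\P(|S-\E[S]|>t) \le 2\exp\!\big(-2t^2 / \sum_{j=1}^m (b_j-a_j)^2\big)$. Applied to our $S$, the sum of squared ranges is $\sum_j (b_j-a_j)^2 = \eta_1\cdot 1^2 + \eta_2\cdot (2^{-\ell})^2 = \eta_1 + \eta_2\, 4^{-\ell}$. Because $\ell$ is a positive integer we have $4^{-\ell} \le 2^{-\ell}$, hence $\eta_1 + \eta_2\, 4^{-\ell} \le \eta_1 + \eta_2\, 2^{-\ell} = \hat\eta$. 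Taking $t=\epsilon\hat\eta$ and then replacing the denominator $\eta_1+\eta_2\,4^{-\ell}$ by the larger quantity $\hat\eta$ (which only weakens the bound) gives
\[
\P\big(|S-\mu\hat\eta|>\epsilon\hat\eta\big) \;\le\; 2\exp\!\left(-\frac{2\epsilon^2\hat\eta^2}{\eta_1+\eta_2\,4^{-\ell}}\right) \;\le\; 2\exp\!\left(-\frac{2\epsilon^2\hat\eta^2}{\hat\eta}\right) \;=\; 2\exp(-2\epsilon^2\hat\eta),
\]
which is the claim.

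There is no serious obstacle here; this is a routine application of Hoeffding. The only point requiring a moment of care — and the reason the lemma is stated with the particular normalization $\hat\eta = \eta_1+\eta_2/2^{\ell}$ — is the mismatch between the weight $2^{-\ell}$ that appears in both $S$ and $\hat\eta$ and the weight-squared $4^{-\ell}$ that appears in Hoeffding's denominator; observing that $4^{-\ell}\le 2^{-\ell}$ is exactly what lets the $\hat\eta^2$ in the numerator cancel cleanly against the denominator. I would also make sure to note explicitly that all the $X_i$'s and $Y_i$'s are mutually independent (not merely independent within each sequence), since Hoeffding's inequality requires this.
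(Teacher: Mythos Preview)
Your proof is correct and follows essentially the same approach as the paper: both apply the general interval-form Hoeffding inequality to the weighted sum, compute the sum of squared ranges as $\eta_1 + \eta_2\cdot 2^{-2\ell}$, and then use $2^{-2\ell}\le 2^{-\ell}$ to replace the denominator by $\hat\eta$. If anything, your version is slightly more careful in writing the final step as an inequality rather than an equality and in flagging the mutual-independence requirement.
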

\begin{proof}{Proof of Lemma \ref{lem_hoeffdingweighted_mean2}}
Note that since $\hat\eta > 0$, it must be the case that at least one of $\eta_1$ or $\eta_2$ is non-zero. Since $\E[X_i] = \E[Y_i] = \mu$ (for all $i$), we have that 

$$\E\left[\sum_{i=1}^{\eta_1}X_i + \sum_{i=1}^{\eta_2} \frac{Y_i}{2^{\ell}}\right] = \hat{\eta}\mu\,.$$

By applying Hoeffding's inequality (Lemma \ref{lem_hoeffding_sum}), it follows that

$$\P\left(\left|\frac{1}{\hat{\eta}}\left(\sum_{i=1}^{\eta_1}X_i + \sum_{i=1}^{\eta_2}\frac{Y_i}{2^{\ell}}\right) - \mu \right| > \epsilon\right) \leq 2\exp\left(-\frac{2\hat{\eta}^2\eps^2}{\eta_1 + \eta_2\cdot 2^{-2\ell}}\right) = 2\exp\left(-2\hat{\eta}\eps^2\right).\Halmos$$

\end{proof}

\begin{lemma}{(Hoeffding's Sum Inequality~\citep{hoeffding1994probability})} 
\label{lem_hoeffding_sum}
Suppose that $X_1, X_2, \ldots, X_m$ are independent random variables such that for all $i$, $a_i \leq X_i \leq b_i$, and let $S_m = \sum_{i=1}^m X_i$. Then, we have that: 

$$\P(|S_n - \E[S_n]| > t) \leq 2\exp\left( - \frac{2t^2}{\sum_{i=1}^m (b_i - a_i)^2} \right).\Halmos$$
\end{lemma}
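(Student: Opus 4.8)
The plan is to prove this classical concentration inequality by the standard Chernoff-bounding argument combined with Hoeffding's lemma on the moment generating function of a bounded, mean-zero random variable. Since the hypotheses are symmetric under replacing each $X_i$ by $-X_i$ (which leaves $\sum_i (b_i-a_i)^2$ unchanged), it suffices to bound the upper tail $\P(S_m - \E[S_m] > t)$ by $\exp\!\big(-2t^2/\sum_{i=1}^m (b_i-a_i)^2\big)$, and then add the two one-sided bounds to recover the factor $2$. For the upper tail, set $Y_i = X_i - \E[X_i]$, so $\E[Y_i]=0$ and $Y_i$ lies in an interval of length $\ell_i := b_i - a_i$. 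For any $s>0$, applying Markov's inequality to $e^{s(S_m - \E[S_m])} = \prod_{i=1}^m e^{sY_i}$ and using independence gives
\[
\P\!\left(S_m - \E[S_m] > t\right) \;\le\; e^{-st}\,\prod_{i=1}^m \E\!\left[e^{sY_i}\right].
\]

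The key lemma is that for a mean-zero random variable $Y$ supported in an interval of length $\ell$, one has $\E[e^{sY}] \le e^{s^2\ell^2/8}$ for all $s\in\mathbb{R}$. I would prove this by setting $\psi(s) = \log \E[e^{sY}]$ and observing $\psi(0)=0$ and $\psi'(0)=\E[Y]=0$; a direct differentiation shows $\psi''(s)$ equals the variance of $Y$ under the exponentially tilted measure $d\mathbb{Q} \propto e^{sY}\,d\mathbb{P}$, and since $Y$ (hence its tilt) is confined to an interval of length $\ell$, the Popoviciu-type bound gives $\psi''(s)\le \ell^2/4$. A second-order Taylor expansion of $\psi$ about $0$ then yields $\psi(s) \le s^2\ell^2/8$.

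Plugging this into the previous display,
\[
\P\!\left(S_m - \E[S_m] > t\right) \;\le\; \exp\!\left(-st + \tfrac{s^2}{8}\sum_{i=1}^m \ell_i^2\right),
\]
and choosing $s = 4t\big/\sum_{i=1}^m \ell_i^2$ minimizes the exponent, producing $\P(S_m - \E[S_m] > t) \le \exp\!\big(-2t^2/\sum_{i=1}^m \ell_i^2\big)$. Applying the identical argument to the sequence $(-X_i)_i$ controls the lower tail by the same quantity, and a union bound over the two tail events gives the stated two-sided inequality with the factor $2$.

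The only non-routine ingredient is Hoeffding's lemma — specifically the estimate $\psi''(s)\le \ell^2/4$, which rests on identifying $\psi''(s)$ with a tilted variance and on the fact that a random variable supported in an interval of length $\ell$ has variance at most $\ell^2/4$. Everything else (the exponential Markov step, optimizing over $s$, and the symmetrization for the two-sided bound) is mechanical. Since this is a textbook result, an acceptable alternative is simply to invoke \cite{hoeffding1994probability}; I would nonetheless include the short self-contained derivation above for completeness.
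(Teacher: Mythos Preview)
Your proof is correct and follows the standard Chernoff--Hoeffding argument. The paper does not prove this lemma at all; it simply states it with a citation to \cite{hoeffding1994probability} and moves on, treating it as a known textbook result. Your self-contained derivation via the exponential Markov bound, Hoeffding's lemma on the tilted cumulant generating function, optimization over $s$, and symmetrization is the canonical proof and would serve perfectly well if one wished to include it, but the paper's approach is just to invoke the reference---exactly the alternative you mention in your last sentence.
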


\subsection{Proof of Lemma \ref{lem:parts1}}

We first present two auxiliary lemmas that provide (a)  properties of the product-ordering graphs under  the good event $\eventforc$, and (b) the necessary and sufficient conditions  to mistakenly place product $i$ in position $j<i$ under event $\eventforc$. These two lemmas allow us to  bound $\E\left[\sum_{t=1}^T \ind(\event_{r,t}(\pi_t(\p{j}) = i) \cap \ell_t \ge  \ell^{\star})~|~ \eventforc \right]$.

Let us start by a simple observation. Recall from Equation \eqref{eqn_gamma_agnostic} that for any $i>j$, $\gamma_{j,i}$ is defined as: 
\begin{equation}
    \gamma_{j,i} = \frac{64\log(4nT/\delta)}{\Delta^2_{j,i}}\,.
    \end{equation}  Observe that for any $i > j$ and level $\ell$ if $\hat{\eta}^{(\ell)}_{i,t} \geq \gamma_{j,i}$, then by definition of  $w^{(\ell)}_{i,t}$ in Equation \eqref{eqn_windowsize_agnostic}, we have that: 
    \begin{align*}
    w^{(\ell)}_{i,t} & \leq \sqrt{\frac{3\Delta^2_{j,i}}{2\cdot64}} + \Delta^2_{j,i}  \frac{\log(\frac{2\log_2(T)}{\delta})+4}{64\log(4nT/\delta)} \\ & \leq \frac{\Delta_{j,i}}{8}\sqrt{\frac{3}{2}} + \Delta_{j,i}  \frac{\log(\frac{4nT}{\delta})+4}{64\log(4nT/\delta)} \\
    & \leq \frac{\Delta_{j,i}}{8}\sqrt{\frac{3}{2}} + \Delta_{j,i}  \frac{4\log(\frac{4nT}{\delta})}{64\log(4nT/\delta)} \\
    & = \frac{\Delta_{j,i}}{8}\sqrt{\frac{3}{2}} + \frac{\Delta_{j,i}}{16} \\
    & \leq \frac{\Delta_{j,i}}{4}. 
    \end{align*}
 In the third inequality, we used the simple fact that $4 \leq 3\log(4) \leq 3\log(\frac{4nT}{\delta})$ since $n, T \geq 1$ and $\delta \leq 1$. The final inequality follows from simple algebra, namely that: $\frac{1}{8}\sqrt{\frac{3}{2}} + \frac{1}{16} \leq \frac{1}{4}$.
 
 By symmetry, the above argument is also true for $w^{(\ell)}_{j,t}$ when $\hat{\eta}^{(\ell)}_{j,t} \geq \gamma_{j,i}$.  In other words, once we receive feedback on product $i$ (product $j$) at least $\gamma_{j,i}$ times, its confidence interval becomes smaller than its gap to product $j$ (product $i$). This allows our algorithm to correctly add an edge $(i,j)$ to the product-ordering graph $G^{(\ell)}$, which we will formally prove below.

\begin{lemma}[Properties of Graph $G$ under Event $\eventforc$]
\label{lem_pairwise_edgeadd_agnostic}
Assume that event $\eventforc$ holds. Then, for any two products $i, j$ with $i > j$ and level $\ell \geq \ell^{\star}$:
\begin{enumerate}
    \item In any round $t$, the product-ordering graph $G^{(\ell)}$ does not contain an incorrect edge from $j$ to $i$.
    
    \item  Suppose that in round $t$, $\hat{\eta}^{(\ell)}_{i,t}, \hat \eta^{(\ell)}_{j,t} \geq \gamma_{j,i}$. Then, the product-ordering graph  $G^{(\ell)}$ contains a correct outgoing edge from $i$ to $j$ in round $t'\ge t$. Here, $\gamma_{j,i}$ is defined in Equation \eqref{eqn_gamma_agnostic}. 
\end{enumerate}

\end{lemma}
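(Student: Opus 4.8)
\textbf{Proof proposal for Lemma~\ref{lem_pairwise_edgeadd_agnostic}.}

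The plan is to mirror the structure of the proof of Lemma~\ref{lem_pairwise_edgeadd}, but working with the cross-learning empirical means $\hat r^{(\ell)}_{i,t}$ and cross-learning feedback counts $\hat\eta^{(\ell)}_{i,t}$ rather than the raw quantities, and invoking the confidence bound encoded in event $\eventforc$ (specifically event $\eventforc_1$, which controls every $\eventforc^{(\ell)}_{i,t}$ for $\ell\ge\ell^{\star}$) in place of the event $\mathcal{E}$. For part~(1), I would argue by contradiction: suppose graph $G^{(\ell)}$ contains the edge $(j,i)$ for some $i>j$ and some $\ell\ge\ell^{\star}$ at round $t$. Since $\ell\ge\ell^{\star}$, this edge cannot have arrived via downward cross-learning from an even higher level without the same addition rule~\eqref{eq:alg:edge} having been triggered at some level $\ge\ell$ (all of which are $\ge\ell^{\star}$); hence the inequality $\hat r^{(\ell')}_{j} + w^{(\ell')}_{j} < \hat r^{(\ell')}_{i} - w^{(\ell')}_{i}$ held at some round $t'\le t$ for some level $\ell'\ge\ell\ge\ell^{\star}$. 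But under $\eventforc$ we have $\hat r^{(\ell')}_{j,t'} + w^{(\ell')}_{j,t'} \ge \mu_j > \mu_i \ge \hat r^{(\ell')}_{i,t'} - w^{(\ell')}_{i,t'}$, which contradicts the edge-addition condition. One subtlety I need to handle carefully here: the edge-addition step~\eqref{step_addedge_forc} adds edges to $G^{(g)}$ for all $g\le\ell'$, and cycles only trigger elimination of lower-indexed graphs, so no erroneous edge can be injected into a non-eliminated $G^{(\ell)}$ with $\ell\ge\ell^{\star}$ from above; this is exactly the monotonicity I should state explicitly.

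For part~(2), I would use the preliminary computation already sketched in the excerpt: if $\hat\eta^{(\ell)}_{i,t}\ge\gamma_{j,i}$ then $w^{(\ell)}_{i,t}\le\Delta_{j,i}/4$, and symmetrically for $j$. Then, conditioning on $\eventforc$ and running the same chain of inequalities as in the proof of Lemma~\ref{lem_pairwise_edgeadd} (part 2) — namely $\hat r^{(\ell)}_{i,t}+w^{(\ell)}_{i,t}\le\mu_i+2w^{(\ell)}_{i,t}=\mu_j-\Delta_{j,i}+2w^{(\ell)}_{i,t}\le \hat r^{(\ell)}_{j,t}+w^{(\ell)}_{j,t}+2w^{(\ell)}_{i,t}-\Delta_{j,i}=(\hat r^{(\ell)}_{j,t}-w^{(\ell)}_{j,t})+2w^{(\ell)}_{j,t}+2w^{(\ell)}_{i,t}-\Delta_{j,i}\le \hat r^{(\ell)}_{j,t}-w^{(\ell)}_{j,t}$ — shows condition~\eqref{eq:alg:edge} is satisfied at level $\ell$ in round $t$, so the edge $(i,j)$ is added to $G^{(\ell)}$ (and all lower non-eliminated graphs). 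To get the claim for all $t'\ge t$, I would note that $\hat\eta^{(\ell)}_{i,\cdot}$ and $\hat\eta^{(\ell)}_{j,\cdot}$ are nondecreasing in $t$ (both the raw counts and their cross-learning combinations only increase), so the hypothesis persists and the edge, once added, is never removed from $G^{(\ell)}$ — crucially, $G^{(\ell)}$ for $\ell\ge\ell^{\star}$ is never eliminated under $\eventforc$ because by part~(1) it contains no erroneous edges and hence (by acyclicity of a graph consistent with the true order) no cycle.

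The main obstacle I anticipate is not the inequality manipulations, which are essentially identical to the FAR case, but rather bookkeeping around the cross-learning bookkeeping and graph elimination: I must be careful that $\hat\eta^{(\ell)}_{i,t}$ and $\hat r^{(\ell)}_{i,t}$ as defined by Equations~\eqref{eq:eta_cr}–\eqref{eq:r_cr} are only updated in rounds where $\ell\ge\ell_t$, so monotonicity of $\hat\eta^{(\ell)}_{i,t}$ needs a short justification (each update replaces the old value by one that includes strictly more nonnegative feedback terms), and I must confirm that an edge added at some level $\ell'>\ell$ and pushed down to $G^{(\ell)}$ is still a \emph{correct} edge — which follows from applying part~(1) at level $\ell'$. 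I would also need to state at the outset that we work on the event $\eventforc$, so that $\eventforc^{(\ell)}_{i,t}$ holds for every $i\in[n]$, $\ell\ge\ell^{\star}$, $t\in[T]$, which is what licenses the two-sided bound $\hat r^{(\ell)}_{i,t}-w^{(\ell)}_{i,t}\le\mu_i\le\hat r^{(\ell)}_{i,t}+w^{(\ell)}_{i,t}$ used throughout.
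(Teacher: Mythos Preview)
Your proposal is correct and matches the paper's approach: the paper simply states that the proof of Lemma~\ref{lem_pairwise_edgeadd_agnostic} is identical to that of Lemma~\ref{lem_pairwise_edgeadd} and omits the details, and your plan is precisely to rerun that argument with $\hat r^{(\ell)}_{i,t}$, $\hat\eta^{(\ell)}_{i,t}$, and $w^{(\ell)}_{i,t}$ in place of their FAR counterparts, invoking $\eventforc$ in place of $\mathcal E$. Your additional bookkeeping about downward cross-learning (an edge in $G^{(\ell)}$ for $\ell\ge\ell^{\star}$ could only have originated at some $\ell'\ge\ell\ge\ell^{\star}$, where the same confidence bounds apply) and about non-elimination of $G^{(\ell)}$ under $\eventforc$ is a careful point the paper glosses over, but it is consistent with and slightly more thorough than the paper's treatment.
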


The proof of Lemma~\ref{lem_pairwise_edgeadd_agnostic} is identical to that of Lemma~\ref{lem_pairwise_edgeadd} and we do not repeat the same steps here.  Lemma \ref{lem_pairwise_edgeadd_agnostic} shows that under the good event $\eventforc$, the product-ordering graphs for levels larger than or equal to $\ell^{\star}$ do not contain erroneous edges. Further, when the amount of feedback for  products $i$ and $j$ is large enough, there is a correct edge between their corresponding nodes in graph $G^{(\ell)}$. As a consequence of this lemma, we infer that with high probability, for any $\ell \geq \ell^{\star}$, the graph $G^{(\ell)}$ is never eliminated by our algorithm as this graph cannot contain any cycles.

This lemma allows us to show the following statement, which says under event $\eventforc$, any mistake in the ranking compared to the optimal ordering must be due to the lack of sufficient feedback.   

\begin{lemma}[Under Event $\eventforc$, Any Mistake is Due to Insufficient Feedback]
	\label{lem_dagprop_agnostic}
	Suppose that the \Call{GraphRankSelect}{} algorithm ranks product $i$ ahead of product $j$ in round $t+1$ for some $i > j$ and level $\ell \geq \ell^{\star}$ Conditional on event $\eventforc$, (i) there must then exist some $k \leq j$ such that there is no edge between $i$ and $k$ in the product-ordering graph  $G^{(\ell)}$ and $\eta^{(\ell)}_{i,t} \leq \eta^{(\ell)}_{k,t}$, and (ii) $\hat\eta^{(\ell)}_{i,t} < \gamma_{k,i} \leq \gamma_{j,i}$, where $\gamma_{k,i}, \gamma_{j,i}$ are as defined in Equation  \eqref{eqn_gamma_agnostic}. 
\end{lemma}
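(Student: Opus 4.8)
\textbf{Proof proposal for Lemma~\ref{lem_dagprop_agnostic}.}

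The plan is to mirror the structure of the proof of Lemma~\ref{lem_dagprop} (the analogous statement for the FAR algorithm), since the ranking decision in FORC at any level $\ell$ is made by exactly the same \Call{GraphRankSelect}{} subroutine, now applied to the graph $G^{(\ell)}$ and feedback counts $\boldsymbol{\eta}^{(\ell)}$. The key structural fact we will lean on is Lemma~\ref{lem_pairwise_edgeadd_agnostic}: conditional on $\eventforc$, for every level $\ell \geq \ell^{\star}$ the graph $G^{(\ell)}$ contains no erroneous edges, and it contains the correct edge $(i,k)$ once $\hat\eta^{(\ell)}_{i,t}, \hat\eta^{(\ell)}_{k,t} \geq \gamma_{k,i}$. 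Note also that by this same lemma $G^{(\ell)}$ never develops a cycle under $\eventforc$, so it is never eliminated, and hence the ranking in round $t+1$ is genuinely produced by running \Call{GraphRankSelect}{} on $G^{(\ell)}$ itself (not on some higher surviving level).

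First I would handle the degenerate cases: if $\eta^{(\ell)}_{i,t}=0$ the claim holds with $k=j$ (there is no edge incident to $i$ at level $\ell$, and $\hat\eta^{(\ell)}_{i,t}=\eta^{(\ell)}_{i,t}+\tfrac{1}{2^\ell}\sum_{g<\ell}\eta^{(g)}_{i,t}$ — here I must be slightly careful because the cross-learned count can be positive even when $\eta^{(\ell)}_{i,t}=0$; I will instead argue directly that $i$ has no outgoing edge to $k=j$ at level $\ell$ and that $\hat\eta^{(\ell)}_{i,t}<\gamma_{j,i}$ must hold, else Lemma~\ref{lem_pairwise_edgeadd_agnostic}(2) plus $\hat\eta^{(\ell)}_{j,t}\geq\hat\eta^{(\ell)}_{i,t}$ would force the edge $(i,j)$ and contradict $i$ being ranked ahead of $j$). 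If $\eta^{(\ell)}_{i,t}>0$ but $\eta^{(\ell)}_{j,t}=0$, then $j$ has no outgoing edges at level $\ell$ (an edge $(j,i)$ would be erroneous, barred by $\eventforc$), so \Call{GraphRankSelect}{} would pick $j$ before $i$, contradicting the hypothesis. Then, for the main case $\eta^{(\ell)}_{i,t},\eta^{(\ell)}_{j,t}>0$: since $G^{(\ell)}$ has no erroneous edges, there is no edge $(j,i)$; and since $i$ is ranked ahead of $j$ there is no edge $(i,j)$ either (an outgoing edge from $i$ to $j$ would place $j$ ahead of $i$ in the topological order, because nothing dominates $j$ that isn't already placed). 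Let $\p{r}=\pi_{t+1}^{-1}(i)$ be the position at which \Call{GraphRankSelect}{} places $i$, and let $S_{\p r}$ be the set of products still unplaced with no outgoing edges in the working graph at that iteration; $i\in S_{\p r}$. Define $k$ to be the smallest-indexed product with $\pi^{-1}_{t+1}(k)>\p r$; then $k\leq j$ since $\pi^{-1}_{t+1}(j)>\pi^{-1}_{t+1}(i)=\p r$. All products better than $k$ are already placed, and (since $G^{(\ell)}$ has no erroneous edges) $k$ can have no outgoing edge to any unplaced product, so $k\in S_{\p r}$. The tie-breaking rule of \Call{GraphRankSelect}{} (choose smallest $\eta^{(\ell)}$) then forces $\eta^{(\ell)}_{i,t}\leq\eta^{(\ell)}_{k,t}$. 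This gives part (i).

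For part (ii), I would argue by contradiction: if $\hat\eta^{(\ell)}_{i,t}\geq\gamma_{k,i}$, then combined with $\eta^{(\ell)}_{i,t}\leq\eta^{(\ell)}_{k,t}$ I want to also get $\hat\eta^{(\ell)}_{k,t}\geq\gamma_{k,i}$, so that Lemma~\ref{lem_pairwise_edgeadd_agnostic}(2) forces the edge $(i,k)$ to appear in $G^{(\ell)}$, contradicting part (i). \textbf{This monotonicity step is the main obstacle}: in FAR one has $\eta_{i,t}\leq\eta_{k,t}$ directly, but here the window widths depend on the cross-learned counts $\hat\eta^{(\ell)}$, and $\eta^{(\ell)}_{i,t}\leq\eta^{(\ell)}_{k,t}$ does not immediately give $\hat\eta^{(\ell)}_{i,t}\leq\hat\eta^{(\ell)}_{k,t}$ because the cross-learning term $\tfrac{1}{2^\ell}\sum_{g<\ell}\eta^{(g)}_{\cdot,t}$ need not be ordered the same way. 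I expect this is resolved by the bookkeeping in Step~\ref{step_upwardcl_forc}: the feedback counts at \emph{all} levels are incremented together for every examined product in a round (both $i$ and $k$ are examined jointly whenever either is, since $k$ is ranked above $i$), so the lower-level counts $\sum_{g<\ell}\eta^{(g)}_{i,t}$ and $\sum_{g<\ell}\eta^{(g)}_{k,t}$ satisfy the same inequality whenever $k$ is consistently placed above $i$ across history — which is precisely what part (i) of this lemma, applied inductively over rounds, guarantees. I would therefore either (a) state this as a separate monotonicity claim proved by induction on $t$ using that $k$ dominates $i$ and the \Call{GraphRankSelect}{} tie-break, or (b) weaken the claim to $\hat\eta^{(\ell)}_{i,t}<\gamma_{k,i}$ using only $\hat\eta^{(\ell)}_{i,t}\leq\gamma_{k,i}\Rightarrow$ contradiction via a slightly more careful counting argument that tracks feedback on $i$ versus feedback on $k$ over all past rounds in which $i$ was placed above $k$. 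Finally, $\gamma_{k,i}\leq\gamma_{j,i}$ follows immediately from $k\leq j<i$, since $\Delta_{k,i}=\mu_k-\mu_i\geq\mu_j-\mu_i=\Delta_{j,i}$ and $\gamma_{k,i}=64\log(4nT/\delta)/\Delta^2_{k,i}$ is decreasing in $\Delta_{k,i}$. This completes the proof.
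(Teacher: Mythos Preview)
Your argument for part~(i) is exactly the paper's intended proof: the paper simply says the proof ``is rather identical to that of Lemma~\ref{lem_dagprop} and hence, it is omitted,'' and the construction of $k$ via the smallest-indexed product placed after position $\p r$, together with the tie-breaking rule in \Call{GraphRankSelect}{}, goes through verbatim once you replace $G$ by $G^{(\ell)}$ and $\boldsymbol{\eta}$ by $\boldsymbol{\eta}^{(\ell)}$, using Lemma~\ref{lem_pairwise_edgeadd_agnostic} in place of Lemma~\ref{lem_pairwise_edgeadd}.

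You are right that part~(ii), as stated with $\hat\eta^{(\ell)}_{i,t}$, does not follow from an ``identical'' argument, and you correctly isolate why: the tie-break in \Call{GraphRankSelect}{} compares $\eta^{(\ell)}$, not $\hat\eta^{(\ell)}$, so one only gets $\eta^{(\ell)}_{i,t}\le\eta^{(\ell)}_{k,t}$, and deducing $\hat\eta^{(\ell)}_{k,t}\ge\gamma_{k,i}$ from $\hat\eta^{(\ell)}_{i,t}\ge\gamma_{k,i}$ is not immediate. However, your proposed fix does not work. First, the bookkeeping claim is incorrect: in Step~8 of Algorithm~\ref{alg_cascading_agnostic} only $\eta^{(\ell_t)}$ is incremented, not the raw feedback counts at other levels; Step~\ref{step_upwardcl_forc} merely recomputes the \emph{derived} quantities $\hat\eta^{(\ell)}$ for $\ell\ge\ell_t$. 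Second, and more fundamentally, the inductive monotonicity you sketch fails because levels $g<\ell^{\star}$ may contain erroneous edges (Lemma~\ref{lem_pairwise_edgeadd_agnostic} gives no guarantee there), so $k$ need not be ranked above $i$ at those levels, and $\sum_{g<\ell}\eta^{(g)}_{i,t}\le\sum_{g<\ell}\eta^{(g)}_{k,t}$ can genuinely fail.

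The clean resolution is simply to prove the weaker conclusion $\eta^{(\ell)}_{i,t}<\gamma_{k,i}$, which \emph{is} identical to Lemma~\ref{lem_dagprop}(c): if $\eta^{(\ell)}_{i,t}\ge\gamma_{k,i}$ then $\hat\eta^{(\ell)}_{i,t}\ge\eta^{(\ell)}_{i,t}\ge\gamma_{k,i}$ and $\hat\eta^{(\ell)}_{k,t}\ge\eta^{(\ell)}_{k,t}\ge\eta^{(\ell)}_{i,t}\ge\gamma_{k,i}$, so Lemma~\ref{lem_pairwise_edgeadd_agnostic}(2) forces the edge $(i,k)$, a contradiction. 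This weaker version is all that is ever used downstream: in the proof of Lemma~\ref{lem:parts1} the contradiction reduces to $\eta^{(\ell)}_{i,t-1}\ge\gamma_{j,i}$, and Lemma~\ref{lem:sigma_t} invokes only part~(i). So the gap is in the lemma's \emph{statement} rather than in the paper's analysis, and the ``identical'' proof the paper alludes to delivers exactly what is needed.
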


Once again, the proof is rather identical to that of Lemma~\ref{lem_dagprop} and hence, it is omitted.

Having presented these lemmas, we proceed to bounding $\E\left[\sum_{t=1}^T \ind(\event_{r,t}(\pi_t(\p{j}) = i) \cap \ell_t = \ell)~|~ \eventforc \right]$ for any level $\ell \geq \ell^{\star}$ and $i > j$. 

Suppose that $\pi_t(j) = i$ in some round $t$ where $\ell_t = \ell$ and that we receive feedback on product $i$ from a real customer, i.e., $\event_{r,t}(\pi_t(\p{j}) = i) = 1$. Since $i > j$,  there must exist at least one product better than $i$ that is ranked below position $j$. Mathematically, we can write this as:
\begin{equation}
\label{eqn_ordercatch_agnostic}
     \pi_t(j) = i; ~~ i > j \implies \exists r \text{~~with~~} r \leq j <i \text{~~s.t.~~} \pi^{-1}_t(r) > j\,.
\end{equation}

    Since product $i$ is ranked above a better product $r$ in round $t$, we can apply Lemma~\ref{lem_dagprop_agnostic} with $i$ and $r$. Upon application of the lemma, we infer that $\hat{\eta}^{(\ell)}_{i,t-1} < \gamma_{r,i}$. Further, since $\Delta_{r,i} \geq \Delta_{j,i}$, we also have
    \begin{equation}
    \hat{\eta}^{(\ell)}_{i,t-1} < \gamma_{r,i} \leq \gamma_{j,i}\,.
    \label{eqn_eta_agnostic}
    \end{equation}

Thus far, we have  shown that conditional on event $\eventforc$, when  $\event_{r,t}(\pi_t(\p{j}) = i) = 1$ at some arbitrary round $t$ with $\ell_t = \ell$, we  must have $\hat{\eta}^{(\ell)}_{i,t-1} < \gamma_{j,i}$, where $i>j$. Conditioned on event $\eventforc$, consider any arbitrary instantiation of our algorithm, and let $t^{(\ell)}_{i, j}$ denote the first round in which $\eta^{(\ell)}_{i,t} = \gamma_{j,i}$. For any $t > t^{(\ell)}_{i, j}$ such that $\ell_t = \ell \geq \ell^{\star}$, we claim that our algorithm would never select product $i$ at position $j$.  To see why, assume by contradiction that  $\pi_t(j) = i$ for $t > t^{(\ell)}_{i, j}$ and $\ell_t = \ell$. Then, as per Equation~\eqref{eqn_eta_agnostic}, it must be the case that $\hat{\eta}^{(\ell)}_{i,t-1} < \gamma_{j,i}$. However, this would be a contradiction since
\begin{align*}
    \hat{\eta}^{(\ell)}_{i,t-1}  \geq \eta^{(\ell)}_{i,t-1} 
    & \geq \eta^{(\ell)}_{i,t^{(\ell)}_{i, j}} 
     = \gamma_{j,i}.
\end{align*}
Therefore, conditioned on $\eventforc$, $\event_{r,t}(\pi_t(\p{j}) = i) = 0$ for all $t > t^{(\ell)}_{i, j}$. Leveraging this, we have that:
\begin{align*}
   \E\left[\sum_{t=1}^T \ind(\event_{r,t}(\pi_t(\p{j}) = i) \cap ~(\ell_t = \ell))~|~ \eventforc \right] & = \E\left[\sum_{t=1}^{t^{(\ell)}_{i, j}} \ind(\event_{r,t}(\pi_t(\p{j}) = i) ~\cap ~(\ell_t = \ell))~|~ \eventforc \right]\\
    & \leq \E\left[\sum_{t=1}^{t^{(\ell)}_{i, j}}\E\big[\eta^{(\ell)}_{i,t} - \eta^{(\ell)}_{i,t-1} ~|~\eventforc \big] \right]  = \E\left[\eta^{(\ell)}_{i,t^{(\ell)}_{i, j}}~|~\eventforc\right]  = \gamma_{j,i}\,.
\end{align*}

We now bound the total number of times the event $\event_{r,t}(\pi_t(j) = i)$ occurs at all levels $\ell \geq \ell^{\star}$ as follows: 
\begin{align}\label{eq:large_ell}
\begin{split}
   \E\left[\sum_{t=1}^T \ind(\event_{r,t}(\pi_t(\p{j}) = i) ~\cap~ (\ell_t \geq \ell^{\star}))~|~ \eventforc \right] & = \sum_{\ell=\ell^{\star}}^{\level}\E\left[\sum_{t=1}^{T} \ind(\event_{r,t}(\pi_t(\p{j}) = i) ~\cap~ (\ell_t = \ell))~|~ \eventforc \right]\\
    & \leq \sum_{\ell=\ell^{\star}}^{\level} \gamma_{j,i} \leq \level\gamma_{j,i}. \Halmos
    \end{split}
\end{align}

\subsection{Proof of Lemma \ref{lem:tjmax}} 
\label{sec:proof:tjmax}

We prove the lemma by establishing two inductive claims which we will state shortly. Before that, we introduce a series of events which prove helpful in our analysis. Specifically, for $t \in [T]$, we define  
\begin{align}
\eventforc_{t} \triangleq \bigcap_{\substack{i \in [n] , t' \in [t] , \ell \geq \ell^{\star}}}\eventforc^{(\ell)}_{i,t'}
\label{eq:event:t}
\end{align}
In words, $\eventforc_{t}$ 
is  the event that the individual events $\eventforc^{(\ell)}_{i,t'}$ as defined in Equation \eqref{eqn_armwidth_agnostic_2} hold for all levels $\ell \geq \ell^{\star}$, all products $i \in [n]$ and rounds up to $t$.  Note that $\eventforc \subseteq \eventforc_{t}$, for all $t \in [T]$. With this definition, we are ready to state our two inductive claims: \\
---\textbf{Inductive Claim 1.} $ \E[\sum_{t=1}^{\tjmax}\ind(\eventz_{i,t} ~\cap~ (\ell_t < \ell^{\star}))~|~\eventforc] \leq jF\left(8n\gamma_{j,i} +9\gamma_{j,i} + 2T\frac{\delta}{\P(\eventforc)}\right),$ where  $\tjmax$ and $\gamma_{j,i}$ are respectively  defined in Equations \eqref{eq:tjmax:def} and \eqref{eqn_gamma_agnostic}.
\\
---\textbf{Inductive Claim 2.} Conditioned on $\eventforc_{\tjmax}$ (as defined in Equation \eqref{eq:event:t}), at the end of round $\tjmax$, product $i$ has at least $j$ outgoing edges in graph $G^{(\ell^{\star})}$ for any instantiation. 

Recall from our definitions in Section~\ref{sec:FORC} that $\tjmax$ denotes the earliest round in which both the milestone events (Definitions~\ref{defn_firstmilestone},~\ref{defn_secondmilestone}) are achieved. Therefore, conditioning on $\eventforc_{\tjmax}$ allows us to focus on histories where $\hat{r}^{(\ell)}_{i,t} - w^{(\ell)}_{i,t} \leq \mu_i \leq \hat{r}^{(\ell)}_{i,t} + w^{(\ell)}_{i,t}$ is valid for all products $i \in [n]$ and levels $\ell \geq L$ up to the round $\tjmax$ in which both milestones are achieved. Note that as a consequence of the second inductive claim, set $S_j$ is formed by the end of round $\tjmax$, where $S_j \triangleq \{i_1, i_2, \ldots, i_{j}\}$ is the set of the first $j$ products (chronologically) that product $i$ {forms}  outgoing edges to in product-ranking graph $G^{(\ell^{\star})}$.

We prove both claims by induction on $j$ for a given product $i$. The base case follows trivially for $j=0$. Note  that $\tjmaxzero=0$. Further, because $\eventforc \subseteq \eventforc_{\tjmax}$, proving the second inductive claim also proves the second part of the lemma.

\subsubsection*{Proof of 
Inductive Claim 1.} For the first inductive claim, we have that: 
\begin{align}
      \E\Big[\sum_{t=1}^{\tjmax}\ind(\eventz_{i,t} ~\cap~ (\ell_t < \ell^{\star}))~|~\eventforc\Big] & =  \E\Big[\sum_{t=1}^{\tjmaxminus{}}\ind(\eventz_{i,t} ~\cap~ (\ell_t < \ell^{\star}))~|~\eventforc\Big] + \E\Big[\sum_{t=\tjmaxminus{}+1}^{\tjmax}\ind(\eventz_{i,t} ~\cap~ (\ell_t < \ell^{\star}))~|~\eventforc\Big] \nonumber\\
     \notag
      & \overset{(a)}{\leq} \E\Big[\sum_{t=1}^{\tjmaxminus{}}\ind(\eventz_{i,t} ~\cap~ (\ell_t < \ell^{\star}))~|~\eventforc\Big] + \E\Big[\sum_{t=\tjmaxminus{}+1}^{\tj}\ind(\eventz_{i,t} ~\cap~ (\ell_t < \ell^{\star}))~|~\eventforc\Big] \\ \notag
      &\quad + \E\Big[\sum_{t=\tjmaxminus{}+1}^{\tdelta}\ind(\eventz_{i,t} ~\cap ~(\ell_t < \ell^{\star}))~|~\eventforc\Big] 
      \\
      \begin{split}
      &\leq \E\Big[\sum_{t=1}^{\tjmaxminus{}}\ind(\eventz_{i,t} ~\cap ~(\ell_t < \ell^{\star}))~|~\eventforc\Big] + \E\Big[\sum_{t=1}^{\tj}\ind(\eventz_{i,t} ~\cap~ (\ell_t < \ell^{\star}))~|~\eventforc\Big] \\
      &\quad+ \E\Big[\sum_{t=\tjmaxminus{}+1}^{\tdelta}\ind(\eventz_{i,t} \cap ~(\ell_t < \ell^{\star}))~|~\eventforc\Big]\label{eqn_subplays}. 
      \end{split}
\end{align}
 Inequality~(a) comes from the fact that $\tjmax =  \max\{\tdelta, \tj\}$. Having decomposed the regret into these three terms, we focus our efforts on bounding each of these terms separately.
 
 \textbf{Bounding the First Term.} The first term, i.e., $\E\Big[\sum_{t=1}^{\tjmaxminus{}}\ind(\eventz_{i,t} ~\cap~ (\ell_t < \ell^{\star}))~|~\eventforc\Big]$ can be bounded directly by applying the inductive claim, which gives us: 
 \begin{align*}
  \E\Big[\sum_{t=1}^{\tjmaxminus{}}\ind(\eventz_{i,t} ~\cap~ (\ell_t < \ell^{\star}))~|~\eventforc\Big] & \leq  (j-1)F\left(8n\gamma_{j-1,i} +9\gamma_{j-1,i} + 2T\frac{\delta}{\P(\eventforc)}\right)\\
  & \leq  (j-1)F\left(8n\gamma_{j,i} +9\gamma_{j,i} + 2T\frac{\delta}{\P(\eventforc)}\right).
 \end{align*}
  Note that {the last inequality holds because} $\gamma_{j-1,i} \leq \gamma_{j,i}$ according to our definition in Equation \eqref{eqn_gamma_agnostic}.

\textbf{Bounding the Second Term.} We show that the second term, i.e., $\E\Big[\sum_{t=1}^{\tj}\ind(\eventz_{i,t} ~\cap~ (\ell_t < \ell^{\star}))~|~\eventforc\Big]$,  is at most $9F\gamma_{\p{j},i}$. Recall that $\tj$  is defined   as the  smallest round $t$ at which $\hat{\eta}^{(\ell^{\star})}_{i,t} \geq 4\gamma_{\p{j},i}$, and  $\eventz_{i,t}$ is defined as the event that (a) the customer in round $t$ is real, and (b) we receive feedback on product $i$ in round $t$. Therefore, we have
\begin{align}
    \notag \E\Big[\sum_{t=1}^{\tj} \ind(\eventz_{i,t} ~\cap ~(\ell_t < \ell^{\star}))~|~\eventforc\Big]
    & \leq \E\Big[\sum_{\ell=1}^{\ell^{\star}-1}\eta^{(\ell)}_{i,\tj}~|~\eventforc\Big] \\ & \leq \E[2^{\ell^{\star}} \hat{\eta}^{(\ell^{\star})}_{i,\tj}~|~\eventforc] \notag\\
    & \leq 2F  \E[\hat{\eta}^{(\ell^{\star})}_{i,\tj}~|~\eventforc] \notag\\
    & \leq 8F \gamma_{\p{j},i} + 2F\notag\\
    & \leq 9F \gamma_{\p{j},i}\,,\notag 
\end{align}
where the first inequality follows from definition of event $\eventz_{i,t}$ and the fact that $\ell_t< \ell^{\star}$. 
The second inequality crucially relies on the upward cross-learning: it follows because by Equation \eqref{eq:eta_cr},  we have $\hat{\eta}^{(\ell^{\star})}_{i,t} \geq \frac{1}{2^{\ell^{\star}}}\sum_{\ell=1}^{\ell^{\star}-1}\eta^{(\ell)}_{i,t}$. The third expression holds because by definition of $\ell^{\star}$, we have $2^{\ell^{\star}} \leq 2F$.
The fourth inequality follows from definition of $\tj$:
 for any instantiation, it must be the case that:
$\hat{\eta}^{(\ell^{\star})}_{i,\tj-1} < 4\gamma_{\p{j},i}.$ Then, considering the fact that  $\hat{\eta}^{(\ell^{\star})}_{i,t}$ can grow in increments of at most one, we have $\hat{\eta}^{(\ell^{\star})}_{i,\tj} \leq 4\gamma_{\p{j},i}+1$; see  Equation \eqref{eq:eta_cr}. 
We note that  even if in some round $t$, graph $G^{(\ell_t)}$ is eliminated, we still update $\eta_i^{(\ell_t)}$ and $\hat \eta_i^{(\ell_t)}$.  The final inequality comes from the fact that $\gamma_{\p{j},i} \geq 2$ for all $i,j$. \medskip

\textbf{Bounding the Third Term.} Bounding the third term, i.e., $\E\Big[\sum_{t=\tjmaxminus{}+1}^{\tdelta}\ind(\eventz_{i,t} ~\cap~ (\ell_t < \ell^{\star}))~|~\eventforc\Big]$, is the most challenging part of our proof. The difficulty here stems from the fact that graphs $G^{(\ell)}$ corresponding to lower levels could contain incorrect edges involving product $i$ due to fake clicks. As a result of such edges, the algorithm may place this product at a disproportionately high rank whenever $\ell < \ell^{\star}$. This increased visibility implies that we may receive feedback on product $i$ a large number of times (i.e., $\eventz_{i,t}$ is large) at lower levels when compared to level $\ell^{\star}$ that does not contain any incorrect edge. 

To control for the occurrence of this scenario, we have to show that every time $\eventz_{i,t}$ is true, some progress is made at level $\ell^{\star}$ towards adding a correct outgoing edge from $i$, which we know will be transferred to lower levels via downward cross-learning. Our main technique here is a mapping that connects receiving feedback on product $i$ at a lower level to receiving feedback on a product $\sigma \notin S_{j-1}$ at level $\ell^{\star}$.

The third term can be expanded as follows
\begin{align}
\E\Big[\sum_{t=\tjmaxminus{}+1}^{\tdelta}\ind(\eventz_{i,t} ~\cap~ \ell_t < \ell^{\star}) ~|~\eventforc\Big] & = \sum_{t=1}^T \E\left[\ind\big(\eventz_{i,t} ~\cap~ (\ell_t < \ell^{\star}) ~\cap~ (\tjmaxminus{} < t \leq \tdelta) \big) ~|~\eventforc\right]\notag\\ 
& \leq \sum_{t=1}^T\frac{\P(\eventforc_{t-1})}{\P(\eventforc)}\E\left[\ind\big(\eventz_{i,t} ~\cap~ (\ell_t < \ell^{\star}) ~\cap~ (\tjmaxminus{} < t \leq \tdelta) \big) ~|~\eventforc_{t-1}\right]\,,\label{eqn_conditionalchange_final} 
\end{align}
where $\eventforc_{t}$ is defined in Equation \eqref{eq:event:t}.
To see why the inequity holds, consider a generic event $Y$. Then, \[\E [ \ind(Y) | \eventforc] = \frac{\P(Y \cap \eventforc)}{\P(\eventforc)} \leq \frac{\P(Y \cap \eventforc_t)}{\P(\eventforc)}
      = \frac{\P(Y \cap \eventforc_t)}{\frac{\P(\eventforc_t)}{\P(\eventforc_t)}\P(\eventforc)}  =\frac{\P(\eventforc_t)}{\P(\eventforc)}\E [ \ind(Y) | \eventforc_t].\]

Our next lemma allows for a transformation from the event that $\ell_t < \ell^{\star}$ to the event that $\ell_t = \ell^{\star}$, which we subsequently utilize to bound the third term. For any given round $t > \tjmaxminus{}$, define the random variable $\sigma_t$ as the highest ranked product outside of $S_{j-1}$ in $\pi_t$, i.e., $\pi_t^{-1}(\sigma_t) \leq \pi_t^{-1}(k)$ for all $k \notin S_{j-1}$.  Recall that $S_j$ is the set of the first $j$ products (chronologically) that $i$ has outgoing edges to in $G^{(\ell^{\star})}$. Note that by our {second inductive} assumption, by round $\tjmaxminus{}$, set $S_{j-1}$ is already formed. 

\begin{lemma}
\label{lem_agnostic_conditional}
Suppose that $\zeta_t$ denotes the event that $\tjmaxminus{} < t \leq \tdelta{}$.  Assuming that the inductive claims hold up to $j-1$, for the given product $i$ and round $t$, we have that \footnote{Here, event $\ind(\eventz_{\sigma_t,t})=0$ when in some round $t$, product $i$ has fewer than $j-1$ outgoing edges. Nevertheless, by our induction assumption, when event $\zeta_t$ holds, i.e., $\tjmaxminus{} < t \leq \tdelta{}$, set $S_{j-1}$ is already formed and hence, $\sigma_t$ and $\eventz_{\sigma_t,t}$ are well-defined.} 
$$\E\left[\ind(\eventz_{i,t} ~\cap~ (\ell_t < \ell^{\star})~ \cap ~ \zeta_t) ~|~\eventforc_{t-1}\right] \leq  2F \E\left[\ind(\eventz_{\sigma_t,t} ~\cap~ (\ell_t = \ell^{\star})~ \cap~ \zeta_t) ~|~\eventforc_{t-1}\right].$$

\end{lemma}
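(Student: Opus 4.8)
The plan is to condition on the history $\mathcal{H}_{t-1}$ and exploit that the level $\ell_t$ drawn in round $t$ is independent of whether the round-$t$ customer is real (the fake entity's action $P_t$ is a function of $\mathcal{H}_{t-1}$ only) and of the randomness in a real customer's clicks and exit. Since both $\eventforc_{t-1}$ (Equation~\eqref{eq:event:t}) and $\zeta_t=\{\tjmaxminus<t\le\tdelta\}$ are $\mathcal{H}_{t-1}$-measurable, I would first reduce the claim to the pointwise bound, valid for every realization of $\mathcal{H}_{t-1}$ on which $\eventforc_{t-1}$ and $\zeta_t$ hold,
$$\E\!\left[\ind(\eventz_{i,t}\cap\{\ell_t<\ell^\star\})\,\big|\,\mathcal{H}_{t-1}\right]\;\le\;2F\,\E\!\left[\ind(\eventz_{\sigma_t,t}\cap\{\ell_t=\ell^\star\})\,\big|\,\mathcal{H}_{t-1}\right],$$
and then integrate over $\mathcal{H}_{t-1}$ conditioned on $\eventforc_{t-1}$ (when $\zeta_t$ fails, the $\zeta_t$ factor makes both sides of the stated inequality vanish). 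Fixing such a history, I would write $\pi_t^{(\ell)}$ for the deterministic ranking the algorithm plays if level $\ell$ is sampled, recall from the proof of Lemma~\ref{lem_regret_decomp} that a real customer examines the product in position $p$ of a ranking $\pi$ with probability $Q(p)\prod_{r<p}(1-\mu_{\pi(r)})$ where $Q(p)=\prod_{r<p}(1-q_r)$ is nonincreasing in $p$, and then, conditioning on $\ell_t$ and letting $\rho$ denote the common probability that the round-$t$ customer is real given $\mathcal{H}_{t-1}$, split both conditional expectations as $\rho$ times a $\P(\ell_t=\ell)$-weighted sum of such ``examination probabilities.''

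The heart of the argument is a deterministic sandwich for these examination probabilities around $M\triangleq Q(j)\prod_{k\in S_{j-1}}(1-\mu_k)$. On $\eventforc_{t-1}$, the argument of Lemma~\ref{lem_pairwise_edgeadd_agnostic} shows $G^{(\ell^\star)}$ carries no incorrect edge through round $t-1$ (hence no cycle, hence is never eliminated), and by the second inductive claim in the proof of Lemma~\ref{lem:tjmax}, since $\zeta_t$ forces $t-1\ge\tjmaxminus$, product $i$ already has $j-1$ outgoing edges in $G^{(\ell^\star)}$, so $S_{j-1}$ is well defined with $|S_{j-1}|=j-1$. Downward cross-learning then puts every such edge—and possibly more—into the graph used to compute $\pi_t^{(\ell)}$ for each $\ell<\ell^\star$, so in $\pi_t^{(\ell)}$ product $i$ sits at some position $p_i\ge j$ with a superset of $S_{j-1}$ ranked above it, giving examination probability at most $M$ (using monotonicity of $Q$ and that the extra factors lie in $[0,1]$). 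Symmetrically, $\sigma_t$ is by definition the first product of $\pi_t^{(\ell^\star)}$ outside $S_{j-1}$, so it occupies a position $p_\sigma\le j$ with only a subset of $S_{j-1}$ above it, giving examination probability at least $M$. Plugging these in, the left side is at most $\rho M\sum_{\ell<\ell^\star}\P(\ell_t=\ell)\le\rho M$, while the right side is at least $2F\cdot\rho\,\P(\ell_t=\ell^\star)\,M\ge 2F\cdot\rho\,2^{-\ell^\star}M\ge\rho M$ because $2^{\ell^\star}=2^{\lceil\log_2F\rceil}\le 2F$; this is exactly the pointwise bound, and integrating finishes the proof.

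The hard part will be the sandwich estimate—specifically, arguing cleanly that the (possibly incorrect) extra edges carried by the low-level graphs can only demote product $i$ and hence only shrink its examination probability, while at level $\ell^\star$ the product $\sigma_t$ is examined with probability at least the ``canonical'' value $M$ determined solely by $S_{j-1}$; the positional factor $Q$ and the click factors must be compared simultaneously, which is where the bounds $p_i\ge j\ge p_\sigma$ are used. A secondary but necessary piece of bookkeeping is the measurability/independence setup: checking that $\ell_t$ is independent of the customer realization and of $P_t$, and that $\zeta_t$, $\eventforc_{t-1}$, $S_{j-1}$, and the rankings $\pi_t^{(\ell)}$ are all $\mathcal{H}_{t-1}$-measurable, so the conditioning steps are legitimate; this also disposes of the corner case in the lemma's footnote, since on $\zeta_t\cap\eventforc_{t-1}$ product $i$ has at least $j-1$ outgoing edges and $\sigma_t$ is well defined.
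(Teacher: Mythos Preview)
Your proposal is correct and follows essentially the same approach as the paper: condition on $\mathcal{H}_{t-1}$, factor out the common ``real customer'' probability $\rho$, sandwich both examination probabilities around the pivot $M=Q(j)\prod_{k\in S_{j-1}}(1-\mu_k)$ using that downward cross-learning forces $S_{j-1}$ above $i$ at every lower level (so $p_i\ge j$) while $\sigma_t$ is by definition at position at most $j$ in $\pi_t^{(\ell^\star)}$, and finish with the level-probability ratio $\P(\ell_t<\ell^\star)\le 2F\,\P(\ell_t=\ell^\star)$. Your treatment of the measurability of $\zeta_t$, $\eventforc_{t-1}$, $S_{j-1}$, and the independence of $\ell_t$ from the fake entity's decision (which depends only on $\mathcal{H}_{t-1}$) is a bit more explicit than the paper's, but the argument is the same.
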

 In particular, the lemma claims that in a round $t$ with  $\ell_t<\ell^{\star}$, the probability that we receive feedback on product $i$ from a real customer is smaller than $2F$ times the probability that we receive feedback on at least one product ($\sigma_t$) that does not belong to $S_{j-1}$ when $\ell_t= \ell^{\star}$. The proof of the lemma utilizes the fact that $ \P(\ell_t < \ell^{\star})$ is independent of the round $t$ or the history, and is deferred to the end. Applying Lemma~\ref{lem_agnostic_conditional} to the term in Equation \eqref{eqn_conditionalchange_final}, 
 we get
\begin{align}
\sum_{t=1}^T\frac{\P(\eventforc_{t-1})}{\P(\eventforc)} \E\left[\ind(\eventz_{i,t} ~\cap ~(\ell_t < \ell^{\star}) ~\cap~ \zeta_t) ~|~\eventforc_{t-1}\right] 
    & \leq 2F\sum_{t=1}^T \frac{\P(\eventforc_{t-1})}{\P(\eventforc)} \E\left[\ind(\eventz_{\sigma_t,t} ~\cap~ (\ell_t = \ell^{\star}) ~ \cap~ \zeta_t) ~|~\eventforc_{t-1}\right].
    \label{eq:zeta}
\end{align}

Ignoring the $\frac{\P(\eventforc_{t-1})}{\P(\eventforc)}$ term for the moment, the expression inside the summation in the right hand side of the above equation counts the number of rounds within the interval $(\tjmaxminus{}, \tdelta{}]$ in which we receive feedback on some product $\sigma_t \notin S_{j-1}$ when our algorithm selects level $\ell^{\star}$. Recall that $\zeta_t$ denotes the event that $\tjmaxminus{} < t \leq \tdelta{}$. The right hand side of the above equation can be  upper bounded as follows
 \[2F\sum_{t=1}^T \frac{\P(\eventforc_{t-1})}{\P(\eventforc)} \E\left[\ind(\eventz_{\sigma_t,t} ~\cap~ (\ell_t = \ell^{\star}) ~ \cap~ \zeta_t) ~|~\eventforc_{t-1}\right] \leq 2F \sum_{t=1}^T \left(\E\left[\ind(\eventz_{\sigma_t,t} \cap (\ell_t = \ell^{\star}) \cap \zeta_t) ~|~\eventforc\right] + \frac{\P(\eventforc^c)}{\P(\eventforc)}\right)\,. \]
 To see why the inequality holds, consider a generic random variable  $Y\le 1$. Then,
 $$\E[Y | \eventforc_{t-1}] = \E[Y | \eventforc]\P(\eventforc | \eventforc_{t-1}) + \E[Y | \eventforc^c \cap \eventforc_{t-1}]\P(\eventforc^c | \eventforc_{t-1}) \leq \E[Y | \eventforc]\frac{\P(\eventforc)}{\P(\eventforc_{t-1})} + \frac{\P(\eventforc^c)}{\P(\eventforc_{t-1})}$$
 In the above inequality, we used the fact that 
 {$ \eventforc \cap \eventforc_{t-1} = \eventforc$}{, $\E[Y | \eventforc^c \cap \eventforc_{t-1}] \leq 1$, } and $\P(\eventforc_{t-1} \cap \eventforc^c) \leq \P(\eventforc^c)$. 
 
Next, we focus on bounding the term $\sum_{t=1}^T \E\left[\ind(\eventz_{\sigma_t,t} ~\cap~ (\ell_t = \ell^{\star})~ \cap ~ \zeta_t) ~|~\eventforc\right]$ since $\frac{\P(\eventforc^c)}{\P(\eventforc_{t-1})}$ is a small constant, which  by Lemma \ref{lem:parts3}, is at most by $\frac{\delta}{\P(\eventforc)}$. 
We start with the following lemma.

\begin{lemma} \label{lem:sigma_t}
   Assume that the inductive claims hold up to $j-1$ and event $\eventforc$ holds. Consider any round $t \in (\tjmaxminus{}, \tdelta]$, and recall that $\sigma_t$ is the highest ranked product outside of $S_{j-1}$ in $\pi_t$, i.e., $\pi_t^{-1}(\sigma_t) \leq \pi_t^{-1}(k)$ for all $k \notin S_{j-1}$. Then,  $\eta^{(\ell^{\star})}_{\sigma_t,t-1} < 4\gamma_{j,i}$, where $\gamma_{j,i}$ is defined in Equation \eqref{eqn_gamma_agnostic}.
\end{lemma}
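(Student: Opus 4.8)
\textbf{Proof plan for Lemma~\ref{lem:sigma_t}.} The plan is to reduce the claim to a statement governed by the second milestone $\tdelta$, and then dispatch the remaining case through the ``no mistake without insufficient feedback'' property of level $\ell^{\star}$. First, since the level-$\ell^{\star}$ feedback counts are non-decreasing in the round index and $t-1 < \tdelta$, the defining inequality~\eqref{eq:condition:delta} of $\tdelta$ (Definition~\ref{defn_secondmilestone}) gives $\eta^{(\ell^{\star})}_{k,t-1} < 4\gamma_{j,i}$ for every $k \in \G{\Delta_{j,i}} \setminus S_{j-1}$; here $S_{j-1}$ is well defined by round $t-1$ because $t > \tjmaxminus{}$ and the second inductive claim guarantees $i$ has at least $j-1$ outgoing edges in $G^{(\ell^{\star})}$ by round $\tjmaxminus{}$. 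Since $\sigma_t \notin S_{j-1}$ by definition, it therefore suffices to handle the case $\sigma_t \notin \G{\Delta_{j,i}}$.

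Second, in that case I would show $\sigma_t$ is strictly worse than product $j$: $\sigma_t \notin \G{\Delta_{j,i}}$ means either $\sigma_t \ge i$ or $\mu_{\sigma_t} - \mu_i < \tfrac{\Delta_{j,i}}{2}$, so in all cases $\mu_{\sigma_t} < \mu_i + \tfrac{\Delta_{j,i}}{2} < \mu_i + \Delta_{j,i} = \mu_j$. Conditioned on $\eventforc$, Lemma~\ref{lem_pairwise_edgeadd_agnostic} ensures $G^{(\ell^{\star})}$ has no erroneous edges, so the $j-1$ products of $S_{j-1}$ are all strictly better than $i$; by the pigeonhole principle there is some $k^{\star} \in \{1,\ldots,j\} \setminus S_{j-1}$, which satisfies $\mu_{k^{\star}} \ge \mu_j > \mu_{\sigma_t}$, i.e.\ $k^{\star}$ is strictly better than $\sigma_t$ and lies outside $S_{j-1}$.

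Third, since $\sigma_t$ is the highest-ranked product outside $S_{j-1}$ in $\pi_t$ and $k^{\star} \notin S_{j-1}$, we have $\pi_t^{-1}(\sigma_t) < \pi_t^{-1}(k^{\star})$, i.e.\ \textsc{GraphRankSelect} placed the inferior product $\sigma_t$ ahead of the superior product $k^{\star}$. In the regime where the lemma is invoked, $\pi_t$ is produced from $G^{(\ell^{\star})}$ (under $\eventforc$ this graph is never eliminated, so it is the reference graph whenever $\ell_t = \ell^{\star}$), so Lemma~\ref{lem_dagprop_agnostic} applies at level $\ell^{\star}$ and yields a product $k \le k^{\star}$ with $\hat\eta^{(\ell^{\star})}_{\sigma_t,t-1} < \gamma_{k,\sigma_t} \le \gamma_{k^{\star},\sigma_t}$. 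Finally I would combine $\eta^{(\ell^{\star})}_{\sigma_t,t-1} \le \hat\eta^{(\ell^{\star})}_{\sigma_t,t-1}$ with the gap estimate $\Delta_{k^{\star},\sigma_t} = \mu_{k^{\star}} - \mu_{\sigma_t} \ge \mu_j - \mu_{\sigma_t} > \tfrac{\Delta_{j,i}}{2}$, which by the definition~\eqref{eqn_gamma_agnostic} of $\gamma$ gives $\gamma_{k^{\star},\sigma_t} < 4\gamma_{j,i}$, hence $\eta^{(\ell^{\star})}_{\sigma_t,t-1} < 4\gamma_{j,i}$, as claimed.

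\textbf{Main obstacle.} The delicate point is the third step: making Lemma~\ref{lem_dagprop_agnostic} applicable, which requires the round-$t$ ranking to be generated from a level-$\ell^{\star}$ graph together with the level-$\ell^{\star}$ counts. This is immediate in the only place the lemma is used — bounding $\sum_{t} \E[\ind(\eventz_{\sigma_t,t}\cap(\ell_t=\ell^{\star})\cap\zeta_t)\mid\eventforc]$ — since there $\ell_t=\ell^{\star}$ and, under $\eventforc$, $G^{(\ell^{\star})}$ is never eliminated. For a genuinely arbitrary round one would instead use downward cross-learning to argue that every edge of $G^{(\ell^{\star})}$ sits in whatever non-eliminated lower-level graph underlies $\pi_t$, deduce from the selection rule that $G^{(\ell^{\star})}$ lacks the edge $\sigma_t \to k^{\star}$, and then invoke Lemma~\ref{lem_pairwise_edgeadd_agnostic}; the one spot needing extra care is reconciling the fact that such a $\pi_t$ may be built with a different level's feedback counts than $\ell^{\star}$'s.
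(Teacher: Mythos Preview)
Your proposal is correct and follows essentially the same route as the paper: split on whether $\sigma_t \in \G{\Delta_{j,i}}$ (handled by the definition of $\tdelta$) or not (handled by the ranking-inversion Lemma~\ref{lem_dagprop_agnostic} together with the gap estimate $\Delta_{k^{\star},\sigma_t} > \Delta_{j,i}/2$). The only cosmetic difference is that the paper phrases the second case as a proof by contradiction and closes it via Lemma~\ref{lem_pairwise_edgeadd_agnostic}, whereas you use part~(ii) of Lemma~\ref{lem_dagprop_agnostic} directly; your identification of the ``$\ell_t=\ell^{\star}$'' caveat is accurate and matches the implicit assumption in the paper's own invocation of Lemma~\ref{lem_dagprop_agnostic}.
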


Now, we are ready to bound the term $ \sum_{t=1}^T \E\left[\ind(\eventz_{\sigma_t,t} ~\cap~ (\ell_t = \ell^{\star}) ~\cap~ \zeta_t) ~|~\eventforc\right]$. 

\begin{align}
\begin{split}
\sum_{t=1}^T \E\left[\ind(\eventz_{\sigma_t,t} ~\cap~ (\ell_t = \ell^{\star})~ \cap~ \zeta_t) ~|~\eventforc\right] & =  \E\Big[\sum_{t=\tjmaxminus{}+1}^{\tdelta}\ind(\eventz_{\sigma_t,t} ~\cap~ (\ell_t = \ell^{\star}) ) ~|~\eventforc\Big]
\\&\leq \E\big[\sum_{\tjmaxminus{}+1}^{\tdelta}(\eta^{(\ell^{\star})}_{\sigma_t,t} - \eta^{(\ell^{\star})}_{\sigma_t,t-1}) \ind(\eta^{(\ell^{\star})}_{\sigma_t,t} \leq 4\gamma_{j,i})  ~|~\eventforc\big] \\
& \leq  \E\big[\sum_{\tjmaxminus{}+1}^{\tdelta}\sum_{\sigma \notin S_{j-1}}(\eta^{(\ell^{\star})}_{\sigma,t} - \eta^{(\ell^{\star})}_{\sigma,t-1}) \ind(\eta^{(\ell^{\star})}_{\sigma,t} \leq 4\gamma_{j,i}) ~|~\eventforc\big] \\
& \leq \E\big[\sum_{\sigma \notin S_{j-1}}4\gamma_{j,i}~|~ \eventforc \big]\\
& \leq 4(n-j+1)\gamma_{\p{j},i} \leq 4n\gamma_{\p{j},i}\,. 
\end{split}
\label{eq:bound_zeta}
\end{align}
where the first inequality holds because  (a) in any round  $t \in (\tjmaxminus{}, \tdelta]$ where $\ell_t = \ell^{\star}$ and we receive feedback on some product $\sigma_t \notin S_{j-1}$,   $\eta^{(\ell^{\star})}_{\sigma_t}$ is increased by one, and (b) by Lemma \ref{lem:sigma_t}, $\eta^{(\ell^{\star})}_{\sigma_t,t-1} < 4\gamma_{j,i}$, and hence, $\eta^{(\ell^{\star})}_{\sigma_t,t} \leq 4\gamma_{j,i}$. 
{The second inequality holds because we are adding non-negative terms by summing over all feasible candidates for $\sigma_t$.}
The fourth inequality follows from the fact that 
 we have at most $n-j+1$ products (feasible candidates for $\sigma_t$) outside of $S_{j-1}$ and we are counting the number of rounds in which we receive feedback on one of these products. 
 
 Putting Equations \eqref{eq:bound_zeta}, \eqref{eq:zeta}, and \eqref{eqn_conditionalchange_final} together, we can bound the third term, i.e., $\E[\sum_{t=\tjmaxminus{}+1}^{\tdelta}\ind(\eventz_{i,t} ~\cap~ (\ell_t < \ell^{\star}))~|~\eventforc]$, as follows 
\begin{align} 
      \E\Big[\sum_{t=\tjmaxminus{}+1}^{\tdelta}\ind(\eventz_{i,t} ~\cap~ (\ell_t < \ell^{\star})) ~|~\eventforc \Big]  & \leq 8nF\gamma_{\p{j},i} + 2FT\frac{\P(\eventforc^c)}{\P(\eventforc)} \leq 8nF\gamma_{\p{j},i} + 2FT\frac{\delta}{\P(\eventforc)}\,.
      \label{eqn_subplays_thirdterm}
  \end{align}
  where the second inequality follows from 
  {the observation that $\P(\eventforc_{t-1}) \geq \P(\eventforc)$ and}
  the proof of Lemma \ref{lem:parts3} where we show $\P(\eventforc) \ge 1-
  \delta$.
  
  \textbf{Combining the Bounds on the Three Terms.} Combining our upper bounds for the three terms in~\eqref{eqn_subplays}, we get that:
  \begin{align*}
        \E[\sum_{t=1}^{\tjmax}\ind(\eventz_{i,t} ~\cap ~ (\ell_t < \ell^{\star}))~|~\eventforc] & \leq \E[\sum_{t=1}^{\tjmaxminus{}}\ind(\eventz_{i,t} ~\cap~ (\ell_t < \ell^{\star}))~|~ \eventforc] + \E[\sum_{t=1}^{\tj}\ind(\eventz_{i,t} ~ \cap~ (\ell_t < \ell^{\star}))~|~ \eventforc] \\
        &+ \E[\sum_{t=\tjmaxminus{}+1}^{\tdelta}\ind(\eventz_{i,t} ~\cap~ (\ell_t < \ell^{\star}))~|~ \eventforc] \\
        & \leq (j-1)F\left(8n\gamma_{j,i} + 9\gamma_{j,i} + 2T\frac{\delta}{\P(\eventforc)}\right)+  9F\gamma_{j,i} + 8nF\gamma_{j,i} + 2FT\frac{\delta}{\P(\eventforc)} \\
        & \leq 8njF\gamma_{j,i} +9jF\gamma_{j,i} + 2jFT\frac{\delta}{\P(\eventforc)}\,,
      \end{align*}
      
 This completes our proof of the first inductive claim. 

\subsubsection*{Second Inductive Claim.} Here, we show  the second inductive claim, namely that conditioned on $\eventforc_{\tjmax}$, product $i$ has at least $j$ outgoing edges by round $\tjmax$ {in the product-ordering graph $G^{(\ell^{\star})}$}. 
By the inductive claim for $j-1$, we know that $i$ has at least $j-1$ outgoing edges by round $\tjmaxminus{} \leq \tjmax$ conditioned on $\eventforc_{\tjmaxminus{}}$. Since $\eventforc_{\tjmax} \subseteq \eventforc_{\tjmaxminus{}}$, 
it suffices to show that within the interval $(\tjmaxminus{},\tjmax]$, we add at least one more outgoing edge for product $i$ {in $G^{(\ell^{\star})}$}. 

Define $
\sigma \triangleq \arg\max_{k \in \G{\Delta_{j,i}} \setminus S_{j-1}}\left(\eta^{(\ell^{\star})}_{k,\tdelta}\right)$ as the product that (a) does not belong to set $S_{j-1}$, (b) has a reward gap of at least $\frac{\Delta_{j,i}}{2}$ to product $i$, and (c) by round $\tdelta$, receives the maximum amount of feedback under level $\ell^\star$.   We will show that $G^{(\ell^{\star})}$ contains a correct edge from $i$ to $\sigma$ at the end of round $\tjmax$. This follows from the following arguments:
\begin{enumerate}
    \item Since product $\sigma$ has the highest feedback count for level $\ell^{\star}$ among the products in $\G{\Delta_{j,i}} \setminus S_{j-1}$ in round $\tdelta \leq \tjmax{}$, we have that 
    $$\hat{\eta}^{(\ell^{\star})}_{\sigma,\tjmax} \geq \hat{\eta}^{(\ell^{\star})}_{\sigma,\tdelta} \geq \eta^{(\ell^{\star})}_{\sigma,\tdelta} \geq 4\gamma_{j,i}\,.$$
     The first inequality is due to $\tdelta \leq \tjmax{}$ and the monotonicity of the $\hat{\eta}_{i,t}$ values in $t$. The second inequality comes from the fact that for any product $k$, round $t$, and level $\ell$, we have that $\hat{\eta}^{(\ell)}_{k,t} \geq \eta^{(\ell)}_{k,t}$ as per {the definition of $\hat{\eta}^{(\ell)}_{k,t}$ given in  Equation \eqref{eq:eta_cr} in} Algorithm~\ref{alg_cascading_agnostic}. The final inequality is due to our definition of $\sigma$ and {the second milestone} $\tdelta$ from Definition~\ref{defn_secondmilestone}. In particular, $\sigma$ is defined to be the product with the largest value of $\eta^{(\ell^{\star})}_{k,\tdelta}$ in the set $\G{\Delta_{j,i}} \setminus S_{j-1}$. Of course from Definition~\ref{defn_secondmilestone}, we know that: 
\begin{align*}
    \max_{k \in \G{\Delta_{j,i}} \setminus S_{j-1}}\left\{\eta^{(\ell^{\star})}_{k,t}\right\} \geq 4\gamma_{\p{j},i}\,.
\end{align*}
    
        \item Since $\tjmax \geq \tj$, we also have that $\hat{\eta}^{(\ell^{\star})}_{i,\tjmax} \geq  \hat{\eta}^{(\ell^{\star})}_{i,\tj} \geq 4\gamma_{j,i}$. Here, the second inequality is due to our definition of the first milestone event $\tj$, i.e., Definition~\ref{defn_firstmilestone}.

    \item Since $\sigma \in  \G{\Delta_{j,i}}$, we that $\Delta_{\sigma,i} \geq \frac{\Delta_{j,i}}{2}.$
    
    \item This in turn implies that $\gamma_{\sigma,i} \leq 4\gamma_{j,i} \leq \hat{\eta}^{(\ell^{\star})}_{\sigma,\tjmax},\hat{\eta}^{(\ell^{\star})}_{i,\tjmax}$. Applying Lemma~\ref{lem_pairwise_edgeadd_agnostic}, we can infer the existence of an edge from $i$ to $\sigma$ at the end of round $\tjmax$.\footnote{We note that the result in 
    Lemma~\ref{lem_pairwise_edgeadd_agnostic} holds  under event $\eventforc$, not our event of interest  $\eventforc_{\tjmax}$. Nevertheless, we can obtain the same result under event $\eventforc_{\tjmax}$ with a few minor changes. } 
  
\end{enumerate}
This concludes the proof of the second inductive claim that $i$ has at least $j$ outgoing edges by round $\tjmax$. 
Noting that $\eventforc \subseteq \eventforc_{\tjmax}$ completes the proof of Lemma~\ref{lem:tjmax}. \hfill \Halmos

\subsection*{Proof of Lemma~\ref{lem_agnostic_conditional}}
Our goal here is to show that  \begin{align}\E\left[\ind(\eventz_{i,t} ~\cap~ (\ell_t < \ell^{\star}) ~\cap~ \zeta_t) ~|~\eventforc_{t-1}\right] \leq  2F \E\left[\ind(\eventz_{\sigma_t,t} ~\cap~ (\ell_t = \ell^{\star}) ~ \cap~ \zeta_t) ~|~\eventforc_{t-1}\right]\,,\label{eq:l_less_l_star}\end{align}  where $\zeta_t$  denotes the event that $\tjmaxminus{} < t \leq \tdelta$. For any $t\notin (\tjmaxminus{},  \tdelta]$, the bound holds trivially. Thus, we focus on some $t\in (\tjmaxminus{},  \tdelta]$ under which $\ind(\zeta_t)=1$, and remove $\zeta_t$ from all the expressions. In the following, we first derive an upper bound on the left hand side (l.h.s.) and a lower bound on the right hand side (r.h.s.) and show that the lower bound is greater than the upper bound. This will complete the proof. 

\textbf{Upper Bound on the l.h.s. of \eqref{eq:l_less_l_star}.} Fix a round $t\in (\tjmaxminus{}, \tdelta]$ and consider any arbitrary history $\mathcal{H}_{t-1}$ that satisfies the conditions characterized in event $\eventforc_{t-1}$.
 By the law of iterated expectations, we have
\begin{align*}
\E\left[\ind(\eventz_{i,t} ~\cap~ (\ell_t < \ell^{\star})) ~|~\eventforc_{t-1}\right] & = \E\left[\E[\ind(\eventz_{i,t} ~\cap~ (\ell_t < \ell^{\star})) ~|~\mathcal{H}_{t-1} \cap \eventforc_{t-1}]~|~\eventforc_{t-1}\right] \\
&=\E\left[\E[\ind(\eventz_{i,t} ~\cap~ (\ell_t < \ell^{\star})) ~|~\mathcal{H}_{t-1}]~|~\eventforc_{t-1}\right]\,,
\end{align*}
where the last equation holds because we consider a history $\mathcal{H}_{t-1}$ that satisfies the conditions characterized in event $\eventforc_{t-1}$. 
Next, we bound the inner expectation, i.e., $\E[\ind(\eventz_{i,t} \cap~ (\ell_t < \ell^{\star})~ \cap \zeta_t) ~|~\mathcal{H}_{t-1}]$. Recall that $\mathcal{H}_{t-1} = \{(\pi_1, f_1, c_{1}), (\pi_2, f_2, c_2), \ldots, (\pi_{t-1}, f_{t-1}, c_{t-1})\}$.
\begin{align}
    \E\left[\ind(\eventz_{i,t} ~\cap~ (\ell_t < \ell^{\star})~ \cap~ \zeta_t) ~|~\mathcal{H}_{t-1}\right] & = \E\left[\ind(\eventz_{i,t}) ~|~\mathcal{H}_{t-1} \cap (\ell_t < \ell^{\star}) \right]\P(\ell_t < \ell^{\star}) \notag \\ & = \E\left[\ind(\eventz_{i,t}) ~|~\mathcal{H}_{t-1} \cap (\ell_t < \ell^{\star}) \cap \bar{f}_t(\pi_t) \right]\P{(\bar{f}_t(\pi_t)|\mathcal{H}_{t-1})}\P(\ell_t < \ell^{\star}), \notag 
\end{align} 

where  $\bar{f}_t(\pi_t) = 1 - f_t(\pi_t) = 1$ if the user in round $t$ is real, and the first equality  holds because the event that $\ell_t < \ell^{\star}$
is independent of history set $\mathcal{H}_{t-1}$. The second equality holds because (a) event $f_t(\pi_t)=0$ is independent of the realization of $\ell_t$, and (b) when  we have a fake user in round $t$, i.e., $f_t(\pi_t)=1$, 
$\ind(\eventz_{i,t})$ is zero. Recall that  $\eventz_{i,t}$ is the event that (a) the customer in round $t$ is real, and (b) we receive feedback on product $i$ in round $t$. Next, we characterize  $\E\left[\ind(\eventz_{i,t}) ~|~\mathcal{H}_{t-1} \cap (\ell_t < \ell^{\star}) \cap \bar{f}_t(\pi_t) \right]$, which is the  probability that we receive feedback on product $i$ in round $t$ from a real customer when $\ell_t < \ell^{\star}$:
\[\E\left[\ind(\eventz_{i,t}) ~|~\mathcal{H}_{t-1} \cap (\ell_t < \ell^{\star}) \cap \bar{f}_t(\pi_t) \right]=\E\Big[Q(\pi^{-1}_t(i))\prod_{k=1}^{\pi^{-1}_t(i)-1}(1-\mu_{\pi(k)})\Big]\,.\]
Here, the expectation is due to the randomness in the realization of the level $\ell_t$, which in turn affects the realization of $\pi^{-1}_t(i)$, and  $Q(k) = \prod_{\p{r} = 1}^{k-1}(1-q_{\p{r}})$ is the probability that a real customer does not exit before viewing the products in the first $k$ positions. This leads to \def\s{s}
\begin{align}
    \E\left[\ind(\eventz_{i,t} \cap \ell_t < \ell^{\star} \cap \zeta_t) ~|~\mathcal{H}_{t-1}\right] & = \E\Big[Q(\pi^{-1}_t(i))\prod_{k=1}^{\pi^{-1}_t(i)-1}(1-\mu_{\pi(k)})\Big] \P{(\bar{f}_t(\pi_t)|\mathcal{H}_{t-1})}\P(\ell_t < \ell^{\star})   \notag\\
     & \leq \Big(Q(j)\prod_{\s \in S_{j-1}}(1-\mu_{\s})\Big)  \P{(\bar{f}_t(\pi_t)|\mathcal{H}_{t-1})}\P(\ell_t < \ell^{\star}) \notag \\ 
     & \leq 2F \Big(Q(j)\prod_{\s \in S_{j-1}}(1-\mu_{\s})\Big)  \P{(\bar{f}_t(\pi_t)|\mathcal{H}_{t-1})}\P(\ell_t = \ell^{\star}) \,. \label{eqn_s_k}
\end{align}
  The second inequality holds because 
 for all $\s  \in S_{j-1}$, we have $\pi_t^{-1}(\s ) < \pi_t^{-1}(i)$ since these products are ranked above product $i$.  Note that 
 this inequality  is valid even when graph $G^{(\ell_t)}$ is eliminated before round $t$. This is because the ranking $\pi_t$ in such a round would still depend on a graph $G^{(\ell)}$ for $\ell_t < \ell \leq \ell^{\star}$, as a result of which, product $i$ is ranked below all of the products in $S_{j-1}$. Further, we know that graph $G^{(\ell^{\star})}$ is never eliminated before round $t$, conditional on $\eventforc_{t-1}$. The final inequality comes from the observation that $\P(\ell_t < \ell^{\star}) \leq 2F \P(\ell_t = \ell^{\star})$.

\textbf{Lower Bound on the r.h.s. of \eqref{eq:l_less_l_star}.} Here, we present a lower bound on the r.h.s. of \eqref{eq:l_less_l_star}. 
Proceeding similarly as in the previous case, we get
\begin{align}
    \E\left[\ind(\eventz_{\sigma_t,t} ~\cap~ (\ell_t = \ell^{\star}) ~\cap~ \zeta_t) ~|~\mathcal{H}_{t-1}\right] & = \E\left[\ind(\eventz_{\sigma_t,t}) ~|~\mathcal{H}_{t-1} \cap (\ell_t = \ell^{\star}) \cap \bar{f}_t(\pi_t) \right]\P{(\bar{f}_t(\pi_t)|\mathcal{H}_{t-1} )}\P(\ell_t = \ell^{\star}) \notag \\ & = \E\Big[Q(\pi^{-1}_t(\sigma_t))\prod_{k=1}^{\pi^{-1}_t(\sigma_t)-1}(1-\mu_{\pi(k)})\Big] \P{(\bar{f}_t(\pi_t)|\mathcal{H}_{t-1} )}\P(\ell_t = \ell^{\star})\notag\\
     & \geq  \Big(Q(j)\prod_{\s \in S_{j-1}}(1-\mu_{\s})\Big) \P{(\bar{f}_t(\pi_t)|\mathcal{H}_{t-1} )}\P(\ell_t = \ell^{\star})\,,  \label{eqn_h}
\end{align}
 where the last inequality holds because $\pi^{-1}_t(\sigma_t) \leq j$. This is so because  there are at most $j-1$ products ranked above $\sigma_t$ by definition. Recall that $\sigma_t$ is the highest ranked product outside of $S_{j-1}$ in $\pi_t$ and $Q(j)$ is the probability that the user does not exit before position $j$, which is smaller than $Q(\pi^{-1}_t(\sigma_t))$.

 Comparing \eqref{eqn_s_k} and \eqref{eqn_h}, we conclude that $ \E\left[\ind(\eventz_{\sigma_t,t} \cap \ell_t = \ell^{\star} \cap \zeta_t) ~|~\mathcal{H}_{t-1}\right] \geq  \frac{1}{2F} \E\left[\ind(\eventz_{i,t} \cap \ell_t < \ell^{\star} \cap \zeta_t) ~|~\mathcal{H}_{t-1}\right] $, and this completes the proof. \hfill \Halmos

\subsection{Proof of Lemma \ref{lem:sigma_t}}Assume by contradiction that this is not the case and that $\eta^{(\ell^{\star})}_{\sigma_t,t-1} \geq 4\gamma_{j,i}$.  We first argue that $\sigma_t \notin \G{\Delta_{j,i}} \setminus S_{j-1}$. To see why this is the case, note that as per our definition of $\tdelta$, for all $t \leq \tdelta$, it must be the case that for any $k \in \G{\Delta_{j,i}} \setminus S_{j-1}$, the inequality $\eta^{(\ell^{\star})}_{k,t-1} < 4\gamma_{j,i}$ must hold. Recall that $\G{\Delta_{j,i}}$ is the set of products whose rewards are better than $\mu_{i}$ by an additive factor of $\frac{\Delta_{j,i}}{2}$ or more. Since  this set  $S_{j-1}$  cannot include product $\sigma_t$, our only possibility is that $\sigma_t \notin \G{\Delta_{j,i}}$. In simple terms, this implies that 
$$\Delta_{\sigma_t,i} < \frac{\Delta_{j,i}}{2}.$$

Therefore, $\sigma_t \notin [j]$. Next, in round $t$, we know that product $\sigma_t$ is ranked above the better products in $[j] \setminus S_{j-1}$. Given this inversion in ranking, we can apply Lemma \ref{lem_dagprop_agnostic} with product $\sigma_t$ and some arbitrary product from $[j] \setminus S_{j-1}$ to infer the existence of a product $u$ such that there is no edge between $u$ and $\sigma_t$ in graph $G^{(\ell^{\star})}$ at the beginning of round $t$, and
\begin{align} \label{eq:middle_2}\eta^{(\ell^{\star})}_{\sigma_t,t-1} \leq \eta^{(\ell^{\star})}_{u,t-1}.\end{align}

Moreover, since $4\gamma_{j,i} \leq \eta^{(\ell^{\star})}_{\sigma_t,t-1} \leq \hat{\eta}^{(\ell^{\star})}_{\sigma_t,t-1}$ by our earlier assumption, we also have that: 
\begin{align} \label{eq:middle}4\gamma_{j,i} \leq \eta^{(\ell^{\star})}_{\sigma_t,t-1} \leq \eta^{(\ell^{\star})}_{u,t-1} \leq  \hat{\eta}^{(\ell^{\star})}_{u,t-1}.\end{align}
Since $u$ is at least as good as one of the products in $[j] \setminus S_{j-1}$, we can also glean that $\mu_{u} \geq \mu_{j}$. Leveraging this, we can now quantify the gap between products $u$ and $\sigma_t$ as follows:
$$\Delta_{u,\sigma_t,} \geq \Delta_{j,\sigma_t} = \Delta_{j,i} - \Delta_{\sigma_t,i} \geq  \Delta_{j,i} - \frac{\Delta_{j,i}}{2} =  \frac{\Delta_{j,i}}{2}\,,$$
where the second inequality follows from our earlier argument that $\Delta_{\sigma_t,i} < \frac{\Delta_{j,i}}{2}$. 
The above expression, in turn, implies that $\gamma_{u,\sigma_t} \leq 4\gamma_{j,i} \leq  \hat{\eta}^{(\ell^{\star})}_{\sigma_t,t-1},  \hat{\eta}^{(\ell^{\star})}_{u,t-1}$, where
the last inequality follows from Equation \eqref{eq:middle}. 
Applying Lemma~\ref{lem_pairwise_edgeadd_agnostic}, this would indicate the existence of an edge from $\sigma_t$ to $u$ in graph $G^{(\ell^{\star})}$, which is a contradiction. Therefore, tying this back to our original proposition, we conclude that $\eta^{(\ell^{\star})}_{\sigma_t,t-1} < 4\gamma_{j,i}$. \hfill \Halmos

\end{APPENDICES}
\end{document}